\newtheorem{definition}{Definition}
\newtheorem{theorem}{Theorem}
\newtheorem{corollary}{Corollary}
\newtheorem{lemma}{Lemma}
\newcommand{\norm}[1]{\left\| #1 \right\|}
\def\prob{\mathbb{P}}
\def\expe{\mathbb{E}}
\def\Vol{\mathbf{Vol}}
\def\mbf{\mathbf}
\def\mc{\mathcal}
\def\SS{\mathcal{S}}
\def\M{\mathcal{M}}
\def\argmin{\mathop{\rm argmin}}
\def\Vol{\mathop{\rm Vol}}
\def\bestloss{L^{\ast}}
\def\loss{\ell}
\def\exploss{L}
\def\obj{J}
\def\expobj{\bar{J}}
\def\reg{\Lambda}
\def\priveps{\epsilon_p}
\def\geneps{\epsilon_g}
\def\adult{\texttt{Adult}}
\def\kddcup{\texttt{KDDCup99}}
\newcommand{\card}[1]{| #1 |}
\def\bbR{\mathbb{R}}
\def\loss{\ell}
\def\lhuber{\loss_{\mathrm{Huber}}}
\def\lsvm{\loss_{\mathrm{SVM}}}
\def\llr{\loss_{\mathrm{LR}}}
\def\exploss{L}
\def\obj{J}
\def\expobj{\bar{J}}
\def\reg{\Lambda}
\def\surf{\textrm{surf}}
\def\priveps{\epsilon_p}
\def\geneps{\epsilon_g}
\def\extra{\Delta}
\def\c{c}
\def\D{\mathcal{D}}
\def\x{\mathbf{x}}
\def\f{\mathbf{f}}
\def\b{\mathbf{b}}
\def\fpriv{\mathbf{f}_{\mathrm{priv}}}
\def\fopt{\mathbf{f}_0}
\def\aopt{a_0}
\def\frtr{\mbf{f}_{\mathrm{rtr}}}
\def\femp{\mbf{f}_{\mathrm{rtr}}}
\newcommand{\Hnorm}[1]{ \norm{ #1 }_{\mc{H}} }
\def\regrtr{\Lambda_{\mathrm{rtr}}}
\def\kft{\bar{K}}
\def\f{\mathbf{f}}
\def\b{\mathbf{b}}
\def\D{\mathcal{D}}
\newcommand{\ignore}[1]{}
\newcommand{\kc}[1]{}
\newcommand{\ads}[1]{}
\newcommand{\cm}[1]{}
\title{Differentially Private Empirical Risk Minimization\thanks{This work will appear in the Journal of Machine Learning Research.}}
\author{Kamalika Chaudhuri\thanks{ K. Chaudhuri is with the Department of Computer Science and Engineering, University of California, San Diego, La Jolla, CA 92093, USA, \texttt{kchaudhuri@ucsd.edu}}, 
Claire Monteleoni\thanks{C. Monteleoni is with the Center for Computational Learning Systems, Columbia University, New York, NY 10115, USA, \texttt{cmontel@ccls.columbia.edu}},
Anand D. Sarwate\thanks{A.D. Sarwate is with the Information Theory and Applications Center, University of California, San Diego, La Jolla, CA 92093-0447, USA, \texttt{asarwate@ucsd.edu}}}
\date{\today}
\begin{document}

\maketitle

\begin{abstract}%
Privacy-preserving machine learning algorithms are crucial for the
increasingly common setting in which personal data, such as medical or
financial records, are analyzed.  We provide general techniques to
produce privacy-preserving approximations of classifiers learned via
(regularized) empirical risk minimization (ERM).  These algorithms are
private under the {\it $\epsilon$-differential privacy} definition due
to Dwork et al.~(2006).  First we apply the output perturbation ideas
of Dwork et al.~(2006), to ERM classification.  Then we propose a new
method, {\it objective perturbation}, for privacy-preserving machine
learning algorithm design.  This method entails perturbing the
objective function before optimizing over classifiers.  If the loss and
regularizer satisfy certain convexity and differentiability criteria, 
we prove theoretical results showing that our algorithms preserve privacy, and
provide generalization bounds for linear and nonlinear kernels.  We
further present a privacy-preserving technique for tuning the
parameters in general machine learning algorithms, thereby providing
end-to-end privacy guarantees for the training process. We apply these
results to produce privacy-preserving analogues of regularized
logistic regression and support vector machines.  We obtain
encouraging results from evaluating their performance on real
demographic and benchmark data sets.  Our results show that both
theoretically and empirically, objective perturbation is superior to
the previous state-of-the-art, output perturbation, in managing the
inherent tradeoff between privacy and learning performance.\end{abstract}

\section{Introduction \label{sec:relwork}}

Privacy has become a growing concern, due to the massive increase in personal information stored in electronic databases, such as medical records, financial records, web search histories, and social network data.  Machine learning can be employed to discover novel population-wide patterns, however the results of such algorithms may reveal certain individuals' sensitive information, thereby violating their privacy. Thus, an emerging challenge for machine learning is how to learn from datasets that contain sensitive personal information. 

At the first glance, it may appear that simple anonymization of private information is enough to preserve privacy. However, this is often not the case; even if obvious identifiers, such as names and addresses, are removed from the data, the remaining fields can still form unique ``signatures'' that can help re-identify individuals. Such attacks have been demonstrated by  various works, and are possible in many realistic settings, such as when an adversary has side information \citep{Sweeney:97weaving,netflix,GKS08}, and when the data has structural properties \citep{livejournal}, among others.  Moreover, even releasing statistics on a sensitive dataset may not be sufficient to preserve privacy, as illustrated on genetic data \citep{HomerEtAl:08plos,Wang10}. Thus, there is a great need for designing machine learning algorithms that also preserve the privacy of individuals in the datasets on which they train and operate.
 
In this paper we focus on the problem of classification, one of the fundamental problems of machine learning, when the training data consists of sensitive information of individuals. Our work addresses the Empirical risk minimization (ERM) framework for classification, in which a classifier is chosen by minimizing the average over the training data of the prediction loss (with respect to the label) of the classifier in predicting each training data point.  In this work, we focus on regularized ERM in which there is an additional term in the optimization, called the regularizer, which penalizes the complexity of the classifier with respect to some metric.  Regularized ERM methods are widely used in practice, for example in logistic regression and support vector machines (SVMs), and many also have theoretical justification in the form of generalization error bounds  with respect to independently, identically distributed (i.i.d.)~data (see \cite{vapnik} for further details).

For our privacy measure, we use a definition due to \cite{DworkMNS:06sensitivity}, who have proposed a measure of quantifying the privacy-risk associated with computing functions of sensitive data.  Their \textit{$\epsilon$-differential privacy} model is a strong, cryptographically-motivated definition of privacy that has recently received a significant amount of research attention for its robustness to known attacks, such as those involving side information \citep{GKS08}.  Algorithms satisfying $\epsilon$-differential privacy are randomized; the output is a random variable whose distribution is conditioned on the data set.  A statistical procedure satisfies $\epsilon$-differential privacy if changing a single data point does not shift the output distribution by too much.  Therefore, from looking at the output of the algorithm, it is difficult to infer the value of any particular data point.

In this paper, we develop methods for approximating ERM while guaranteeing $\epsilon$-differential privacy.   Our results hold for loss functions and regularizers satisfying certain differentiability and convexity conditions.  An important aspect of our work is that we develop methods for \textit{end-to-end privacy}; each step in the learning process can cause additional risk of privacy violation, and we provide algorithms with quantifiable privacy guarantees for training as well as parameter tuning.  For training, we provide two privacy-preserving approximations to ERM.  The first is \textit{output perturbation}, based on the \textit{sensitivity method} proposed by \cite{DworkMNS:06sensitivity}.  In this method noise is added to the output of the standard ERM algorithm.  The second method is novel, and involves adding noise to the regularized ERM objective function prior to minimizing.  We call this second method \textit{objective perturbation}.  We show theoretical bounds for both procedures; the theoretical performance of objective perturbation is superior to that of output perturbation for most problems.   However, for our results to hold we require that the regularizer be strongly convex (ruling $L_1$ regularizers) and additional constraints on the loss function and its derivatives.  In practice, these additional constraints do not affect the performance of the resulting classifier; we validate our theoretical results on data sets from the UCI repository.

In practice, parameters in learning algorithms are chosen via a holdout data set.  In the context of privacy, we must guarantee the privacy of the holdout data as well.   We exploit results from the theory of differential privacy to develop a privacy-preserving parameter tuning algorithm, and demonstrate its use in practice.  Together with our training algorithms, this parameter tuning algorithm guarantees privacy to all data used in the learning process.

Guaranteeing privacy incurs a cost in performance; because the algorithms
must cause some uncertainty in the output, they increase the loss of the
output predictor.  Because the $\epsilon$-differential privacy model
requires robustness against all data sets, we make no assumptions on the
underlying data for the purposes of making privacy guarantees.  However, to
prove the impact of privacy constraints on the generalization error, we
assume the data is i.i.d.~according to a fixed but unknown distribution, as is standard in the machine learning literature.
Although many of our results hold for ERM in general, we provide specific
results for classification using logistic regression and support vector
machines.  Some of the former results were reported in
\cite{ChaudhuriM:08nips}; here we generalize them to ERM and extend the
results to kernel methods, and provide experiments on real datasets.  

More specifically, the contributions of this paper are as follows:
	\begin{itemize}
	\item We derive a computationally efficient algorithm for ERM classification, based on the
	sensitivity method due to \cite{DworkMNS:06sensitivity}. We analyze
	the accuracy of this algorithm, and provide an upper bound on the
	number of training samples required by this algorithm to achieve
	a fixed generalization error.
	\item We provide a general technique, {\it objective perturbation},
	for providing computationally efficient, differentially private approximations to
	regularized ERM algorithms.  This extends the work of
	\cite{ChaudhuriM:08nips}, which follows as a special case, and corrects an error in the arguments made there.  We
	apply the general results on the sensitivity method and objective
	perturbation to logistic regression and support vector machine
	classifiers. In addition to privacy guarantees, we also provide
	generalization bounds for this algorithm.
    \item For kernel methods with nonlinear kernel functions, the optimal classifier is a linear combination of kernel functions centered at the training points.  This form is inherently non-private because it reveals the training data.  We adapt a random projection method due to Rahimi and Recht \citep{RahimiR:07features,RahimiR:08kitchen}, to develop privacy-preserving kernel-ERM algorithms.  We provide theoretical results on generalization performance.
	\item Because the holdout data is used in the process of training and releasing a classifier, we provide a privacy-preserving parameter tuning algorithm based on a randomized selection procedure \citep{MT07} applicable to general machine learning algorithms.  This guarantees end-to-end privacy during the learning procedure.
	\item We validate our results using experiments on two datasets
	from the UCI %
Machine
	Learning repositories \citep{uciadult}) and KDDCup \citep{kddcup99}.  Our results show that objective perturbation is generally superior to output perturbation.  We also demonstrate the impact of end-to-end privacy on generalization error.
	\end{itemize}

\subsection{Related Work}  

There has been a significant amount of literature on the ineffectiveness of
simple anonymization procedures. For example, \cite{netflix} show that a
small amount of auxiliary information (knowledge of a few movie-ratings,
and approximate dates) is sufficient for an adversary to re-identify an
individual in the Netflix dataset, which consists of anonymized data about
Netflix users and their movie ratings. The same phenomenon has been
observed in other kinds of data, such as social network graphs
\citep{livejournal}, search query logs \citep{JKPT07} and others. 
Releasing statistics computed on sensitive data can also be problematic;
for example, \cite{Wang10} show that releasing $R^2$-values computed on high-dimensional genetic data can lead to privacy breaches by an adversary who is armed with a small amount of auxiliary information.

There has also been a significant amount of work on privacy-preserving data mining~\citep{AS00,EGS03,S02,kifer-diversity}, spanning several communities, that uses privacy models other than differential privacy.  Many of the models used have been shown to be susceptible to {\it composition attacks}, attacks in which the adversary has some reasonable amount of prior knowledge~\citep{GKS08}.  Other work \citep{MWF08} considers the problem of privacy-preserving SVM classification when separate agents have to share private data, and provides a solution that uses random kernels, but does provide any formal privacy guarantee. 

An alternative line of privacy work is in the Secure Multiparty Computation
setting due to \cite{Yao82}, where the sensitive data is split across
multiple hostile databases, and the goal is to compute a function on the
union of these databases.  \cite{ZM07} and \cite{LLM06} consider computing
privacy-preserving SVMs in this setting, and their goal is to design a
distributed protocol to learn a classifier.  This is in contrast with our
work, which deals with a setting where the algorithm has access to the entire dataset.

Differential privacy, the formal privacy definition used in our paper, was
proposed by the seminal work of \cite{DworkMNS:06sensitivity},
and has been used since in numerous works on privacy \citep{CM06, MT07,NRS07,BCDKMT07,ChaudhuriM:08nips,MKAGV08}. Unlike many
other privacy definitions, such as those mentioned above, differential
privacy has been shown to be resistant to composition attacks (attacks
involving side-information) \citep{GKS08}. Some follow-up work on
differential privacy includes work on differentially-private combinatorial
optimization, due to~\cite{GLMRT10}, and differentially-private contingency
tables, due to ~\cite{BCDKMT07} and~\cite{KRS10}.
\cite{WassermanZ:10framework} provide a more statistical view of
differential privacy, and \cite{ZhouLW:09compression} provide a technique of
generating synthetic data using compression via random linear or affine
transformations.

Previous literature has also considered learning with differential privacy.
One of the first such works is \cite{KLNRS08}, which presents a
general, although computationally inefficient, method for PAC-learning
finite concept classes.  \cite{BLR08} presents a method for releasing a
database in a differentially-private manner, so that certain fixed classes
of queries can be answered accurately, provided the class of queries has a
bounded VC-dimension. Their methods can also be used to learn
classifiers with a fixed VC-dimension -- see \cite{KLNRS08}; however the resulting algorithm is also computationally inefficient. Some sample complexity lower bounds in
this setting have been provided by~\cite{BKN10}. In addition, \cite{DL09}
explore a connection between differential privacy and robust statistics,
and provide an algorithm for privacy-preserving regression using ideas from
robust statistics.  However, their algorithm also requires a running time
which is exponential in the data dimension, and is hence computationally
inefficient.

This work builds on our preliminary work in \cite{ChaudhuriM:08nips}.  We first show how to extend the sensitivity method, a form of {\it output perturbation}, due to \cite{DworkMNS:06sensitivity}, to classification algorithms.   In general, output perturbation methods alter the output of the function computed on the database, before releasing it; in particular the sensitivity method makes an algorithm differentially private by adding noise to its output.  In the classification setting, the noise protects the privacy of the training data, but increases the prediction error of the classifier.  Recently, independent work by \cite{rub} has reported an extension of the sensitivity method to linear and kernel SVMs. Their utility analysis differs from ours, and thus the analogous generalization bounds are not comparable.  Because Rubinstein et al. use techniques from algorithmic stability, their utility bounds compare the private and non-private classifiers using the same value for the regularization parameter.  In contrast, our approach takes into account how the value of the regularization parameter might change due to privacy constraints.  In contrast, we propose the {\it objective perturbation} method, in which noise is added to the {\it objective function} before optimizing over the space classifiers. Both the sensitivity method and objective perturbation result in computationally efficient algorithms for our specific case.  In general, our theoretical bounds on sample requirement are incomparable with the bounds of~\cite{KLNRS08} because of the difference between their setting and ours.

Our approach to privacy-preserving tuning uses the exponential mechanism of
\cite{MT07} by training classifiers with different parameters on disjoint subsets of the data and then randomizing the selection of which classifier to release.  This bears a superficial resemblance to the sample-and-aggregate \citep{NRS07} and V-fold cross-validation, but only in the sense that only a part of the data is used to train the classifier.   One drawback is that our approach requires significantly more data in practice.  Other approaches to selecting the regularization parameter could benefit from a more careful analysis of the regularization parameter, as in \cite{HastieRTZ:04reg}.

\section{Model}

We will use $\norm{\mbf{x}}$, $\norm{\mbf{x}}_{\infty}$, and $\norm{\mbf{x}}_{\mc{H}}$ to denote the $\ell_2$-norm, $\ell_{\infty}$-norm, and norm in a Hilbert space $\mc{H}$, respectively.  For an integer $n$ we will use $[n]$ to denote the set $\{1,2, \ldots, n\}$.  Vectors will typically be written in boldface and sets in calligraphic type. For a matrix $A$, we will use the notation $\norm{A}_2$ to denote the $L_2$ norm
of $A$.

\subsection{Empirical Risk Minimization}

In this paper we develop privacy-preserving algorithms for
\textit{regularized empirical risk minimization}, a special case of which
is learning a classifier from labeled examples.  We will phrase our problem
in terms of classification and indicate when more general results hold.
Our algorithms take as input \textit{training data} $\mc{D} = \{
(\mbf{x}_i, \mbf{y}_i) \in \mc{X} \times \mc{Y} : i = 1, 2, \ldots, n\}$
of $n$ data-label pairs.  In the case of binary classification the data
space $\mc{X} = \mathbb{R}^d$ and the label set $\mc{Y} = \{-1,+1\}$.  We
will assume throughout that $\mc{X}$ is the unit ball so that
$\norm{\mbf{x}_i}_2 \le 1$.

We would like to produce a \textit{predictor} $\mbf{f} : \mc{X} \to \mc{Y}$.  We measure the quality of our predictor on the training data via a nonnegative \textit{loss function} $\loss : \mc{Y} \times \mc{Y} \to \mathbb{R}$. %

In regularized empirical risk minimization (ERM), we choose a predictor $\mbf{f}$ that minimizes the regularized empirical loss:
	\begin{align}
	\obj(\mbf{f},\mc{D}) = \frac{1}{n} \sum_{i=1}^{n} \loss(
	\mbf{f}(\mbf{x}_i),\mbf{y}_i) + \reg N(\mbf{f}).
	\label{eq:StdRegERM}
	\end{align}
This minimization is performed over $\mbf{f}$ in an hypothesis class $\mc{H}$.  The regularizer $N(\cdot)$ prevents over-fitting.  For the first part of this paper we will restrict our attention to linear predictors and with some abuse of notation we will write $\mbf{f}(\mbf{x}) = \mbf{f}^T \mbf{x}$.

\subsection{Assumptions on loss and regularizer}

The conditions under which we can prove results on privacy and generalization error depend on analytic properties of the loss and regularizer.  In particular, we will require certain forms of convexity (see \cite{RockafellarW:98variational}).

\begin{definition}
A function $H(\mbf{f})$ over $\mbf{f} \in \bbR^d$ is said to be {\em strictly convex} if for all $\alpha \in (0,1)$, $\mbf{f}$, and $\mbf{g}$,
	\begin{align} 
	H\left( \alpha \mbf{f} + (1 - \alpha) \mbf{g} \right) < \alpha H(\mbf{f}) + (1 - \alpha) H(\mbf{g}).
	\end{align}
It is said to be {\em $\lambda$-strongly convex} if for all $\alpha \in (0,1)$, $\mbf{f}$, and $\mbf{g}$, 
	\begin{align}
	H\left( \alpha \mbf{f} + (1 - \alpha) \mbf{g} \right) 
		\leq \alpha H(\mbf{f}) + (1 - \alpha) H(\mbf{g})
			- \frac{1}{2} \lambda \alpha (1 - \alpha) \norm{\mbf{f} - \mbf{g}}_2^2.
	\end{align}
\end{definition}

A strictly convex function has a unique minimum -- see~\cite{BoydV:04convex}.  Strong convexity plays a role in guaranteeing our privacy and generalization requirements.  For our privacy results to hold we will also require that the regularizer $N(\cdot)$ and loss function $\loss(\cdot,\cdot)$ be differentiable functions of $\mbf{f}$.  This excludes certain classes of regularizers, such as the $\ell_1$-norm regularizer $N(\mbf{f}) = \norm{\mbf{f}}_1$, and classes of loss functions such as the hinge loss $\lsvm(\mbf{f}^T \mbf{x}, y) = (1 - y \mbf{f}^T \mbf{x})^{+}$.  In some cases we can prove privacy guarantees for approximations to these non-differentiable functions.

\subsection{Privacy model}

We are interested in producing a classifier in a
manner that preserves the privacy of individual entries of the dataset
$\mc{D}$ that is used in training the classifier.  The notion of privacy we use is the
\textit{$\epsilon$-differential privacy model}, developed by \cite{DworkMNS:06sensitivity,D06}, which defines a notion of privacy 
for a randomized algorithm $\mc{A}(\mc{D})$.  Suppose $\mc{A}(\mc{D})$
produces a classifier, and let $\mc{D}'$ be another dataset
that differs from $\mc{D}$ in one entry (which we assume is the private value of one person).  That is, $\mc{D}'$ and $\mc{D}$ have
$n-1$ points $(\mbf{x}_i, y_i)$ in common.   The algorithm $\mc{A}$ provides
differential privacy if for any set $\SS$, the likelihood that
$\mc{A}(\mc{D}) \in \SS$ is 
close to the likelihood $\mc{A}(\mc{D}') \in \SS$, (where the likelihood is over the randomness in the algorithm).  That is, any single entry  of the dataset
does not affect the output distribution of the algorithm by much; dually, this means that an adversary, who knows all but one entry of the dataset, cannot gain much additional information about the last entry by observing the output of the algorithm.

The following definition of differential privacy is due
to~\cite{DworkMNS:06sensitivity}, paraphrased from~\cite{WassermanZ:10framework}.
\begin{definition}
An algorithm $\mc{A}(\mc{B})$ taking values in a set $\mc{T}$ provides $\priveps$-differential privacy if 
\begin{align}
\sup_{\mc{S}} \sup_{\mc{D}, \mc{D}'} \frac{ \mu\left( \mc{S} ~|~ \mc{B} = \mc{D} \right) 
	}{
\mu\left( \mc{S} ~|~ \mc{B} = \mc{D}' \right) 
	}	
	\le 
	e^{\priveps},
	\label{eq:privdefdens}
\end{align}
where the first supremum is over all measurable $\mc{S} \subseteq \mc{T}$, the second is
over all datasets $\mc{D}$ and $\mc{D}'$ differing in a single entry, and
$\mu(\cdot|\mc{B})$ is the conditional distribution (measure) on $\mc{T}$
induced by the output $\mc{A}(\mc{B})$ given a dataset $\mc{B}$.  The ratio is interpreted to be 1 whenever the numerator and denominator are both 0.
\label{def:densitypriv}
\end{definition}

\begin{figure}
\centering
\includegraphics{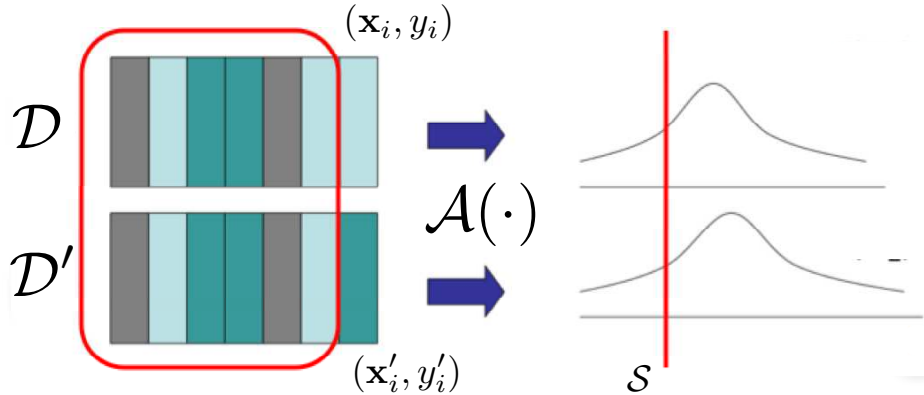}
\caption{An algorithm which is differentially private.  When datasets which are identical except for a single entry are input to the algorithm $\mc{A}$, the two distributions on the algorithm's output are close.  For a fixed measurable $\mc{S}$ the ratio of the measures (or densities) should be bounded.}
\label{fig:DPmodel}
\end{figure}

Note that if $\mc{S}$ is a set of measure 0 under the conditional measures induced by $\mc{D}$ and $\mc{D}'$, the ratio is automatically 1.  A more measure-theoretic definition is given in \cite{ZhouLW:09compression}.  An illustration of the definition is given in Figure \ref{fig:DPmodel}.

The following form of the definition is due to~\cite{DworkKMMN:06ourselves}.

\begin{definition}
An  algorithm $\mc{A}$ provides $\priveps$-differential privacy if for any two datasets $\mc{D}$ and $\mc{D}'$ that differ in a single entry and for any set $\SS$, 
	\begin{align}
	\exp(-\priveps) \prob(\mc{A}(\D') \in \SS) 
	\le \prob(\mc{A}(\D) \in \SS) \leq \exp(\priveps) \prob(\mc{A}(\D') \in \SS),
	\label{eq:privdef}
	\end{align}
where $\mc{A}(\D)$ (resp. $\mc{A}(\D')$) is the output of $\mc{A}$ on input $\D$ (resp. $\D'$).
\label{def:probpriv}
\end{definition}

We observe that an algorithm
$\mc{A}$ that satisfies Equation~\ref{eq:privdefdens} also satisfies
Equation~\ref{eq:privdef}, and as a result,
Definition~\ref{def:densitypriv} is stronger than
Definition~\ref{def:probpriv}.

From this definition, it is clear that the $\mc{A}(\mc{D})$ that outputs the
minimizer of the ERM objective (\ref{eq:StdRegERM}) does not provide $\priveps$-differential privacy for any $\priveps$.  This is because an ERM solution is a linear combination of some selected training samples ``near'' the decision boundary.  If
$\mc{D}$ and $\mc{D}'$ differ in one of these samples, then the classifier will change completely, making the likelihood ratio in (\ref{eq:privdef}) infinite.  Regularization helps by penalizing the $L_2$ norm of the change, but does not account how the direction of the minimizer is sensitive to changes in the data.

\cite{DworkMNS:06sensitivity} also provide a standard recipe for computing
privacy-preserving approximations to functions by adding noise with a
particular distribution to the output of the function. We call this recipe the
{\em sensitivity method.} Let $g: (\bbR^m)^n \rightarrow \bbR$ be a scalar 
function of $z_1,
\ldots, z_n$, where $z_i \in \bbR^m$ corresponds to the private value of
individual $i$; then the sensitivity of $g$ is defined as follows. 

\begin{definition}
The sensitivity of a function $g: (\bbR^m)^n \rightarrow \bbR$ is maximum
difference between the values of the function when one input changes. More
formally, the sensitivity $S(g)$ of $g$ is defined as:
	\begin{align}
	S(g) = \max_i \max_{z_1, \ldots, z_n, z'_i} 
		\left|
			g(z_1, \ldots, z_{i-1}, z_i, z_{i+1},\ldots, z_n) 
			- g(z_1, \ldots, z_{i-1}, z'_i, z_{i+1}, \ldots, z_n)
		\right|.
	\end{align}
\end{definition}

To compute a function $g$ on a dataset $\mc{D} = \{z_1, \ldots, z_n\}$, the sensitivity method
outputs $g(z_1, \ldots, z_n) + \eta$, where $\eta$ is a random variable
drawn according to the Laplace distribution, with mean $0$ and standard
deviation $\frac{S(g)}{\priveps}$. It is shown
in~\cite{DworkMNS:06sensitivity} that such a procedure is
$\priveps$-differentially private.

\section{Privacy-preserving ERM}
\label{sec:privacy}

Here we describe two approaches for creating privacy-preserving algorithms from (\ref{eq:StdRegERM}).

\subsection{Output perturbation : the sensitivity method}

Algorithm \ref{alg:output} is derived from the \textit{sensitivity method}
of \cite{DworkMNS:06sensitivity}, a general method for generating a
privacy-preserving approximation to any function $A(\cdot)$.  In this section the norm $\|\cdot\|$ is the $L_2$-norm unless otherwise specified.  For the
function $A(\mc{D}) = \argmin \obj(\mbf{f}, \mc{D})$, Algorithm 1 outputs a vector
$A(\mc{D}) + \b$, where $\b$ is random noise with density
	\begin{align}
	\nu(\b) = \frac{1}{\alpha} e^{- \beta\norm{\b}}~,
	\label{eq:density}
	\end{align}
where $\alpha$ is a normalizing constant.  The parameter $\beta$ is a function of $\priveps$, and the $L_2$-\textit{sensitivity} of $A(\cdot)$, which is defined as follows.

\begin{definition}
The $L_2$-sensitivity of a vector-valued function
is defined as the maximum change in the $L_2$ norm of the value of
the function when one input changes. More formally,
	\begin{align} 
	S(A) = \max_i \max_{z_1, \ldots, z_n, z'_i} \left\|
		A(z_1, \ldots, z_i,\ldots) - A(z_1, \ldots, z'_i, \ldots)
		\right\|.
	\end{align}
\end{definition}

The interested reader is referred to \cite{DworkMNS:06sensitivity} 
for further details.  Adding 
noise to the output of $A(\cdot)$ has the effect of masking the effect of
any particular data point.  However, in some applications the sensitivity
of the minimizer $\argmin \obj(\mbf{f}, \mc{D})$  may be quite high, which
would require the sensitivity method to add noise with high variance.

\begin{algorithm}
\caption{ERM with output perturbation (sensitivity)}
\label{alg:output}
\begin{algorithmic}
\STATE \textbf{Inputs:} Data $\mc{D} = \{z_i\}$, parameters $\priveps$, $\reg$.
\STATE \textbf{Output:} Approximate minimizer $\fpriv$.
\STATE Draw a vector $\b$ according to (\ref{eq:density}) with $\beta
= \frac{n \reg \priveps}{2}$.
\STATE Compute $\fpriv = \argmin \obj(\mbf{f},\mc{D}) + \b$.
\end{algorithmic}
\end{algorithm}

\subsection{Objective perturbation}

A different approach, first proposed by \cite{ChaudhuriM:08nips}, is to add noise to the objective function itself and then produce the minimizer of the perturbed objective.  That is, we can minimize 
	\begin{align}
	\obj_{\mathrm{priv}}(\mbf{f},\mc{D}) 
		&= \obj(\mbf{f}, \mc{D}) + \frac{1}{n} \b^T \mbf{f},
	\label{eq:privobj}
	\end{align}
where $\b$ has density given by (\ref{eq:density}), with $\beta = \priveps$.  Note that the privacy parameter here does not depend on the sensitivity of the of the classification algorithm.

\begin{algorithm}
\caption{ERM with objective perturbation}
\label{alg:objective}
\begin{algorithmic}
\STATE \textbf{Inputs:} Data $\mc{D} = \{z_i\}$, parameters $\priveps$, $\reg$, $\c$.
\STATE \textbf{Output:} Approximate minimizer $\fpriv$.
\STATE Let $\priveps' = \priveps - \log(1 + \frac{2\c}{n \reg} +
\frac{\c^2}{n^2 \reg^2})$. 
\STATE If $\priveps' > 0$, then $\extra = 0$, else
$\extra = \frac{c}{n(e^{\priveps/4} - 1)} - \reg$, and $\priveps' =
\priveps/2$. 
\STATE Draw a vector $\b$ according to (\ref{eq:density}) with $\beta
= \priveps'/2$.
\STATE Compute $\fpriv = \argmin \obj_{\mathrm{priv}}(\mbf{f},\mc{D}) +
\frac{1}{2} \extra ||\f||^2$.
\end{algorithmic}
\end{algorithm}

The algorithm requires a certain slack, $\log(1 + \frac{2\c}{n \reg} +
\frac{\c^2}{n^2 \reg^2})$, in the privacy parameter.  This is due to additional factors in bounding the ratio of the densities.  The ``If'' statement in the algorithm is from having to consider two cases in the proof of Theorem \ref{thm:objectivepriv}, which shows that the algorithm is differentially private.

\subsection{Privacy guarantees}

In this section, we establish the conditions under which Algorithms 1 and 2 provide
$\priveps$-differential privacy. First, we establish guarantees for
Algorithm 1.

\subsubsection{Privacy Guarantees for Output Perturbation}

\begin{theorem}
If $N(\cdot)$ is differentiable, and $1$-strongly convex, and
$\loss$ is convex and differentiable, with $|\loss'(z)| \leq 1$ for all $z$, then,
Algorithm 1 provides $\priveps$-differential privacy.
\label{thm:sensitivitypriv}
\end{theorem}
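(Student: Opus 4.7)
The plan is to reduce the privacy claim to a sensitivity bound on the non-private minimizer $A(\mc{D}) = \argmin_{\mbf{f}} \obj(\mbf{f},\mc{D})$, and then invoke the standard Laplace-style sensitivity mechanism. Since $\loss$ is convex and $N$ is $1$-strongly convex, the full objective $\obj(\mbf{f},\mc{D})$ is $\reg$-strongly convex, so $A(\mc{D})$ is unique and characterized by $\nabla \obj(A(\mc{D}),\mc{D}) = 0$. The reason the noise parameter $\beta = n\reg\priveps/2$ is the right choice will turn out to be that the $L_2$-sensitivity of $A$ is at most $2/(n\reg)$.

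First, I would fix two neighboring datasets $\mc{D}$ and $\mc{D}'$ differing in the $n$-th example, write $\fopt = A(\mc{D})$ and $\fopt' = A(\mc{D}')$, and compute
\begin{align}
\nabla \obj(\mbf{f},\mc{D}) - \nabla \obj(\mbf{f},\mc{D}')
 = \tfrac{1}{n}\bigl[\loss'(\mbf{f}^T\mbf{x}_n, y_n)\mbf{x}_n - \loss'(\mbf{f}^T\mbf{x}'_n, y'_n)\mbf{x}'_n\bigr].
\end{align}
Using $|\loss'|\leq 1$ and $\|\mbf{x}_i\|\leq 1$, this gradient difference has norm at most $2/n$ at every $\mbf{f}$. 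Next I would invoke the standard first-order characterization of $\reg$-strong convexity: since $\nabla \obj(\fopt,\mc{D}) = 0$,
\begin{align}
\reg \|\fopt - \fopt'\|^2
 \leq \bigl(\nabla \obj(\fopt',\mc{D}) - \nabla \obj(\fopt,\mc{D})\bigr)^T(\fopt' - \fopt).
\end{align}
Substituting $\nabla \obj(\fopt',\mc{D}) = \nabla \obj(\fopt',\mc{D}) - \nabla \obj(\fopt',\mc{D}')$ (which uses $\nabla \obj(\fopt',\mc{D}') = 0$), applying Cauchy--Schwarz, and dividing by $\|\fopt - \fopt'\|$ yields $\|\fopt - \fopt'\| \leq 2/(n\reg)$.

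With the sensitivity bound $S(A) \leq 2/(n\reg)$ in hand, I would conclude by the standard argument for the exponential-type noise density $\nu(\b) \propto e^{-\beta\|\b\|}$: for any measurable set $\SS$,
\begin{align}
\frac{\prob(\fpriv \in \SS \mid \mc{D})}{\prob(\fpriv \in \SS \mid \mc{D}')}
 = \frac{\int_{\SS} \nu(\mbf{g} - \fopt)\, d\mbf{g}}{\int_{\SS} \nu(\mbf{g} - \fopt')\, d\mbf{g}}
 \leq \sup_{\b} \frac{\nu(\b)}{\nu(\b + \fopt' - \fopt)}
 \leq e^{\beta \|\fopt - \fopt'\|},
\end{align}
by the triangle inequality. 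With $\beta = n\reg\priveps/2$ and $\|\fopt - \fopt'\| \leq 2/(n\reg)$, this ratio is at most $e^{\priveps}$, which is exactly Definition~\ref{def:probpriv}.

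The main obstacle is really just step two, the sensitivity argument; everything else is either a routine chain-rule computation or a textbook property of the Laplace-type mechanism. The subtle point to emphasize is that both $\reg$-strong convexity (to control how much the minimizer moves in response to a gradient perturbation) and the uniform bound $|\loss'|\leq 1$ together with $\|\mbf{x}_i\|\leq 1$ (to control the size of that gradient perturbation) are essential; dropping either assumption breaks the $2/(n\reg)$ sensitivity bound and therefore the privacy guarantee.
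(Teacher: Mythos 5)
Your proposal is correct and follows essentially the same route as the paper: you derive the $L_2$-sensitivity bound $\|\fopt - \fopt'\| \le 2/(n\reg)$ from $\reg$-strong convexity of the objective together with the $2/n$ bound on the gradient perturbation (the paper factors this into Lemma~\ref{lem:sensitivity} and Corollary~\ref{cor:sensitivity}, while you inline it), and then conclude via the ratio of the exponential-type noise densities with $\beta = n\reg\priveps/2$. Your final set-based integral bound $\sup_{\b} \nu(\b)/\nu(\b + \fopt' - \fopt) \le e^{\beta\|\fopt-\fopt'\|}$ is a slightly more careful rendering of the paper's pointwise density-ratio step, but it is the same argument in substance.
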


The proof of Theorem~\ref{thm:sensitivitypriv} follows from
Corollary~\ref{cor:sensitivity}, and~\cite{DworkMNS:06sensitivity}.
The proof is provided here for completeness.

\begin{proof}
From Corollary \ref{cor:sensitivity}, if the conditions on $N(\cdot)$ and
$\loss$ hold, then the $L_2$-sensivity of ERM with regularization parameter $\reg$ is at most $\frac{2}{n \reg}$.
We observe that when we pick $||\b||$ from the
distribution in Algorithm 1, for a specific vector $\mbf{b_0} \in \bbR^d$, the density
at $\mbf{b_0}$ is proportional to $e^{-\frac{n \reg \priveps}{2}
||\mbf{b_0}||}$. Let $\D$ and $\D'$ be any two datasets that differ in the
value of one individual. Then, for any $\mbf{f}$, 
	\begin{align} 
	\frac{g(\mbf{f} | \D)}{g(\mbf{f} | \D')} 
	= \frac{\nu(\b_1)}{\nu(\b_2)} 
	= e^{-\frac{n \reg \priveps}{2}(||\b_1|| - ||\b_2||)},
	\end{align}
where $\b_1$ and $\b_2$ are the corresponding noise vectors chosen in Step 1 of
Algorithm 1, and $g(\f | \D)$ ($g(\f | \D')$ respectively) is the density of the
output of Algorithm 1 at $\f$, when the input is $\D$ ($\D'$ respectively).
If $\f_1$ and $\f_2$ are the solutions respectively to non-private
regularized ERM when the input is $\D$ and $\D'$,
then, $\b_2 - \b_1
= \f_2 - \f_1$. From Corollary \ref{cor:sensitivity}, and using a
triangle inequality,
	\begin{align} 
	||\b_1|| - ||\b_2|| 
	\leq ||\b_1 - \b_2 || 
	= ||\f_2 -\f_1|| 
	\leq \frac{2}{n \reg}.
	\end{align}
Moreover, by symmetry, the density of the directions of $\b_1$ and
$\b_2$ are uniform. Therefore, by construction,
$\frac{\nu(\b_1)}{\nu(\b_2)} \leq e^{\priveps}$. The theorem follows.
\end{proof}

The main ingredient of the proof of Theorem~\ref{thm:sensitivitypriv} is a
result about the sensitivity of regularized ERM, which is provided below.

\begin{lemma}   \label{lem:sensitivity}
Let $G(\mbf{f})$ and $g(\mbf{f})$ be two vector-valued functions, 
which are continuous, and differentiable at all points. Moreover, let $G(\f)$ and $G(\f) + g(\f)$ be
$\lambda$-strongly convex. If $\f_1 = \argmin_\mbf{f} G(\mbf{f})$ and $\f_2 = \argmin_{\mbf{f}} G(\mbf{f}) + g(\mbf{f})$, then
	\begin{align}
	\norm{\f_1 - \f_2} \leq \frac{1}{\lambda} \max_{\mbf{f}} \norm{ \nabla g(\mbf{f}) }.
	\end{align}
\end{lemma}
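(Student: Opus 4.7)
The plan is to use first-order optimality conditions together with the gradient monotonicity characterization of strong convexity. Differentiability of $G$ and $G+g$ together with the fact that $\f_1,\f_2$ are minimizers gives the optimality equations
\[
\nabla G(\f_1) = 0, \qquad \nabla G(\f_2) + \nabla g(\f_2) = 0.
\]
In particular, $\nabla G(\f_2) = -\nabla g(\f_2)$.

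Next I would invoke the standard consequence of $\lambda$-strong convexity for differentiable functions: for any $\lambda$-strongly convex differentiable $H$ and any $\mbf{u},\mbf{v}$,
\[
\langle \nabla H(\mbf{u}) - \nabla H(\mbf{v}),\ \mbf{u} - \mbf{v}\rangle \ \geq\ \lambda \norm{\mbf{u} - \mbf{v}}^2.
\]
Applying this to $G$ at the points $\f_2,\f_1$ and substituting $\nabla G(\f_1)=0$ yields
\[
\langle \nabla G(\f_2),\ \f_2 - \f_1\rangle \ \geq\ \lambda \norm{\f_2 - \f_1}^2.
\]
Using $\nabla G(\f_2) = -\nabla g(\f_2)$ and applying Cauchy--Schwarz to the left-hand side gives
\[
\norm{\nabla g(\f_2)}\, \norm{\f_2 - \f_1} \ \geq\ \lambda \norm{\f_2 - \f_1}^2,
\]
and dividing by $\lambda \norm{\f_2 - \f_1}$ (the case $\f_1=\f_2$ is trivial) yields $\norm{\f_1 - \f_2} \leq \norm{\nabla g(\f_2)}/\lambda$, which is bounded above by $\max_{\mbf{f}}\norm{\nabla g(\mbf{f})}/\lambda$.

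There is essentially no obstacle here; the only subtle point is that the lemma's hypotheses list strong convexity for both $G$ and $G+g$, but the argument only needs it for one of them (either choice works by symmetry — one could alternatively apply the gradient monotonicity inequality to $G+g$, use $\nabla(G+g)(\f_2)=0$ and $\nabla G(\f_1)=0$, and recover the bound with $\nabla g(\f_1)$ on the right). The stronger hypothesis presumably anticipates how the lemma is used downstream (where both $G$ and $G+g$ arise as regularized empirical objectives). No additional assumptions on differentiability of higher order or on the domain are required beyond what is stated.
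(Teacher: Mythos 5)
Your proof is correct and follows essentially the same route as the paper: first-order optimality at $\f_1$ and $\f_2$, the gradient-monotonicity consequence of $\lambda$-strong convexity applied to $G$, Cauchy--Schwarz, and division by $\lambda\norm{\f_1-\f_2}$ (the paper cites Lemma 14 of Shalev-Shwartz's thesis for the monotonicity inequality, and handles the $\f_1=\f_2$ case implicitly as you do explicitly). Your side remark that strong convexity of only one of $G$ or $G+g$ suffices is also accurate and consistent with how the lemma is later applied.
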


\begin{proof}
Using the definition of $\f_1$ and $\f_2$, and the fact that $G$ and $g$ are
continuous and differentiable everywhere, 
	\begin{equation}
	\nabla G(\f_1) = \nabla G(\f_2) + \nabla g(\f_2) = {\mbf{0}}.
	\label{eqn:derv}
	\end{equation}

As $G(\f)$ is $\lambda$-strongly convex, it follows from Lemma 14 of \cite{Shaithesis} that:
	\begin{align}
	(\nabla G(\f_1) - \nabla G(\f_2))^T (\f_1 - \f_2) 
		\geq \lambda \norm{\f_1 - \f_2}^2.
	\end{align}
Combining this with (\ref{eqn:derv}) and the Cauchy-Schwartz inequality, we get that
	\begin{align}
	\norm{\f_1 - \f_2} \cdot \norm{\nabla g(\f_2)} 
		\geq (\f_1 - \f_2)^T \nabla g(\f_2) 
		=  (\nabla G(\f_1) - \nabla G(\f_2))^T (\f_1 - \f_2) 
		\geq \lambda \norm{\f_1 - \f_2}^2.
	\end{align}
The conclusion follows from dividing both sides by $\lambda \norm{\f_1 - \f_2}$.
\end{proof}

\begin{corollary}
If $N(\cdot)$ is differentiable and $1$-strongly convex, and $\loss$ is
convex and differentiable with $|\loss'(z)| \leq 1$ for all $z$, then, the $L_2$-sensitivity of
$\obj(\mbf{f}, \mc{D})$ is at most $\frac{2}{n \reg}$.
\label{cor:sensitivity}
\end{corollary}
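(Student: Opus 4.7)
The plan is to realize the corollary as a direct application of Lemma~\ref{lem:sensitivity} to the two regularized ERM objectives induced by neighboring datasets $\mc{D}$ and $\mc{D}'$. Concretely, assume without loss of generality that $\mc{D}$ and $\mc{D}'$ agree in the first $n-1$ points and differ in the last: $\mc{D}$ contains $(\mbf{x}_n,y_n)$ and $\mc{D}'$ contains $(\mbf{x}_n',y_n')$. Define
\begin{align}
G(\mbf{f}) = \obj(\mbf{f},\mc{D}), \qquad g(\mbf{f}) = \frac{1}{n}\bigl[\loss(\mbf{f}^T\mbf{x}_n',y_n') - \loss(\mbf{f}^T\mbf{x}_n,y_n)\bigr],
\end{align}
so that $G(\mbf{f}) + g(\mbf{f}) = \obj(\mbf{f},\mc{D}')$. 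Let $\mbf{f}_1$ and $\mbf{f}_2$ be the respective minimizers; our goal is to bound $\norm{\mbf{f}_1 - \mbf{f}_2}$.

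Next I would check the hypotheses of Lemma~\ref{lem:sensitivity}. Since $\loss$ is convex and differentiable in its first argument and $\mbf{f} \mapsto \mbf{f}^T\mbf{x}_i$ is linear, each term $\loss(\mbf{f}^T\mbf{x}_i,y_i)$ is convex and differentiable in $\mbf{f}$. The regularizer $\reg N(\mbf{f})$ is $\reg$-strongly convex because $N$ is $1$-strongly convex, so $G$ is $\reg$-strongly convex; by the same reasoning $G+g$ is $\reg$-strongly convex. All functions involved are differentiable everywhere, so the lemma applies with $\lambda = \reg$.

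It then remains to bound $\max_{\mbf{f}} \norm{\nabla g(\mbf{f})}$. By the chain rule,
\begin{align}
\nabla g(\mbf{f}) = \frac{1}{n}\bigl[\loss'(\mbf{f}^T\mbf{x}_n',y_n')\,\mbf{x}_n' - \loss'(\mbf{f}^T\mbf{x}_n,y_n)\,\mbf{x}_n\bigr].
\end{align}
Using the triangle inequality together with the hypothesis $|\loss'(z)| \le 1$ and the assumption $\norm{\mbf{x}_i} \le 1$ built into the model gives
\begin{align}
\norm{\nabla g(\mbf{f})} \le \frac{1}{n}\bigl(\norm{\mbf{x}_n'} + \norm{\mbf{x}_n}\bigr) \le \frac{2}{n}.
\end{align}
Substituting into Lemma~\ref{lem:sensitivity} yields $\norm{\mbf{f}_1 - \mbf{f}_2} \le \tfrac{2}{n\reg}$, and since $\mc{D},\mc{D}'$ were arbitrary neighbors this is exactly the claimed sensitivity bound.

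There is no real obstacle here: the lemma does essentially all the work, so the only things to be careful about are (i) identifying the perturbation $g$ as exactly the normalized difference of a single loss term (the $1/n$ factor is crucial), and (ii) correctly differentiating $\loss(\mbf{f}^T\mbf{x},y)$ in $\mbf{f}$ so that the factor $\mbf{x}$ appears and can be bounded by the unit-ball assumption. Everything else is a two-line application of the triangle inequality.
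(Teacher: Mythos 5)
Your proposal is correct and follows essentially the same route as the paper: the paper's own proof also sets $G(\mbf{f}) = \obj(\mbf{f},\mc{D})$ and $g(\mbf{f}) = \obj(\mbf{f},\mc{D}') - \obj(\mbf{f},\mc{D}) = \frac{1}{n}\bigl(\loss(y_n' \mbf{f}^T\mbf{x}_n') - \loss(y_n \mbf{f}^T\mbf{x}_n)\bigr)$, verifies $\reg$-strong convexity of both objectives, bounds $\norm{\nabla g(\mbf{f})} \le \frac{2}{n}$ via $|\loss'(z)|\le 1$ and $\norm{\mbf{x}_i}\le 1$, and invokes Lemma~\ref{lem:sensitivity}. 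The only difference is notational: the paper writes the loss in margin form $\loss(y\,\mbf{f}^T\mbf{x})$, so factors of $y_n, y_n'$ (bounded by $1$) appear in the gradient, whereas you absorb the label into the two-argument loss; the argument is otherwise identical.
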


\begin{proof}
Let $\mc{D} = \{ (\x_1, y_1), \ldots, (\x_n, y_n) \}$ and $\mc{D'} = \{ (\x_1,
y_1), \ldots, (\x'_n, y'_n) \}$ be two datasets that differ in the value of
the $n$-th individual. Moreover, we let $G(\f) = \obj(\mbf{f}, \mc{D})$, 
$g(\f) = \obj(\f, \D') - \obj(\f, \D)$, $\f_1 =  \argmin_{\mbf{f}} \obj(\mbf{f},
\mc{D})$, and $\f_2 = \argmin_{\f} \obj(\f, \D')$. Finally, we set
$g(\f) = \frac{1}{n}(\loss(y'_n \f^T \x'_n) - \loss(y_n \f^T \x_n))$.

We observe that due to the convexity of $\loss$, and $1$-strong convexity
of $N(\cdot)$, $G(\f) = \obj(\f,\D)$ is $\reg$-strongly convex. Moreover, 
$G(\f) + g(\f) = \obj(\f, \D')$ is also $\reg$-strongly convex. Finally,
due to the differentiability of $N(\cdot)$ and $\loss$, $G(\f)$ and $g(\f)$
are also differentiable at all points.  We have:
	\begin{align} 
	\nabla g(\f) = \frac{1}{n} ( y_n \loss'(y_n \mbf{f}^T \x_n) \x_n 
		- y'_n \loss'(y'_n \mbf{f}^T \x'_n) \x'_n).
	\end{align}
As $y_i \in [-1, 1]$, $|\loss'(z)| \leq 1$, for all $z$, and $||\x_i||\leq
1$, for any $\f$, $||\nabla g(\f)|| \leq \frac{1}{n}(||\x_n - \x'_n||) \leq \frac{1}{n}(||\x_n|| +
||\x'_n||) \leq \frac{2}{n}$.
The proof now follows by an application of Lemma~\ref{lem:sensitivity}.
\end{proof}

\subsubsection{Privacy Guarantees for Objective Perturbation}

In this section, we show that Algorithm 2 is $\priveps$-differentially
private.  This proof requires stronger assumptions on the loss function 
than were required in Theorem \ref{thm:sensitivitypriv}.  In certain cases,
some of these assumptions can be weakened; for such an example, see
Section~\ref{sec:svmprivacy}.

\begin{theorem}
If $N(\cdot)$ is $1$-strongly convex and doubly differentiable,
and $\loss(\cdot)$ is convex and doubly differentiable, 
with $|\loss'(z)| \leq 1$ and $|\loss''(z)| \leq \c$ for
all $z$, then Algorithm 2 is $\priveps$-differentially private.
\label{thm:objectivepriv}
\end{theorem}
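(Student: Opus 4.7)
The plan is to represent $\fpriv$ as a deterministic function of the noise vector $\b$ for each fixed dataset, via the first-order optimality condition of the strongly convex perturbed objective, and then bound the ratio of the induced densities of $\fpriv$ at a common point under neighbouring datasets via a change-of-variables formula. Under the hypotheses, adding the regularization $\tfrac{\extra}{2}\norm{\f}^2$ makes the objective in Algorithm~2 at least $(\reg+\extra)$-strongly convex, hence uniquely minimized; the stationarity condition
\begin{align}
\b \;=\; -\,n\reg\,\nabla N(\fpriv) \;-\; \sum_{i=1}^n y_i\,\ell'(y_i\fpriv^T\x_i)\,\x_i \;-\; n\extra\,\fpriv
\end{align}
defines a smooth bijection $T_\D:\f\mapsto\b$ with Jacobian
\begin{align}
T_\D'(\f) \;=\; -\Big(n\reg\,\nabla^2 N(\f) + \sum_i \ell''(y_i\f^T\x_i)\,\x_i\x_i^T + n\extra\,I\Big),
\end{align}
which is negative definite because $1$-strong convexity of $N$ gives $-T_\D'(\f)\succeq n(\reg+\extra)I$. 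The change-of-variables formula then yields $\mu(\f\mid\D)=\nu\!\big(T_\D(\f)\big)\,|\det T_\D'(\f)|$.

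For neighbouring $\D,\D'$ differing only in entry $n$, the density ratio $\mu(\f\mid\D)/\mu(\f\mid\D')$ factors into a noise-density ratio times a Jacobian-determinant ratio. For the noise piece,
\begin{align}
T_\D(\f)-T_{\D'}(\f) \;=\; y_n'\ell'(y_n'\f^T\x_n')\,\x_n' - y_n\ell'(y_n\f^T\x_n)\,\x_n,
\end{align}
whose norm is at most $2$ by the triangle inequality and $|\ell'|\le 1$, $\norm{\x}\le 1$; since $\nu(\b)\propto e^{-\beta\norm{\b}}$, this gives a ratio at most $e^{2\beta}$. For the Jacobian piece, I would apply the matrix-determinant lemma with the common base $M_0=n\reg\,\nabla^2 N(\f)+\sum_{i<n}\ell''(y_i\f^T\x_i)\x_i\x_i^T+n\extra\,I$ to write
\begin{align}
\frac{|\det T_\D'(\f)|}{|\det T_{\D'}'(\f)|}
\;=\; \frac{1+\ell''(y_n\f^T\x_n)\,\x_n^T M_0^{-1}\x_n}{1+\ell''(y_n'\f^T\x_n')\,\x_n'^T M_0^{-1}\x_n'};
\end{align}
using $|\ell''|\le \c$, $\norm{\x}\le 1$, and $M_0\succeq n(\reg+\extra)I$, each factor $\ell''(\cdot)\,\x^T M_0^{-1}\x$ lies in $[0,\c/(n(\reg+\extra))]$, so this ratio is at most $1+\c/(n(\reg+\extra))$.

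Combining, $\mu(\f\mid\D)/\mu(\f\mid\D')\le e^{2\beta}\big(1+\c/(n(\reg+\extra))\big)$, with the symmetric bound holding after swapping $\D$ and $\D'$. It then remains to verify the algorithm's parameter choices in each branch. When $\priveps'=\priveps-\log\!\big(1+\tfrac{2\c}{n\reg}+\tfrac{\c^2}{n^2\reg^2}\big)>0$, the choices $\extra=0$ and $\beta=\priveps'/2$ yield $e^{\priveps'}(1+\c/(n\reg))\le e^{\priveps'}(1+\c/(n\reg))^2 = e^{\priveps}$ by the definition of $\priveps'$; when $\priveps'\le 0$, the choices $\beta=\priveps/4$ and $\extra=\c/(n(e^{\priveps/4}-1))-\reg$ force $1+\c/(n(\reg+\extra))=e^{\priveps/4}$, so the product is $e^{\priveps/2}\cdot e^{\priveps/4}\le e^{\priveps}$. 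The main obstacle I anticipate is the Jacobian-ratio step: cleanly invoking the matrix-determinant lemma requires verifying positive definiteness of $M_0$ (which comes from $1$-strong convexity of $N$ together with convexity of $\ell$), and the change-of-variables formula requires that $T_\D$ is in fact a smooth bijection of $\mathbb{R}^d$, which follows from strong convexity but should be stated carefully.
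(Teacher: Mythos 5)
Your proposal is correct, and its skeleton is the paper's: represent $\fpriv$ via the stationarity bijection $\f \mapsto \b$, factor the density ratio by change of variables into a noise-density term and a Jacobian-determinant term, bound the noise term by $e^{2\beta}$ using $\norm{T_\D(\f) - T_{\D'}(\f)} \le 2$ (the paper writes this in polar coordinates, but the surface-area factors cancel, so your direct bound $\nu(\b)/\nu(\b') \le e^{\beta \norm{\b - \b'}}$ is the same computation, streamlined), and close with the two-branch case analysis on $\extra$. Where you genuinely diverge is the Jacobian step, and your route is different enough to be worth comparing. The paper treats the change between the two Jacobians as a single rank-two perturbation $E$ of the matrix $A = n\reg \nabla^2 N(\fpriv) + \sum_i y_i^2 \loss''(y_i \fpriv^T \x_i)\x_i \x_i^T + n\extra I_d$, proves a lemma giving $\det(A+E)/\det(A) = \bigl(1 + \lambda_1(A^{-1}E)\bigr)\bigl(1 + \lambda_2(A^{-1}E)\bigr)$, bounds $|\lambda_1(E)| + |\lambda_2(E)| \le 2\c$ via the trace norm, and obtains $\bigl(1 + \c/(n(\reg+\extra))\bigr)^2$ --- which is exactly what the slack $\log\bigl(1 + \frac{2\c}{n\reg} + \frac{\c^2}{n^2\reg^2}\bigr)$ in Algorithm 2 is calibrated against. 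You instead write each Jacobian as a rank-one update of the common base $M_0$ and apply the matrix determinant lemma, crucially exploiting the sign information $\loss'' \ge 0$ (convexity) so that both rank-one corrections are nonnegative; this yields the one-sided, strictly tighter bound $1 + \c/(n(\reg+\extra))$, whence your final constants carry slack ($e^{3\priveps/4}$ rather than $e^{\priveps}$ in the $\extra > 0$ branch), and in principle the algorithm's slack term could be reduced to $2\log(1 + \c/(n\reg))$... which is the same quantity; more precisely, your analysis would permit halving it to $\log(1 + \c/(n\reg))$. The paper's rank-two eigenvalue lemma buys generality --- it needs only $|\lambda_j(E)|$ bounds, not that the perturbation is a difference of two PSD rank-one terms --- while your decomposition buys a sharper constant by using convexity twice. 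Minor points: your dropping of the $y_i^2$ factors is harmless since $y_i \in \{-1,+1\}$, and the bijection/smoothness caveat you flag (injectivity from strict convexity of the perturbed objective, surjectivity because the stationarity formula produces a valid $\b$ for every $\f$) is resolved in the paper by precisely the argument you sketch.
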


\begin{proof}
Consider an $\fpriv$ output by Algorithm 2. We observe that given {\em any} fixed
$\fpriv$ and a fixed dataset $\D$, there always exists a $\b$ such that
Algorithm 2 outputs $\fpriv$ on input $\D$.  Because $\loss$ is differentiable 
and convex, and $N(\cdot)$ is differentiable, 
we can take the gradient of the objective function and set it to
$\mbf{0}$ at $\fpriv$. Therefore,
	\begin{equation}
	\b = -n \reg \nabla N(\fpriv) 
		- \sum_{i=1}^{n} y_i \loss'(y_i \fpriv^T \mbf{x}_i) \mbf{x}_i 
		- n \extra \fpriv 
		\label{eqn:bf}.
	\end{equation}
Note that (\ref{eqn:bf}) holds because for any $\mbf{f}$, $\nabla
\loss(\mbf{f}^T \mbf{x}) = \loss'(\mbf{f}^T \mbf{x}) \mbf{x}$.  

We claim that as $\loss$ is differentiable and $\obj(\f, \D) + \frac{\extra}{2}
||\f||^2$ is strongly convex, given a dataset $\mc{D} = (\x_1, y_1),
\ldots, (\x_n, y_n)$, there is a bijection between $\b$ and $\fpriv$.  The relation (\ref{eqn:bf}) shows that two different $\b$ values cannot result in the same $\fpriv$.  Furthermore, since the objective is strictly convex, for a
fixed $\b$ and $\mc{D}$, there is a unique $\fpriv$; therefore the map
from $\b$ to $\fpriv$ is injective.  The relation (\ref{eqn:bf}) also shows that for any $\fpriv$ there exists a $\b$ for which $\fpriv$ is the minimizer, so the map from $\b$ to $\fpriv$ is surjective.

To show $\priveps$-differential privacy, we need to compute the ratio $g( \fpriv | \mc{D} )/g( \fpriv |
\mc{D}' )$ of
the densities of $\fpriv$ under the two datasets $\mc{D}$ and $\mc{D'}$. This ratio can be
written as \citep{Billingsley}
\begin{align*}
	\frac{ g( \fpriv | \mc{D} )}{ g( \fpriv | \mc{D}' ) } 
	&= \frac{ \mu( \b | \mc{D} )}{ \mu( \b' | \mc{D}' ) } 
		\cdot \frac{ |\det(\mbf{J}(\fpriv \to \b |
		\mc{D}))|^{-1} }{ |\det(\mbf{J}(\fpriv \to \b' |
		\mc{D}'))|^{-1} },
\end{align*}
where $\mbf{J}(\fpriv \to \b | \mc{D})$, $\mbf{J}(\fpriv \to \b
| \mc{D'})$ are the Jacobian matrices of the mappings from $\fpriv$ to $\b$,
and $\mu(\b | \mc{D})$ and $\mu(\b | \mc{D'})$ are the densities
of $\b$ given the output $\fpriv$, when the datasets are $\mc{D}$ and $\mc{D'}$ respectively.

First, we bound the ratio of the Jacobian determinants.   Let $\b^{(j)}$ denote the $j$-th coordinate of $\b$.  From (\ref{eqn:bf}) we have
	\begin{equation*}
	\b^{(j)} = - n \reg \nabla N(\fpriv)^{(j)} 
		- \sum_{i=1}^{n} \loss'(y_i \fpriv^T 	\mbf{x}_i) \mbf{x}_i^{(j)} 
		- n\extra \fpriv^{(j)}.
	\end{equation*}
Given a dataset $\D$, the $(j,k)$-th entry of the Jacobian
matrix $\mbf{J}(\mbf{f} \to \b | \mc{D})$ is
	\begin{align*}
	\frac{\partial \b^{(j)} }{\partial \fpriv^{(k)}} 
		= -n \reg \nabla^2 N(\fpriv)^{(j, k)} 
			- \sum_{i} y_i^2 \loss''(y_i \fpriv^T \mbf{x}_i) \mbf{x}_i^{(j)} \x_i^{(k)} 
			- n \extra 1(j=k),
	\end{align*}
where $1(\cdot)$ is the indicator function. We note that the Jacobian is
defined for all $\fpriv$ because $N(\cdot)$ and $\loss$ are globally
doubly differentiable.
 
Let $\mc{D}$ and $\mc{D'}$ be two datasets which differ in the value of the $n$-th
item such that \\
$\mc{D} = \{ (\x_1, y_1), \ldots, (\x_{n-1}, y_{n-1}), (\x_n, y_n)
\}$ and $\mc{D'} = \{ (\x_1, y_1), \ldots, (\x_{n-1}, y_{n-1}), (\x'_n,
y'_n) \}$.
Moreover, we define matrices $A$ and $E$ as follows:
	\begin{align*}
	A & =  n \reg \nabla^2 N(\fpriv) 
		+ \sum_{i=1}^{n} y_i^2 \loss''(y_i \fpriv^T \mbf{x}_i) \mbf{x}_i \mbf{x}_i^T  
		+ n \extra I_d\\
	E & = - y_n^2 \loss''(y_n \fpriv^T \mbf{x}_n) \mbf{x}_n \mbf{x}_n^T 
		+ (y'_n)^2 \loss''(y'_n \fpriv^T \mbf{x'}_n) \mbf{x'}_n \mbf{x'}_n^T.
	\end{align*}
Then, $\mbf{J}(\fpriv \to \b | \mc{D}) = -A$, and $\mbf{J}(\fpriv \to
\b | \mc{D'}) = -(A + E)$. 

Let $\lambda_1(M)$ and $\lambda_2(M)$ denote the largest and second largest eigenvalues of a matrix $M$.  As $E$ has rank at most $2$, from Lemma~\ref{lem:detlambda},
	\begin{align*}
	\frac{ |\det( \mbf{J}(\fpriv \to \b | \mc{D'}))| } { |\det(
	\mbf{J}(\fpriv \to \b | \mc{D}))|} 
	&=
	\frac{|\det(A+E)|}{|\det{A}|} \\
	&= |1 + \lambda_1(A^{-1}E) +
	\lambda_2(A^{-1}E) + \lambda_1(A^{-1}E) \lambda_2(A^{-1}E)|.
	\end{align*}
For a $1$-strongly convex function $N$, the Hessian $\nabla^2 N(\fpriv)$
has eigenvalues greater than $1$ \citep{BoydV:04convex}.  Since we have
assumed $\loss$ is doubly differentiable and convex, any eigenvalue of $A$ is
therefore at least $n\reg + n \extra$; therefore, for $j=1,2$,
$|\lambda_j(A^{-1}E)| \leq \frac{|\lambda_j(E)|}{n (\reg + \extra)}$. 
Applying the triangle inequality to the trace norm:
	\begin{align*}
	|\lambda_1(E)| + |\lambda_2(E)| 
		\le |y_n^2 \loss''(y_n \fpriv^T \x_n)| \cdot \norm{\x_n}
			+ | -(y'_n)^2 \loss''(y'_n \fpriv^T \x'_n) | \cdot \norm{\x'_n}.
	\end{align*}
Then upper bounds on $|y_i|$, $||\x_i||$, and $|\loss''(z)|$ yield
	\begin{align*}
	|\lambda_1(E)| + |\lambda_2(E)| \leq 2\c.
	\end{align*}

Therefore, $|\lambda_1(E)|\cdot |\lambda_2(E)| \leq c^2$, and 
	\begin{align*}
	\frac{|\det(A+E)|}{|\det(A)|} \leq 1 + \frac{2\c}{n (\reg + \extra)} +
\frac{c^2}{n^2 (\reg + \extra)^2}  = \left(1 + \frac{\c}{n (\reg +
\extra)}\right)^2.
	\end{align*}
We now consider two cases. In the first case, $\extra = 0$, and by
definition, in that case, $1 + \frac{2 \c}{n \reg} + \frac{\c^2}{n^2
\reg^2} \leq e^{\priveps - \priveps'}$. In the second case, $\extra > 0$,
and in this case, by definition of $\extra$, $(1 + \frac{\c}{n (\reg +
\extra)})^2 = e^{\priveps/2} = e^{\priveps - \priveps'}$.
 
Next, we bound the ratio of the densities of $\b$. We observe that as
$|\loss'(z)| \leq 1$, for any $z$ and $|y_i|, ||\x_i|| \leq 1$, for datasets
$\mc{D}$ and $\mc{D'}$ which differ by one value,
	\begin{align*}
	\mbf{b'} - \b = y_n \loss'(y_n \fpriv^T \mbf{x}_n) \mbf{x}_n 
		-  y'_n \loss'(y_n \fpriv^T \mbf{x'}_n) \mbf{x'}_n.
	\end{align*}
This implies that:
	\begin{align*} 
		\norm{\b} - \norm{\mbf{b'}} 
		\leq 
		\norm{\b - \mbf{b'}} 
		\leq 2.
	\end{align*}
We can write:
	\begin{align*} 
	\frac{ \mu(\b | \mc{D})}{\mu(\mbf{b'} | \mc{D'})} 
	=
	\frac{||\b||^{d-1} e^{-\priveps' ||\b||/2} 
			\cdot \frac{1}{\surf(||\b||)}
		}{
		||\mbf{b'}||^{d-1} e^{-\priveps' ||\mbf{b'}||/2} 
			\cdot \frac{1}{\surf(||\mbf{b'}||)}} 
	\leq e^{\priveps'(||\b|| - ||\mbf{b'}||)/2} \leq e^{\priveps'},
	\end{align*}
where $\surf(x)$ denotes the surface area of the sphere in $d$ dimensions with
radius $x$. Here the last step follows from the fact that $\surf(x) =
s(1)x^{d-1}$, where $s(1)$ is the surface area of the unit sphere in $\bbR^d$.

Finally, we are ready to bound the ratio of densities:
	\begin{align*}
	\frac{ g( \fpriv | \mc{D} )}{ g( \fpriv | \mc{D}' ) } 
	&= \frac{ \mu( \b | \mc{D} )}{ \mu( \b' | \mc{D}' ) } 
		\cdot \frac{ |\det(\mbf{J}(\fpriv \to \b | \mc{D}'))| }{
		|\det(\mbf{J}(\fpriv \to \b' | \mc{D}))| } \\
	&= \frac{ \mu( \b | \mc{D} )}{ \mu( \b' | \mc{D}' ) } 
		\cdot \frac{|\det(A + E)|}{|\det{A}|} \\
	&\leq e^{\priveps'} \cdot e^{\priveps - \priveps'} \\
	&\leq e^{\priveps}.
	\end{align*}
Thus, Algorithm 2 satisfies Definition~\ref{def:densitypriv}.
\end{proof}

\begin{lemma}
If $A$ is full rank, and if $E$ has rank at most $2$, then,
\begin{align} 
	\frac{\det(A+E) - \det(A)}{\det(A)} 
		= \lambda_1(A^{-1}E) + \lambda_2(A^{-1}E) 
			+ \lambda_1(A^{-1}E)\lambda_2(A^{-1}E),
\end{align}
where $\lambda_j(Z)$ is the $j$-th eigenvalue of matrix $Z$.
\label{lem:detlambda}
\end{lemma}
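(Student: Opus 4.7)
The plan is to reduce the ratio $\det(A+E)/\det(A)$ to a single determinant of the form $\det(I + M)$, and then exploit the low rank of $M$ to evaluate it explicitly in terms of its eigenvalues.

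First, since $A$ is invertible I would factor $A + E = A(I + A^{-1}E)$ and use multiplicativity of the determinant to write
\begin{align*}
\frac{\det(A+E)}{\det(A)} = \det(I + A^{-1}E).
\end{align*}
This reduces the problem to computing $\det(I + M)$ for the single matrix $M = A^{-1}E$.

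Next, I would exploit the rank hypothesis: $\mathrm{rank}(M) \le \mathrm{rank}(E) \le 2$, so $M$ has at most two nonzero eigenvalues, which by convention I would label $\lambda_1(M)$ and $\lambda_2(M)$; the remaining $d-2$ eigenvalues are $0$. Consequently, over $\mathbb{C}$ the eigenvalues of $I + M$ (with algebraic multiplicity) are $1+\lambda_1(M)$, $1+\lambda_2(M)$, and $1$ repeated $d-2$ times. Since the determinant of a matrix equals the product of its eigenvalues with algebraic multiplicity, we get
\begin{align*}
\det(I + M) = (1+\lambda_1(M))(1+\lambda_2(M)) = 1 + \lambda_1(M) + \lambda_2(M) + \lambda_1(M)\lambda_2(M).
\end{align*}
Subtracting $1$ from both sides of the first displayed equation yields the claimed identity.

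The only subtle point — and the step I would be most careful about — is justifying the product formula $\det(I+M) = \prod_i(1+\lambda_i(M))$ when $M$ is not diagonalizable, since the eigenvalues of $A^{-1}E$ need not be real. This is handled by the Jordan normal form: if $M = P J P^{-1}$ with $J$ upper-triangular having the eigenvalues of $M$ on its diagonal, then $I + M = P(I+J)P^{-1}$ with $I+J$ upper-triangular whose diagonal entries are $1 + \lambda_i(M)$, so the determinant equals the product of these diagonal entries. Beyond this, the argument is purely algebraic and presents no real obstacle; the rank bound on $E$ is the crucial hypothesis that keeps only two nontrivial factors in the product.
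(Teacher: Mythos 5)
Your proposal is correct and follows essentially the same route as the paper: factor out $A$ to reduce to $\det(I + A^{-1}E)$, use the rank bound to keep only two nontrivial eigenvalue factors, and expand $(1+\lambda_1)(1+\lambda_2) - 1$. Your extra care with the Jordan form to justify the product formula for non-diagonalizable $A^{-1}E$ is a justification the paper's proof silently assumes but does not spell out.
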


\begin{proof}
Note that $E$ has rank at most $2$, so $A^{-1} E$ also has rank at most $2$.  Using the fact that $\lambda_i(I + A^{-1}E) = 1 + \lambda_i(A^{-1} E)$,
	\begin{align*}
	\frac{\det(A + E) - \det(A)}{\det{A}} 
	&= \det(I + A^{-1} E) - 1 \\
	&= (1 + \lambda_1(A^{-1} E)) (1 + \lambda_2(A^{-1} E) ) - 1 \\
	&= \lambda_1(A^{-1}E) + \lambda_2(A^{-1}E) 
			+ \lambda_1(A^{-1}E)\lambda_2(A^{-1}E).
	\end{align*}
\end{proof}

\subsection{Application to classification}

In this section, we show how to use our results to provide privacy-preserving versions of logistic regression and support vector machines.

\subsubsection{Logistic Regression \label{sec:PrivLR}}

One popular ERM classification algorithm is
regularized logistic regression. In this case, $N(\f) = \frac{1}{2}
||\f||^2$, and the loss function is $\llr(z) = \log(1 + e^{-z})$.
Taking derivatives and double derivatives,
	\begin{align*}
	\llr'(z) & = \frac{-1}{(1 + e^{z})} \nonumber \\
	\llr''(z) & = \frac{1}{(1 + e^{-z})(1 + e^z)}.
	\label{eqn:lr}
	\end{align*}
Note that $\llr$ is continuous, differentiable and doubly differentiable, with $\c \leq \frac{1}{4}$. Therefore, we can plug in logistic loss directly to Theorems~\ref{thm:sensitivitypriv} and ~\ref{thm:objectivepriv} to get the following result.

\begin{corollary}
The output of Algorithm 1 with $N(\f)= \frac{1}{2} ||\f||^2$, $\loss = \llr$ is an $\priveps$-differentially private approximation to logistic regression.  The output of Algorithm 2 with $N(\f) = \frac{1}{2} ||\f||^2$, $\c=\frac{1}{4}$, and $\loss = \llr$, is an $\priveps$-differentially private approximation to logistic regression.
\label{cor:lr}
\end{corollary}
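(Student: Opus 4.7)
The proof plan is essentially a direct verification that logistic regression with $L_2$ regularization satisfies the hypotheses of Theorems~\ref{thm:sensitivitypriv} and~\ref{thm:objectivepriv}, and then invoking those theorems. There is no new machinery required, only checking the analytic conditions on $N(\cdot)$ and $\loss$.

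First I would verify the conditions on the regularizer. The function $N(\f) = \tfrac{1}{2}\|\f\|^2$ is smooth (in fact $C^\infty$) and its Hessian is the identity, so it is $1$-strongly convex and doubly differentiable everywhere. This covers the regularizer hypothesis needed in both theorems.

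Next I would verify the conditions on the loss $\llr(z) = \log(1+e^{-z})$. Convexity and (double) differentiability are immediate from the explicit formulas $\llr'(z) = -1/(1+e^z)$ and $\llr''(z) = 1/\bigl((1+e^{-z})(1+e^z)\bigr)$. The first derivative satisfies $|\llr'(z)| = 1/(1+e^z) \in (0,1)$, giving the bound $|\llr'(z)| \leq 1$ required by Theorem~\ref{thm:sensitivitypriv}. For Theorem~\ref{thm:objectivepriv}, I need a constant $\c$ with $|\llr''(z)| \leq \c$; writing $\llr''(z) = \sigma(z)\bigl(1-\sigma(z)\bigr)$ where $\sigma(z)=1/(1+e^{-z})$, the standard one-variable calculation shows this is maximized at $z=0$ with value $\tfrac{1}{4}$, so $\c = \tfrac{1}{4}$ suffices.

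With these verifications in hand, the two claims follow immediately: applying Theorem~\ref{thm:sensitivitypriv} with $N(\f)=\tfrac{1}{2}\|\f\|^2$ and $\loss=\llr$ yields $\priveps$-differential privacy for Algorithm~1, and applying Theorem~\ref{thm:objectivepriv} with the same $N$ and $\loss$ and $\c=\tfrac{1}{4}$ yields $\priveps$-differential privacy for Algorithm~2. The ``hard part,'' if any, is simply recognizing that $\llr''$ attains its maximum at the origin; everything else is bookkeeping against the hypotheses of the two theorems. No integration, sensitivity, or Jacobian arguments need to be redone, since they are absorbed into those theorems.
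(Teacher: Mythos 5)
Your proposal is correct and matches the paper's own argument: the paper likewise notes that $\llr$ is convex and doubly differentiable with $|\llr'(z)| \le 1$ and $|\llr''(z)| \le \frac{1}{4}$, and that $N(\f) = \frac{1}{2}\|\f\|^2$ satisfies the regularizer hypotheses, then plugs these directly into Theorems~\ref{thm:sensitivitypriv} and~\ref{thm:objectivepriv}. Your explicit calculation that $\llr''(z) = \sigma(z)(1-\sigma(z))$ peaks at $z=0$ with value $\frac{1}{4}$ is simply a spelled-out version of the bound the paper asserts.
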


We quantify how well the outputs of Algorithms 1 and 2 approximate
(non-private) logistic regression in Section~\ref{sec:utility}.

\subsubsection{Support Vector Machines}
\label{sec:svmprivacy}
Another very commonly used classifier is $L_2$-regularized support vector
machines. In this case, again, $N(\f) = \frac{1}{2} ||\f||^2$, and
	\begin{align} 
	\lsvm(z) = \max(0, 1 - z). 
	\end{align}
Notice that this loss function is continuous, but not differentiable, and thus it does not satisfy conditions in Theorems~\ref{thm:sensitivitypriv} and~\ref{thm:objectivepriv}. 

There are two alternative solutions to this. First, we
can approximate $\lsvm$ by a different loss function, which is doubly
differentiable, as follows (see also \cite{Chapelle:07primal}):
\begin{align}
	\loss_{s}(z) = \left\{
		\begin{array}{lll}
		0 & \textrm{if} & z > 1+h \\
		-\frac{(1 - z)^4}{16h^3} + \frac{3(1 - z)^2}{8h} + \frac{1 - z}{2} + \frac{3h}{16} & \textrm{if} & |1 - z| \le h \\
		1 - z & \textrm{if} & z < 1 - h
		\end{array}
		\right.
	\end{align}
As $h \rightarrow 0$, this loss approaches the hinge loss.
Taking derivatives, we observe that:
\begin{align}
	\loss'_{s}(z) = \left\{
		\begin{array}{lll}
		0 & \textrm{if} & z > 1+h \\
		\frac{(1 - z)^3}{4h^3} - \frac{3(1 - z)}{4h} - \frac{1}{2}  & \textrm{if} & |1 - z| \le h \\
		-1 & \textrm{if} & z < 1 - h
		\end{array}
		\right.
	\end{align}
Moreover,
\begin{align}
	\loss''_{s}(z) = \left\{
		\begin{array}{lll}
		0 & \textrm{if} & z > 1+h \\
		-\frac{3(1 - z)^2}{4h^3} + \frac{3}{4h}  & \textrm{if} & |1 - z| \le h \\
		0 & \textrm{if} & z < 1 - h
		\end{array}
		\right.
	\end{align}
Observe that this implies that $|\loss_{s}''(z)| \leq \frac{3}{4h}$ for all
$h$ and $z$. Moreover, $\loss_s$ is convex, as $\loss_{s}''(z) \geq 0$ for all $z$.
Therefore, $\loss_{s}$ can be used in Theorems~\ref{thm:sensitivitypriv} and ~\ref{thm:objectivepriv}, which
gives us privacy-preserving approximations to regularized support vector machines.

\begin{corollary}
The output of Algorithm 1 with $N(\f) = \frac{1}{2}||\f||^2$, and $\loss = \loss_{s}$ is an $\priveps$-differentially private approximation to support vector machines.
The output of Algorithm 2 with $N(\f) = \frac{1}{2}||\f||^2$, $\c=\frac{3}{4h}$, and $\loss = \loss_{s}$ is an $\priveps$-differentially private approximation to support vector machines.
\label{cor:svm}
\end{corollary}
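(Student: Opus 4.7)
The plan is to verify that the regularizer $N(\f) = \tfrac{1}{2}\|\f\|^2$ and the smoothed hinge loss $\loss_s$ satisfy the hypotheses of Theorems~\ref{thm:sensitivitypriv} and~\ref{thm:objectivepriv}, and then invoke those theorems directly. Since both privacy guarantees are already proven in full generality, the corollary is essentially a verification exercise: the content is in showing that $\loss_s$ is $C^2$ with the right bounds on its first two derivatives.

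First I would handle the regularizer. The function $N(\f) = \tfrac{1}{2}\|\f\|^2$ has Hessian $I_d$, so it is doubly differentiable and $1$-strongly convex; this disposes of the $N(\cdot)$ hypotheses of both theorems.

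Next I would verify the properties of $\loss_s$. The formulas for $\loss_s$, $\loss_s'$, and $\loss_s''$ are piecewise, so continuity and $C^2$-regularity reduce to checking the values at the join points $z = 1 \pm h$. A direct substitution shows $\loss_s(1+h) = 0$, $\loss_s(1-h) = h$, $\loss_s'(1+h) = 0$, $\loss_s'(1-h) = -1$, and $\loss_s''(1 \pm h) = 0$, which matches the outer pieces in each case, so $\loss_s \in C^2(\mathbb{R})$. For the derivative bound, on the middle region substitute $u = (1-z)/h \in [-1,1]$ to write $\loss_s'(z) = \tfrac{1}{4}u^3 - \tfrac{3}{4}u - \tfrac{1}{2}$; critical points occur at $u = \pm 1$ with values $-1$ and $0$, so $\loss_s'(z) \in [-1,0]$ on the middle region, and elsewhere $\loss_s'(z) \in \{-1, 0\}$, yielding $|\loss_s'(z)| \le 1$ globally. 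For the second derivative, the same substitution gives $\loss_s''(z) = \tfrac{3}{4h}(1 - u^2) \in [0, \tfrac{3}{4h}]$ on the middle region and $\loss_s''(z) = 0$ elsewhere; this simultaneously shows $\loss_s$ is convex (since $\loss_s'' \ge 0$) and $|\loss_s''(z)| \le \tfrac{3}{4h}$, so we may take $\c = \tfrac{3}{4h}$.

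With these verifications in hand, Theorem~\ref{thm:sensitivitypriv} immediately gives the privacy guarantee for Algorithm~1 with $\loss = \loss_s$, and Theorem~\ref{thm:objectivepriv} gives the privacy guarantee for Algorithm~2 with the same loss and with parameter $\c = \tfrac{3}{4h}$. The only subtlety, and the closest thing to an obstacle, is noticing that $\c$ depends on the smoothing parameter $h$ and therefore enters the ``If'' branch of Algorithm~2 through the quantity $\log(1 + \tfrac{2\c}{n\reg} + \tfrac{\c^2}{n^2\reg^2})$; but Theorem~\ref{thm:objectivepriv} already accounts for this via the adjustment $\extra$, so no further work is needed beyond observing that the theorem applies.
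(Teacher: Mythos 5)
Your proposal is correct and follows essentially the same route as the paper: verify that $N(\f) = \frac{1}{2}\|\f\|^2$ and the smoothed loss $\loss_s$ satisfy the analytic hypotheses (convexity, $|\loss_s'(z)| \le 1$, $|\loss_s''(z)| \le \frac{3}{4h}$) and then invoke Theorems~\ref{thm:sensitivitypriv} and~\ref{thm:objectivepriv} directly. The paper's verification is terser — it simply reads off the bound $|\loss_s''(z)| \le \frac{3}{4h}$ and convexity from the displayed formula for $\loss_s''$ — whereas you additionally check the $C^2$ matching at the join points $z = 1 \pm h$ and the first-derivative bound via the substitution $u = (1-z)/h$, all of which is sound and, if anything, makes explicit what the paper leaves implicit.
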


The second solution is to use Huber Loss, as suggested by~\cite{Chapelle:07primal}, which is defined as follows:
 \begin{align}
	\lhuber(z) = \left\{
		\begin{array}{lll}
		0 & \textrm{if} & z > 1+h \\
		\frac{1}{4h}(1 + h - z)^2 & \textrm{if} & |1 - z| \le h \\
		1 - z & \textrm{if} & z < 1 - h
		\end{array}
		\right.
	\label{eq:Huber}
	\end{align}
Observe that Huber loss is convex and differentiable, and piecewise
doubly-differentiable, with $c=\frac{1}{2h}$. However, it is not globally doubly
differentiable, and hence the Jacobian in the proof of
Theorem~\ref{thm:objectivepriv} is undefined for certain values of $\f$.
However, we can show that in this case, Algorithm 2, when run with $\c = \frac{1}{2h}$ satisfies Definition~\ref{def:probpriv}. 

Let $G$ denote the map from $\fpriv$ to $\b$ in (\ref{eqn:bf})  
under $\mc{B} = \D$, and $H$ denote the map under $\mc{B} = \D'$.  By  
definition, the probability $\prob( \fpriv \in \SS ~|~ \mc{B} = \D ) =  
\prob_{\b}( \b \in G(\SS) )$.

\begin{corollary}
Let $\fpriv$ be the output of Algorithm 2 with $\loss = \lhuber 
$, $c=\frac{1}{2h}$, and $N(\f) = \frac{1}{2} || \f ||_2^2$.  For any  
set $\SS$ of possible values of $\fpriv$, and any pair of datasets $\D 
$, $\D'$ which differ in the private value of one person $(\x_n, y_n)$,
	\begin{align}
	e^{-\priveps} \prob( \SS ~|~ \mc{B} = \D') \le \prob(\SS ~|~ \mc{B} =  
\D) \leq e^{\priveps} \prob(\SS ~|~ \mc{B} = \D').
	\end{align}
\label{cor:huber}
\end{corollary}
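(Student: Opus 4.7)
My plan is to reduce to the setting of Theorem~\ref{thm:objectivepriv} by restricting attention to the points $\fpriv$ at which Huber loss is doubly differentiable. The non-smoothness of $\lhuber$ lives only at the two points $z = 1 \pm h$, so for a fixed dataset $\D$ the map $F_{\D}\colon \fpriv \mapsto \b$ defined by (\ref{eqn:bf}) fails to be $C^1$ only on the ``kink set''
\[ \mc{K}_{\D} \;=\; \bigcup_{i=1}^{n} \bigl\{\fpriv \in \bbR^d : y_i \fpriv^T \x_i \in \{1-h,\ 1+h\}\bigr\}, \]
a finite union of affine hyperplanes and therefore Lebesgue-null. On the open complement of $\mc{K}_{\D} \cup \mc{K}_{\D'}$ the second derivative $\lhuber''$ exists at every relevant argument and satisfies $|\lhuber''(z)| \le \tfrac{1}{2h} = \c$, so the Jacobian computation in the proof of Theorem~\ref{thm:objectivepriv} goes through verbatim and yields a well-defined density $g(\fpriv \mid \D)$ with $g(\fpriv \mid \D) \le e^{\priveps} g(\fpriv \mid \D')$ at each such $\fpriv$.

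The first substantive step is to show that deleting $\mc{K}_{\D} \cup \mc{K}_{\D'}$ from any event $\SS$ changes its probability by zero under both datasets. Because $\lhuber'$ is bounded and globally $\tfrac{1}{2h}$-Lipschitz, $\nabla N(\f) = \f$, and $\|\x_i\| \le 1$, the map $F_{\D}$ is globally Lipschitz on $\bbR^d$. Lipschitz maps send Lebesgue-null sets to Lebesgue-null sets, so $F_{\D}(\mc{K}_{\D})$ is Lebesgue-null. As in Theorem~\ref{thm:objectivepriv}, the strict convexity and differentiability of the perturbed objective give a bijection between $\b$ and $\fpriv$, so the set of noise vectors producing $\fpriv \in \mc{K}_{\D}$ is exactly $F_{\D}(\mc{K}_{\D})$. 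Since the noise density $\nu(\b)$ is absolutely continuous with respect to Lebesgue measure, $\prob(\fpriv \in \mc{K}_{\D} \mid \mc{B} = \D) = 0$, and the analogous statement holds for $\mc{K}_{\D'}$ under either dataset.

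Setting $\tilde{\SS} = \SS \setminus (\mc{K}_{\D} \cup \mc{K}_{\D'})$, the preceding step gives $\prob(\SS \mid \mc{B} = \D) = \prob(\tilde{\SS} \mid \mc{B} = \D)$ and likewise under $\D'$. On $\tilde{\SS}$ both $F_{\D}$ and $F_{\D'}$ are $C^1$ with invertible Jacobians, so the change-of-variables computation from Theorem~\ref{thm:objectivepriv} yields pointwise densities satisfying the required ratio bound. Integrating over $\tilde{\SS}$ then gives
\[ \prob(\SS \mid \mc{B} = \D) \;\le\; e^{\priveps}\, \prob(\SS \mid \mc{B} = \D'), \]
with the lower inequality following by symmetry.

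The main obstacle is rigorizing the change of variables when $F_{\D}$ is only piecewise $C^1$. I would handle this by partitioning $\bbR^d$ into the finitely many open polyhedral cells carved out by the hyperplanes $\{y_i \f^T \x_i = 1 \pm h\}_{i=1}^{n}$, on each of which $F_{\D}$ agrees with a globally smooth map to which classical change of variables applies directly; the cell boundaries lie inside $\mc{K}_{\D}$ and were already shown to carry zero probability, so assembling the pieces reproduces the global density identity used in Theorem~\ref{thm:objectivepriv} without the double-differentiability hypothesis.
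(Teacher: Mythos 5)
Your proposal is correct and follows essentially the same route as the paper's own proof: excise the measure-zero union of hyperplanes where $\lhuber$ fails to be twice differentiable, show that this set carries zero probability under the output distribution for both datasets (via the bijection between $\b$ and $\fpriv$ and absolute continuity of the noise), and then run the density/Jacobian argument of Theorem~\ref{thm:objectivepriv} on the complement. Your two refinements---invoking that the globally Lipschitz map $F_{\D}$ preserves Lebesgue-null sets, where the paper instead observes that the image of each kink hyperplane under (\ref{eqn:bf}) is piecewise linear and hence null, and spelling out the polyhedral-cell decomposition to justify the piecewise-$C^1$ change of variables---merely make rigorous steps the paper treats informally.
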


\begin{proof}
Consider the event $\fpriv \in \SS$.  Let $\mc{T} = G(\SS)$ and $ 
\mc{T}' = H(\SS)$.  Because $G$ is a bijection, we have
	\begin{align}
	\prob( \fpriv \in \SS ~|~ \mc{B} = \D) = \prob_{\b}( \b \in  
\mc{T} ~|~ \mc{B} = \D),
	\end{align}
and a similar expression when $\mc{B} = \D'$.

Now note that $\lhuber'(z)$ is only non-differentiable for a  
finite number of values of $z$. Let $\mc{Z}$ be the set of these  
values of $z$.
	\begin{align}
	\mc{C} = \{ \mbf{f} : y \mbf{f}^T \mbf{x} = z \in \mc{Z},\  
(\mbf{x},y) \in \D \cup \D' \}.
	\end{align}
Pick a tuple $(z, (\mbf{x}, y)) \in \mc{Z} \times (\D \cup \D')$.   
The set of $\f$ such that $y \mbf{f}^T  
\mbf{x} = z$ is a hyperplane in $\bbR^d$.  Since $\nabla N(\f) = \f/2$ and $\loss'$ is piecewise linear, from $(\ref{eqn:bf})$ we see that the set of corresponding $\mbf{b}$'s is also piecewise linear, and hence has Lebesgue measure $0$.  
Since the measure corresponding to $\b$ is  
absolutely continuous with respect to the Lebesgue measure, this  
hyperplane has probability 0 under $\b$ as well.  Since $\mc{C}$  
is a finite union of such hyperplanes, we have $\prob( \b \in  
G(\mc{C})) = 0$.

Thus we have $\prob_{\b}( \mc{T} ~|~ \mc{B} = \D) =  
\prob_{\b}( G(\SS \setminus \mc{C}) ~|~ \mc{B} = \D)$, and  
similarly for $\D'$.  From the definition of $G$ and $H$, for $\f \in  
\SS \setminus \mc{C}$,
	\begin{align}
	H(\f) = G(\f) + y_n \loss'( y_n \f^T \mbf{x}_n) \mbf{x}_n - y'_n  
\loss'( y'_n \f^T \mbf{x}'_n ) \mbf{x}'_n.
	\end{align}
since $\f \notin \mc{C}$, this mapping shows that if $\prob_{\b} 
( G(\SS \setminus \mc{C}) ~|~ \mc{B} = \D) = 0$ then we must have
$\prob_{\b}( H(\SS \setminus \mc{C}) ~|~ \mc{B} = \D) = 0$.  Thus  
the result holds for sets of measure $0$.  If $\SS \setminus \mc{C}$  
has positive measure we can calculate the ratio of the probabilities  
for $\fpriv$ for which the loss is twice-differentiable.  For such $ 
\fpriv$ the Jacobian is also defined,  and we can use a method similar  
to Theorem~\ref{thm:objectivepriv} to prove the result.
\end{proof}

\textbf{Remark:}  Because the privacy proof for Algorithm \ref{alg:output} does not require the analytic properties of \ref{alg:objective}, we can also use Huber loss in Algorithm \ref{alg:output} to get an $\geneps$-differentially private approximation to the SVM.  We quantify how well the outputs of Algorithms 1 and 2 approximate private support vector machines in Section~\ref{sec:utility}.  These approximations to the hinge loss are necessary because of the analytic requirements of Theorems~\ref{thm:sensitivitypriv} and~\ref{thm:objectivepriv} on the loss function.  Because the requirements of Theorem \ref{thm:objectivepriv} are stricter, it may be possible to use an approximate loss in Algorithm 1 that would not be admissible in Algorithm 2.

\section{Generalization performance}
\label{sec:utility}
In this section, we provide guarantees on the performance of privacy-preserving
ERM algorithms in Section~\ref{sec:privacy}.  We provide these bounds for
$L_2$-regularization.  To quantify this performance, we will assume that
the $n$ entries in the dataset $\D$ are drawn i.i.d. according to a fixed
distribution $P(\mbf{x},y)$.  We measure the performance of these
algorithms by the number of samples $n$ required to acheive error
$\bestloss + \geneps$, where $\bestloss$ is the loss of a reference ERM
predictor $\fopt$.  This resulting bound on $\geneps$ will depend on the norm $\norm{\fopt}$ of this predictor.  By choosing an upper bound $\nu$ on the norm, we can interpret the result as saying that the privacy-preserving classifier will have error $\geneps$ more than that of any predictor with $\norm{\fopt} \le \nu$.

Given a distribution $P$ the expected loss $L(\mbf{f})$ for a classifier $\mbf{f}$ is
	\begin{align}
	L(\f) = \expe_{(\x, y) \sim P}\left[ \loss(\f^T \x, y) \right].
	\end{align}
The sample complexity for generalization error $\geneps$ against a classifier $\fopt$ is number of samples $n$ required to achieve error $L(\fopt) + \geneps$ under any data distribution $P$.  We would like the sample complexity to be low.

For a fixed $P$ we define the following function, which will be useful in our analysis:
	\begin{align}
	\expobj(\f) = L(\f) + \frac{\reg}{2} \norm{\f}^2.
	\end{align}
The function $\expobj(\f)$ is the expectation (over $P$) of the non-private $L_2$-regularized ERM objective evaluated at $\f$.

For non-private ERM, \cite{ShalevShwartzS:08icml} show that for a given $\fopt$ with loss $L(\fopt) = \bestloss$, if the number of data points
	\begin{align}
	n > C \frac{||\fopt||^2\log(\frac{1}{\delta})}{\geneps^2}
	\end{align}
for some constant $C$, then the excess loss of the $L_2$-regularized SVM solution $\mbf{f}_{svm}$ satisfies $L(\mbf{f}_{svm}) \le L(\fopt) + \geneps$.  This order growth will hold for our results as well.  It also serves as a reference against which we can compare the additional burden on the sample complexity imposed by the privacy constraints.

For most learning problems, we require the generalization error $\geneps < 1$. Moreover, it is also typically the case that for more difficult learning problems,
$||\fopt||$ is higher. For example, for regularized SVM, $\frac{1}{||\fopt||}$ is the
margin of classification, and as a result, $||\fopt||$ is higher for
learning problems with smaller margin. From the bounds provided in this
section, we note that the dominating term in the sample requirement for
objective perturbation has a better dependence on $||\fopt||$ as well as
$\frac{1}{\geneps}$; as a result, for more difficult learning problems, we
expect objective perturbation to perform better than output perturbation.

\subsection{Output perturbation}

First, we provide performance guarantees for Algorithm \ref{alg:output}, by
providing a bound on the number of samples required for Algorithm \ref{alg:output} to
produce a classifier with low error. 

\begin{definition}
A function $g(z) : \bbR \to \bbR$ is $\c$-Lipschitz if for all pairs $(z_1,z_2)$ we have $|g(z_1) - g(z_2)| \leq \c |z_1 - z_2|$.
\end{definition}

Recall that if a function $g(z)$ is differentiable, with $|g'(z)| \leq r$ for all $z$, then $g(z)$ is also $r$-Lipschitz.

\begin{theorem}	\label{thm:sensaccuracy}
Let $N(\f) = \frac{1}{2}||\f||^2$, and let $\fopt$ be a classifier such that $L(\fopt) = \bestloss$, and let $\delta > 0$.  If $\loss$ is differentiable and continuous with $|\loss'(z)| \le 1$, the derivative $\loss'$ is $\c$-Lipschitz, the data $\D$ is drawn i.i.d. according to $P$, then there exists a constant $C$ such that if the number of training samples satisfies
	\begin{align} 
	n > C \max\left(
		\frac{||\fopt||^2\log(\frac{1}{\delta})}{\geneps^2},
		\frac{d \log (\frac{d}{\delta})||\fopt||}{\geneps \priveps}, 
		\frac{d\log(\frac{d}{\delta}) \c^{1/2} ||\fopt||^2}{\geneps^{3/2}\priveps}
		\right),
	\label{eq:outputGen}
	\end{align}
where $d$ is the dimension of the data space, then the output $\fpriv$ of Algorithm \ref{alg:output} satisfies
	\begin{align}
	\prob\left( L(\fpriv) \le \bestloss + \geneps \right) \ge 1 - 2 \delta.
	\end{align}
\end{theorem}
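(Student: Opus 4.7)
The plan is to decompose the excess loss of $\fpriv$ into the excess loss of the non-private regularized ERM solution plus a perturbation cost from the added noise, and to bound each piece separately while choosing the regularization parameter $\regrtr$ to match all three arguments of the max in~\eqref{eq:outputGen}. Let $\frtr=\argmin_\f \obj(\f,\D)$ be the non-private $L_2$-regularized ERM solution, so by construction of Algorithm~\ref{alg:output}, $\fpriv = \frtr + \b$. Then
\begin{align*}
L(\fpriv) - L(\fopt) \;=\; \underbrace{L(\fpriv) - L(\frtr)}_{\text{noise cost}} \;+\; \underbrace{L(\frtr) - L(\fopt)}_{\text{regularized ERM error}} .
\end{align*}

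For the regularized ERM error I would invoke the standard generalization bound of \cite{ShalevShwartzS:08icml} for $L_2$-regularized ERM with a convex $1$-Lipschitz loss, which gives with probability $\ge 1-\delta$
\begin{align*}
L(\frtr) \;\le\; L(\fopt) + \tfrac{\regrtr}{2}\|\fopt\|^2 + O\!\bigl(\log(1/\delta)/(\regrtr n)\bigr).
\end{align*}
Balancing the bias and variance terms makes this $\le \geneps/3$ and yields the first argument of the max; this fixes $\regrtr$ (up to constants) in terms of $\|\fopt\|$, $\geneps$, and $n$. For the noise cost I would use that $\b$ is independent of $\frtr$ with density $\propto e^{-\beta \|\b\|}$ where $\beta = n\regrtr\priveps/2$, so $\|\b\|$ is $\Gamma(d,\beta)$-distributed and a standard tail bound gives $\|\b\| \le C\, d\log(d/\delta)/(n\regrtr\priveps)$ with probability $\ge 1-\delta$. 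I would bound $L(\fpriv)-L(\frtr)$ in two complementary ways and pick the weaker sample-size requirement.

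The first bound uses only $|\loss'|\le 1$ and $\|\x\|\le 1$: the loss is $1$-Lipschitz in $\f^T\x$, so $|L(\fpriv) - L(\frtr)| \le \|\b\|$, and requiring $\|\b\| \le \geneps/3$ produces the second argument of the max. The second bound uses the $\c$-Lipschitz property of $\loss'$, i.e.\ $\c$-smoothness of $\loss$, via a second-order Taylor expansion:
\begin{align*}
L(\fpriv) - L(\frtr) \;\le\; \nabla L(\frtr)^T \b + \tfrac{\c}{2}\,\bbE\!\left[(\b^T\x)^2\right] \;\le\; \nabla L(\frtr)^T \b + \tfrac{\c}{2}\|\b\|^2 .
\end{align*}
The first-order optimality condition $\nabla \hat L(\frtr) = -\regrtr\frtr$, uniform convergence of $\nabla\hat L$ to $\nabla L$, and the fact that $\bbE_\b[\b]=0$ allow the linear term to be absorbed, after which the constraint $\c\|\b\|^2/2 \le \geneps/3$ yields the third argument of the max. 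A union bound over the three high-probability events (ERM generalization, noise magnitude, and the gradient concentration used to tame the linear term) gives the result with probability $\ge 1-2\delta$ after absorbing log factors into $\delta$. The main obstacle I expect is cleanly controlling the linear term $\nabla L(\frtr)^T \b$ in the smoothness-based bound: since $\frtr$ is data-dependent one cannot simply take expectation over $\b$, so one must combine first-order optimality of $\frtr$ with a uniform-convergence-style bound on $\nabla L - \nabla \hat L$ and control $\|\frtr\|$ in terms of $\|\fopt\|$ to obtain a high-probability estimate.
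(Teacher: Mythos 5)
Your overall architecture---decompose the excess risk into an ERM generalization term plus a noise cost, tune $\reg \approx \geneps/\norm{\fopt}^2$, and use the $\Gamma(d,\theta)$ tail bound $\norm{\b} \lesssim d\log(d/\delta)/(n\reg\priveps)$---is in the right family, but the step controlling the noise cost has a genuine gap, and it is precisely the step the paper's proof is engineered to avoid. You measure the perturbation cost in the \emph{unregularized true} risk, $L(\fpriv)-L(\f^{\ast})$ with $\f^{\ast} = \argmin_{\f} \obj(\f,\D)$, and $\f^{\ast}$ is not a critical point of $L$, so the first-order term $\nabla L(\f^{\ast})^T\b$ survives. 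Your proposed controls do not make it small enough: first-order optimality gives that the empirical loss gradient at $\f^{\ast}$ equals $-\reg\f^{\ast}$, and since $\norm{\f^{\ast}} = O(1/\sqrt{\reg})$, Cauchy--Schwarz yields $|\nabla L(\f^{\ast})^T\b| = O\bigl(\sqrt{\reg}\,\norm{\b}\bigr)$ even before the uniform-convergence correction. With $\reg = \geneps/\norm{\fopt}^2$ and the tail bound on $\norm{\b}$, forcing this below $\geneps$ demands $n \gtrsim d\log(d/\delta)\norm{\fopt}/(\geneps^{3/2}\priveps)$, which exceeds the theorem's second term by a factor $\geneps^{-1/2}$ and is dominated by the third term only when $\c^{1/2}\norm{\fopt} \ge 1$; for small $\c$ (e.g., a heavily smoothed Huber loss) your route does not recover \eqref{eq:outputGen}. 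Exploiting spherical symmetry of $\b$ to gain a $\sqrt{\log(1/\delta)/d}$ factor improves the $d$-dependence but not the $\geneps^{-3/2}$, and the appeal to $\bbE_{\b}[\b]=0$ cannot rescue a high-probability statement. Relatedly, your Lipschitz fallback $|L(\fpriv)-L(\f^{\ast})| \le \norm{\b}$ yields $n \gtrsim d\log(d/\delta)\norm{\fopt}^2/(\geneps^2\priveps)$, not the second argument of the max as you assert---that argument is smaller by a factor of $\geneps/\norm{\fopt}$.

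The paper sidesteps the linear term entirely by measuring the noise cost in the \emph{empirical regularized} objective, whose gradient vanishes at $\f^{\ast}$: the mean value theorem together with the $\c$-Lipschitzness of $\loss'$ and the Hessian bound on $N$ gives the purely quadratic bound $\obj(\fpriv,\D) - \obj(\f^{\ast},\D) \le (\c+\reg)\norm{\b}^2$ up to constants (Lemma~\ref{lem:trainerrorsens}), and the fast-rate result of \cite{SridharanSS:07nips} lifts this empirical gap to the population regularized risk, $\expobj(\fpriv)-\expobj(\frtr) \le 2\bigl(\obj(\fpriv,\D)-\obj(\f^{\ast},\D)\bigr) + O\bigl(\log(1/\delta)/(\reg n)\bigr)$, inside the decomposition of \cite{ShalevShwartzS:08icml}. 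Note that the $\reg\norm{\b}^2$ piece of the quadratic bound is exactly what generates the theorem's second max term $d\log(d/\delta)\norm{\fopt}/(\geneps\priveps)$; your scheme has no counterpart to it. If you replace your noise-cost step by this empirical-objective argument (keeping the rest of your outline), the proof closes.
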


\begin{proof}
Let 
	\begin{align*}
	\frtr &= \argmin_{\f} \expobj(\f) \\
	\f^{\ast} &= \argmin_{\f} \obj(\f,\D),
	\end{align*} 
and $\fpriv$ denote the output of Algorithm \ref{alg:output}.  Using the analysis method of  \cite{ShalevShwartzS:08icml} shows
	\begin{align}
	L(\fpriv) &=  L(\fopt) + ( \expobj(\fpriv) - \expobj(\frtr)) 
	   + ( \expobj(\frtr) - \expobj(\fopt)) + \frac{\reg}{2} ||\fopt||^2 - \frac{\reg}{2} ||\fpriv||^2.
	\label{eqn:gen}
	\end{align}
We will bound the terms on the right-hand side of (\ref{eqn:gen}).

For a regularizer $N(\f) = \frac{1}{2} ||\f||^2$ the Hessian satisfies $||\nabla^2 N(\f)||_2 \leq 1$ . Therefore, from Lemma \ref{lem:trainerrorsens}, with probability $1 - \delta$ over the privacy mechanism,
	\begin{align*}
	\obj(\fpriv, \D) - \obj(\f^*, \D) \leq \frac{8 d^2 \log^2(d/\delta)(\c +
\reg)}{\reg^2 n^2 \priveps^2}.
	\end{align*}
Furthermore, the results of \cite{SridharanSS:07nips} show that with probability $1 - \delta$ over the choice of the data distribution,
	\begin{align*}
	\expobj(\fpriv) - \expobj(\frtr) 
		\leq 
		2 (\obj(\fpriv, \D) - \obj(\f^*,\D)) 
			+ O\left(\frac{\log(1/\delta)}{\reg n}\right).
	\end{align*}
The constant in the last term depends on the derivative of the loss and the
bound on the data points, which by assumption are bounded.  Combining the
preceeding two statements, with probability $1 - 2 \delta$ over the noise in the privacy mechanism and the data distribution, the second term in the right-hand-side of  (\ref{eqn:gen}) is at most:
	\begin{align}
	\expobj(\fpriv) - \expobj(\frtr) 
		\le \frac{16 d^2 \log^2(d/\delta)(\c + \reg)}{\reg^2 n^2 \priveps^2} 
				+ O\left(\frac{\log(1/\delta)}{\reg n}\right).
	\label{eq:ssdiff}
	\end{align}
By definition of $\frtr$, the difference $( \expobj(\frtr) - \expobj(\fopt)) \le 0$.  Setting $\reg = \frac{\geneps}{||\fopt||^2}$ in (\ref{eqn:gen}) and using (\ref{eq:ssdiff}), we obtain
	\begin{align}
	L(\fpriv) \le L(\fopt) + 
		\frac{16 ||\fopt||^4 d^2 \log^2(d/\delta)(\c + \geneps/||\fopt||^2)
		}{
		n^2 \geneps^2 \priveps^2} 
				+ O\left(||\fopt||^2 \frac{\log(1/\delta)}{n \geneps}\right)
		+ \frac{\geneps}{2}.
	\end{align}
Solving for $n$ to make the total excess error equal to $\geneps$ yields (\ref{eq:outputGen}).
\end{proof}

\begin{lemma}
Suppose $N(\cdot)$ is doubly differentiable with $||\nabla^2 N(\f)||_2 \leq \eta$ for all $\f$, and suppose that $\loss$ is differentiable and has continuous and $\c$-Lipschitz derivatives.
Given training data $\D$, let $\f^*$ be a classifier that minimizes $\obj(\f, \D)$ and let $\fpriv$ be the classifier output by Algorithm \ref{alg:output}.  Then
	\begin{align} 
	\prob_{\mbf{b}} \left(
		\obj(\fpriv,\D) 
			\leq \obj(\f^*, \D) 
				+ \frac{2 d^2(\c + \reg \eta) \log^2(d/\delta)}{\reg^2 n^2 \priveps^2} 
		\right) 
	\ge 1 - \delta,
	\end{align}
where the probability is taken over the randomness in the noise $\mbf{b}$ of Algorithm \ref{alg:output}.
\label{lem:trainerrorsens}
\end{lemma}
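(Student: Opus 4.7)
The plan is to bound the excess empirical objective by combining (i) smoothness of $\obj(\cdot,\D)$ at its minimizer $\f^{\ast}$ with (ii) a tail bound on the noise magnitude $\|\b\|$, since Algorithm~\ref{alg:output} simply sets $\fpriv = \f^{\ast} + \b$.

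First I would establish a quadratic upper bound on $\obj(\fpriv,\D) - \obj(\f^{\ast},\D)$. The gradient of the empirical loss $\frac{1}{n}\sum_i \loss(y_i \f^T \x_i)$ has Jacobian $\frac{1}{n}\sum_i y_i^2 \loss''(y_i\f^T\x_i)\x_i\x_i^T$ in $\f$ (wherever $\loss''$ exists); since $\loss'$ is $\c$-Lipschitz and $\|\x_i\|\le 1$, the loss part of $\nabla\obj(\cdot,\D)$ is $\c$-Lipschitz, while the regularizer contributes at most $\reg\eta$ by the Hessian bound on $N$. Thus $\nabla \obj(\cdot,\D)$ is $(\c+\reg\eta)$-Lipschitz, which is equivalent to the standard smoothness inequality
\begin{equation*}
\obj(\f+\b,\D) \le \obj(\f,\D) + \nabla\obj(\f,\D)^T\b + \tfrac{\c+\reg\eta}{2}\|\b\|^2.
\end{equation*}
Applied at $\f = \f^{\ast}$, where $\nabla\obj(\f^{\ast},\D)=\mathbf{0}$ by optimality, this yields $\obj(\fpriv,\D)-\obj(\f^{\ast},\D) \le \tfrac{\c+\reg\eta}{2}\|\b\|^2$.

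Next I would control $\|\b\|$ in high probability. By construction the noise has density $\propto e^{-\beta\|\b\|}$ on $\bbR^d$ with $\beta = n\reg\priveps/2$, so integrating out the uniformly distributed direction shows that $\|\b\|$ follows a Gamma distribution with shape $d$ and rate $\beta$. A standard Chernoff bound (or direct integration of the Gamma tail) gives $\prob(\|\b\| > t) \le \delta$ whenever $t$ is a constant times $d\log(d/\delta)/\beta$. Substituting $\beta = n\reg\priveps/2$ yields $\|\b\|^2 \le \tfrac{c\,d^2\log^2(d/\delta)}{n^2\reg^2\priveps^2}$ with probability at least $1-\delta$ for a universal constant.

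Combining these two steps gives $\obj(\fpriv,\D)-\obj(\f^{\ast},\D) \le \tfrac{2d^2(\c+\reg\eta)\log^2(d/\delta)}{\reg^2 n^2 \priveps^2}$ as claimed, after absorbing universal constants. The main obstacle is the second step: producing the right dimension-dependent tail bound for a Gamma-distributed norm with a clean logarithmic dependence on $1/\delta$ and with constants matching the lemma statement; the smoothness inequality itself is routine once one observes that $\loss''$ need only exist almost everywhere for the Lipschitz-gradient formulation to go through.
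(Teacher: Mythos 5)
Your proof is correct and follows essentially the same route as the paper's: both bound $\obj(\fpriv,\D)-\obj(\f^*,\D)$ deterministically using the $(\c+\reg\eta)$-Lipschitz gradient of $\obj(\cdot,\D)$ together with $\nabla\obj(\f^*,\D)=\mathbf{0}$, and then apply the same $\Gamma(d,\,2/(n\reg\priveps))$ tail bound $\|\b\|\le 2d\log(d/\delta)/(n\reg\priveps)$, the paper merely unpacking your descent-lemma step by hand via the Mean Value Theorem and Cauchy--Schwarz. In fact your smoothness inequality, with its factor of $\tfrac{1}{2}$, reproduces the stated constant $2$ exactly with no absorption of universal constants, and your remark that $\loss''$ is never needed is consistent with the paper, which likewise argues directly from the Lipschitz condition on $\loss'$.
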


Note that when $\loss$ is doubly differentiable, $\c$ is an upper bound on
the double derivative of $\loss$, and is the same as the constant $\c$ in
Theorem~\ref{thm:objectivepriv}. 

\begin{proof}
Let $\D = \{ (\x_1, y_1), \ldots, (\x_n, y_n)\}$, and recall that $||\x_i|| \leq 1$, and $|y_i| \leq 1$. 
As $N(\cdot)$ and $\loss$ are differentiable, we use the Mean Value Theorem to show that for some $t$ between $0$ and $1$,
	\begin{align}
 	\obj(\fpriv, \D) - \obj(\f^*, \D)  
		&= (\fpriv - \f^*)^T \nabla \obj(t\f^* + (1 - t)\fpriv) \nonumber \\
		&\leq  ||\fpriv - \f^*|| \cdot ||\nabla \obj(t\f^* + (1 - t) \fpriv)||, 
	\label{eqn:step1}
	\end{align}
where the second step follows by an application of the Cauchy-Schwartz inequality. 
Recall that 
	\begin{align*}
	\nabla \obj(\f, \D) = \reg \nabla N(\f) 
		+ \frac{1}{n} \sum_i y_i \loss'(y_i \f^T \x_i) \x_i.
	\end{align*}
Moreover, recall that $\nabla \obj(\f^*, \D) = 0$, from the optimality of $\f^*$. Therefore,
	\begin{align}
	\nabla \obj(t\f^* + (1 - t)\fpriv, \D) 
	&= \nabla \obj(\f^*, \D) -  \reg (\nabla N(\f^*) 
		- \nabla N(t\f^* + (1 - t)\fpriv)) \nonumber \\
	&\hspace{0.2in} 
		- \frac{1}{n} \sum_i y_i \left(\loss'(y_i (\f^*)^T \x_i) 
		- \loss'(y_i (t \f^* + (1 - t)\fpriv)^T \x_i)\right) \x_i.
	\label{eq:gradconv}
	\end{align}
Now, from the Lipschitz condition on $\loss$, for each $i$ we can upper bound each term in the summation above:
	\begin{align}
	\norm{ y_i \left( \loss'(y_i (\f^*)^T \x_i) - \loss'(y_i (t \f^* + (1 - t)\fpriv)^T \x_i) \right) \x_i }
	& \nonumber \\
	&\hspace{-2in} \le
		|y_i| \cdot ||\x_i|| \cdot |\loss'(y_i (\f^*)^T \x_i) - \loss'(y_i (t \f^* + (1 - t) \fpriv)^T \x_i)| \nonumber \\
	&\hspace{-2in} \leq
		|y_i| \cdot ||\x_i|| \cdot \c \cdot |y_i (1 - t) (\f^* - \fpriv)^T \x_i| \nonumber \\
	&\hspace{-2in} \leq
		\c(1-t)|y_i|^2 \cdot ||\x_i||^2 \cdot ||\f^* - \fpriv|| \nonumber \\
	&\hspace{-2in} \leq
		\c (1 - t) ||\f^* - \fpriv||.
		\label{eq:lossconvnorm}
	\end{align}
The third step follows because $\loss'$ is $\c$-Lipschitz and the last 
step follows from the bounds on $|y_i|$ and $||\x_i||$.
Because $N$ is doubly differentiable, we can apply the Mean Value
Theorem again to conclude that
	\begin{align} 
		||\nabla N(t\f^* + (1-t)\fpriv) - \nabla N(\f^*)|| 
		\leq 
		(1 - t)||\fpriv - \f^*||\cdot ||\nabla^2 N(\f'')||_2
	\label{eq:regconvnorm}
	\end{align}
for some $\f'' \in \bbR^d$.

As $0 \leq t \leq 1$, we can combine (\ref{eq:gradconv}), (\ref{eq:lossconvnorm}), and (\ref{eq:regconvnorm}) to obtain
	\begin{align}
	\norm{\nabla \obj(t\f^* + (1 - t)\fpriv, \D)} 
	& \leq 
		\norm{\reg (\nabla N(\f^*) - \nabla N(t\f^* + (1 - t)\fpriv))} \nonumber \\
	&\hspace{0.2in} +
		\norm{\frac{1}{n} \sum_i y_i (\loss'(y_i (\f^*)^T \x_i) 
			- \loss'(y_i (t \f^* + (1 - t)\fpriv)^T \x_i)) \x_i} \nonumber \\
	&\le 
		(1-t) \norm{\fpriv - \f^*} \cdot \left( \reg \eta + \frac{1}{n} \cdot n \cdot \c \right) \nonumber \\
	&\le
		\norm{\fpriv - \f^*} (\reg \eta + \c).
	\label{eqn:step2}
\end{align}

From the definition of Algorithm \ref{alg:output}, $\fpriv - \f^* = \b$, where $\b$ is the noise vector.  Now we can apply Lemma \ref{lem:gammaproperty} to $||\fpriv - \f^*||$, with parameters $k = d$, and $\theta = \frac{2}{\reg n \priveps}$. From Lemma~\ref{lem:gammaproperty}, with
probability $1 - \delta$, $||\fpriv - \f^*|| \leq \frac{2 d\log(\frac{d}{\delta})}{\reg n \priveps}$. The Lemma follows by 
combining this with Equations~\ref{eqn:step2} and \ref{eqn:step1}.
\end{proof}

\begin{lemma}
Let $X$ be a random variable drawn from the distribution $\Gamma(k, \theta)$, where $k$ is an integer. Then,
	\begin{align}
	\prob\left( X < k \theta \log\left(\frac{k}{\delta}\right) \right) \geq 1- \delta.			\end{align}
\label{lem:gammaproperty}
\end{lemma}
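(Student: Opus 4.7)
The plan is to use the fact that $\Gamma(k,\theta)$ with integer $k$ is the distribution of a sum of $k$ i.i.d.\ exponential random variables with mean $\theta$, and then control the sum by a simple union bound on the individual exponentials.

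More concretely, write $X = Y_1 + Y_2 + \cdots + Y_k$, where each $Y_i$ is exponentially distributed with mean $\theta$, so $\Pr(Y_i \ge t) = e^{-t/\theta}$ for $t \ge 0$. The key observation is the trivial implication
\begin{align*}
\max_i Y_i < \theta \log(k/\delta) \ \Longrightarrow\ X < k \theta \log(k/\delta),
\end{align*}
which lets us control the tail of $X$ by the tail of the maximum.

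Applying a union bound and the exponential tail gives
\begin{align*}
\Pr\!\left( X \ge k \theta \log(k/\delta) \right)
&\le \Pr\!\left( \max_i Y_i \ge \theta \log(k/\delta) \right) \\
&\le \sum_{i=1}^{k} \Pr\!\left( Y_i \ge \theta \log(k/\delta) \right)
= k \cdot e^{-\log(k/\delta)} = \delta,
\end{align*}
and taking complements yields the claimed bound $\Pr(X < k \theta \log(k/\delta)) \ge 1 - \delta$.

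There is no real obstacle here; the only thing to be careful about is which parametrization of the Gamma is being used (shape $k$, scale $\theta$, so $\mathbb{E}[X] = k\theta$), since that determines whether the exponentials have rate $1/\theta$ or $\theta$. Under the standard shape/scale parametrization the calculation above goes through verbatim. An alternative route would be a Chernoff bound on $X$, but the union-bound argument is both tighter for this particular tail level and matches the form of the inequality stated in the lemma.
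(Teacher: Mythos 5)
Your proposal is correct and follows essentially the same route as the paper: both decompose $X$ into a sum of $k$ i.i.d.\ exponentials with mean $\theta$ and reduce the tail of the sum to the event that every summand is below $\theta \log(k/\delta)$. The only cosmetic difference is the final estimate --- you apply a union bound, $\prob(X \ge k\theta\log(k/\delta)) \le k \cdot (\delta/k) = \delta$, while the paper uses independence to compute $(1-\delta/k)^k \ge 1-\delta$; the two are interchangeable one-liners (yours does not even need independence), so there is nothing substantive to add.
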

	
\begin{proof}
Since $k$ is an integer, we can decompose $X$ distributed according to $\Gamma(k,\theta)$ as a summation
	\begin{align}
	X = X_1 + \ldots + X_k,
	\end{align}
where $X_1, X_2, \ldots, X_k$ are independent exponential random variables with mean $\theta$.  For each $i$ we have $\prob( X_i \ge \theta \log(k/\delta) ) = \delta/k$.  Now,
	\begin{align}
	\prob( X < k \theta \log(k/\delta) ) 
	&\ge \prob( X_i < \theta \log(k/\delta)\ i = 1, 2, \ldots, k) \\
	&= (1 - \delta/k)^k \\
	&\ge 1 - \delta.
	\end{align}
\end{proof}

\subsection{Objective perturbation}

We now establish performance bounds on Algorithm \ref{alg:objective}. The bound can be summarized as follows.

\begin{theorem}
Let $N(\f) = \frac{1}{2}||\f||^2$, and let $\fopt$ be a classifier with
expected loss $L(\fopt) = \bestloss$.  Let $\loss$ be convex, doubly
differentiable, and let its derivatives satisfy $|\loss'(z)| \leq 1$ and
$|\loss''(z)|\leq \c$ for all $z$.  Then there exists a constant $C$ such
that for $\delta > 0$, if the $n$ training samples in $\D$ are drawn i.i.d.
according to $P$, and if 
	\begin{align} 
	n > C \max\left(
		\frac{||\fopt||^2 \log(1/\delta)}{\geneps^2},
		\frac{\c||\fopt||^2}{\geneps \priveps}, 
		\frac{d\log(\frac{d}{\delta}) ||\fopt||}{\geneps \priveps}
		\right),
	\label{eq:objSampCmplx}
	\end{align}
then the output $\fpriv$ of Algorithm \ref{alg:objective} satisfies
	\begin{align}
	\prob\left( L(\fpriv) \le \bestloss + \geneps \right) \ge 1 - 2 \delta.
	\end{align}
\label{thm:objaccuracy}
\end{theorem}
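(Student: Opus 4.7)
The plan is to mimic the decomposition used in the proof of Theorem~\ref{thm:sensaccuracy}, substituting a training-error bound tailored to objective perturbation for Lemma~\ref{lem:trainerrorsens}. Writing $\frtr = \argmin_{\f} \expobj(\f)$ and $\f^\ast = \argmin_{\f} \obj(\f,\D)$, the error decomposes as
\begin{align*}
L(\fpriv) = L(\fopt) + \bigl(\expobj(\fpriv) - \expobj(\frtr)\bigr) + \bigl(\expobj(\frtr) - \expobj(\fopt)\bigr) + \frac{\reg}{2}\|\fopt\|^2 - \frac{\reg}{2}\|\fpriv\|^2.
\end{align*}
The third term is non-positive by optimality of $\frtr$ and the last is non-positive, so the second carries the weight. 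As in the output-perturbation proof, I would invoke the stability bound of \cite{SridharanSS:07nips} to obtain, with probability $1-\delta$ over $\D$,
\begin{align*}
\expobj(\fpriv) - \expobj(\frtr) \le 2\bigl(\obj(\fpriv,\D) - \obj(\f^\ast,\D)\bigr) + O\!\left(\frac{\log(1/\delta)}{\reg n}\right).
\end{align*}

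The heart of the argument is the excess empirical objective $\obj(\fpriv,\D) - \obj(\f^\ast,\D)$. I would first restrict to the regime in which $\extra = 0$ inside Algorithm~\ref{alg:objective}; then $\fpriv$ minimizes $\obj(\f,\D) + \frac{1}{n}\b^T\f$, so the first-order condition reads $\nabla \obj(\fpriv,\D) = -\b/n$. Because $N$ is $1$-strongly convex and $\loss$ is convex, $\obj(\cdot,\D)$ is $\reg$-strongly convex, so Lemma~\ref{lem:sensitivity} applied with $G(\f) = \obj(\f,\D)$ and $g(\f) = \b^T\f/n$ (whose gradient has constant norm $\|\b\|/n$) yields $\|\fpriv - \f^\ast\| \le \|\b\|/(\reg n)$. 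Combining this with the convexity inequality $\obj(\f^\ast,\D) \ge \obj(\fpriv,\D) + \nabla\obj(\fpriv,\D)^T(\f^\ast - \fpriv)$ and $\nabla\obj(\fpriv,\D) = -\b/n$ gives
\begin{align*}
\obj(\fpriv,\D) - \obj(\f^\ast,\D) \le \frac{1}{n}\b^T(\f^\ast - \fpriv) \le \frac{\|\b\|}{n}\|\fpriv - \f^\ast\| \le \frac{\|\b\|^2}{\reg n^2}.
\end{align*}

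Finally, $\|\b\|$ is $\Gamma(d, 2/\priveps')$-distributed, so Lemma~\ref{lem:gammaproperty} yields $\|\b\| \le 2d\log(d/\delta)/\priveps'$ with probability $1-\delta$ over the privacy draw. Setting $\reg = \geneps/\|\fopt\|^2$ and solving for the $n$ that makes every contribution to the excess loss at most $\geneps$ produces the first and third terms inside the maximum in (\ref{eq:objSampCmplx}), exactly as in the proof of Theorem~\ref{thm:sensaccuracy}. The main obstacle, and the source of the second term $n \gtrsim \c\|\fopt\|^2/(\geneps\priveps)$, is justifying the $\extra = 0$ assumption: by the definition of Algorithm~\ref{alg:objective}, $\extra = 0$ iff $\log(1 + 2\c/(n\reg) + \c^2/(n^2\reg^2)) < \priveps$, and substituting the chosen $\reg$ shows that this is exactly what the second lower bound on $n$ buys; in the same regime one easily checks $\priveps' \ge \priveps/2$, which controls the final constant. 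In other words, the sample-complexity bound is chosen so as to remain in the ``easy'' regime $\extra = 0$, sidestepping a separate and more delicate analysis of the augmented-regularization case where $\nabla g$ in Lemma~\ref{lem:sensitivity} is no longer bounded uniformly in $\f$.
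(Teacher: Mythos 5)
Your proposal is correct and follows essentially the same route as the paper: the same decomposition via $\expobj$, the same stability bound of \cite{SridharanSS:07nips}, and the same training-error bound $\obj(\fpriv,\D) - \obj(\f^\ast,\D) \le \|\b\|^2/(\reg n^2)$, which you re-derive inline (via the first-order condition plus convexity, Lemma~\ref{lem:sensitivity}, and Lemma~\ref{lem:gammaproperty}) but which is exactly the paper's Lemma~\ref{lem:trainerrorobj}. Your observation that the second term in \eqref{eq:objSampCmplx} exists precisely to keep the algorithm in the $\extra = 0$ regime with $\priveps' \ge \priveps/2$ is also how the paper argues it.
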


\begin{proof}
Let 
	\begin{align*}
	\frtr &= \argmin_{\f} \expobj(\f) \\
	\f^{\ast} &= \argmin_{\f} \obj(\f,\D),
	\end{align*}
and $\fpriv$ denote the output of Algorithm \ref{alg:output}.  As in Theorem \ref{thm:sensaccuracy}, the analysis of \cite{ShalevShwartzS:08icml} shows
	\begin{align}
	L(\fpriv) &=  L(\fopt) + ( \expobj(\fpriv) - \expobj(\frtr)) 
	   + ( \expobj(\frtr) - \expobj(\fopt)) + \frac{\reg}{2} ||\fopt||^2 - \frac{\reg}{2} ||\fpriv||^2.
	\label{eqn:gen2}
	\end{align}
We will bound each of the terms on the right-hand-side.

If $n > \frac{\c ||\fopt||^2}{\geneps \priveps}$ and $\reg > \frac{\geneps}{4 ||\fopt||^2}$, then $n \reg > \frac{\c}{4 \priveps}$, so from the definition of $\priveps'$ in Algorithm \ref{alg:objective}, 
	\begin{align} 
	\priveps' = \priveps - 2 \log\left(1 + \frac{\c}{n \reg}\right) 
	= \priveps - 2 \log\left(1 + \frac{\priveps}{4}\right) 
	\geq \priveps - \frac{\priveps}{2},
	\end{align}
where the last step follows because $\log(1 + x) \le x$ for $x \in [0,1]$. Note that for these values of $\reg$ we have $\priveps' > 0$.

Therefore, we can apply Lemma \ref{lem:trainerrorobj} to conclude that with
probability at least $1 - \delta$ over the privacy mechanism,
	\begin{align}
	\obj(\fpriv, \D) - \obj(\f^*, \D) \leq \frac{4 d^2 \log^2(d/\delta)}{\reg
n^2 \priveps^2}.
	\end{align}
From \cite{SridharanSS:07nips}, 
	\begin{align} 
	\expobj(\fpriv) - \expobj(\frtr) 
	&\leq 2 (\obj(\fpriv, \D) - \obj(\f^*, \D)) 
		+ O\left(\frac{\log(1/\delta)}{\reg n}\right) \\
	&\le \frac{8 d^2 \log^2(d/\delta)}{\reg n^2 \priveps^2} +
O\left(\frac{\log(1/\delta)}{\reg n}\right).
	\end{align}

By definition of $\f^*$, we have $\expobj(\frtr) - \expobj(\fopt) \le 0$.
If $\reg$ is set to be $\frac{\geneps}{||\fopt||^2}$, then, the fourth
quantity in Equation \ref{eqn:gen2} is at most $\frac{\geneps}{2}$. The
theorem follows by solving for $n$ to make the total excess error at most
$\geneps$.
\end{proof}

The following lemma is analogous to Lemma~\ref{lem:trainerrorsens}, and it establishes a bound on the distance between the output of Algorithm \ref{alg:objective}, and non-private regularized ERM. We note that this bound holds when Algorithm \ref{alg:objective} has $\priveps' > 0$, that is, when $\extra = 0$. Ensuring that $\extra = 0$ requires an additional condition on $n$, which is stated in Theorem~\ref{thm:objaccuracy}. 

\begin{lemma}
Let $\priveps' > 0$.  Let $\f^* = \argmin \obj(\f, \D)$, and let $\fpriv$ be the classifier output by Algorithm \ref{alg:objective}.   If
$N(\cdot)$ is $1$-strongly convex and globally differentiable, and if $\loss$ is convex and
differentiable at all points, with $|\loss'(z)|\leq 1$ for all $z$, then
	\begin{align} 
	\prob_{\mbf{b}} \left(
		\obj(\fpriv, \D)  \leq \obj(\f^*, \D) + \frac{4d^2 \log^2(d/\delta)}{\reg n^2 \priveps^2}
		\right) 
	\ge 1 - \delta,
	\end{align}
where the probability is taken over the randomness in the noise $\mbf{b}$ of Algorithm \ref{alg:objective}.
\label{lem:trainerrorobj}
\end{lemma}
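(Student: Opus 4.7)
The plan is to exploit the explicit two-step optimality structure of Algorithm~\ref{alg:objective}: the output $\fpriv$ is the exact minimizer of the perturbed objective $\obj(\f,\D) + \tfrac{1}{n}\b^T\f$ (since $\priveps' > 0$ forces $\extra = 0$), and $\f^*$ is the exact minimizer of $\obj(\f,\D)$. Because $N$ is $1$-strongly convex and $\loss$ is convex, $\obj(\cdot,\D)$ is $\reg$-strongly convex, which lets me use the already-established sensitivity bound (Lemma~\ref{lem:sensitivity}) on how far $\fpriv$ can move from $\f^*$, and then convert that displacement into an objective-value gap. The whole argument is driven by the random magnitude $\|\b\|$, which I will control with Lemma~\ref{lem:gammaproperty}.

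First I would use the minimizing property of $\fpriv$ for the perturbed objective to write
\begin{align*}
\obj(\fpriv,\D) + \frac{1}{n}\b^T\fpriv \;\le\; \obj(\f^*,\D) + \frac{1}{n}\b^T\f^*,
\end{align*}
so that $\obj(\fpriv,\D) - \obj(\f^*,\D) \le \frac{1}{n}\b^T(\f^* - \fpriv) \le \frac{1}{n}\|\b\|\cdot\|\fpriv - \f^*\|$ by Cauchy--Schwarz. Next I would invoke Lemma~\ref{lem:sensitivity} with $G(\f) = \obj(\f,\D)$ and $g(\f) = \tfrac{1}{n}\b^T\f$; since $G$ is $\reg$-strongly convex and $\nabla g(\f) = \b/n$ uniformly, we obtain $\|\fpriv - \f^*\| \le \|\b\|/(\reg n)$. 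Combining these two inequalities yields the deterministic bound
\begin{align*}
\obj(\fpriv,\D) - \obj(\f^*,\D) \;\le\; \frac{\|\b\|^2}{\reg n^2}.
\end{align*}

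It remains to bound $\|\b\|$ with high probability. Because the density of $\b$ is proportional to $e^{-(\priveps'/2)\|\b\|}$ on $\bbR^d$, the scalar $\|\b\|$ is $\Gamma(d,\, 2/\priveps')$-distributed. Applying Lemma~\ref{lem:gammaproperty} gives $\|\b\| \le \frac{2 d \log(d/\delta)}{\priveps'}$ with probability at least $1-\delta$, and squaring and plugging in produces a bound of the form $\frac{4 d^2 \log^2(d/\delta)}{\reg n^2 (\priveps')^2}$, matching the stated lemma (with $\priveps$ in place of $\priveps'$, which is consistent with the regime in which the lemma is applied, where $\priveps' \ge \priveps/2$ up to constants absorbed into the leading factor).

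The main subtlety, rather than an obstacle, is keeping track of which privacy parameter is actually driving the noise: the density is controlled by $\priveps'$ (not $\priveps$), so the bound is really $O(d^2\log^2(d/\delta)/(\reg n^2 (\priveps')^2))$, and the appearance of $\priveps$ in the lemma statement is justified in the proof of Theorem~\ref{thm:objaccuracy}, where the sample-size and regularization conditions force $\priveps'\ge \priveps/2$ so that the two parameters agree up to a constant. The rest of the argument is a routine chaining of optimality, strong convexity, and a tail bound for a Gamma random variable, all of which are already available in the paper.
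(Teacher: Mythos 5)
Your proposal is correct and follows essentially the same route as the paper's own proof: the optimality of $\fpriv$ for the perturbed objective, an application of Lemma~\ref{lem:sensitivity} with $G(\f) = \obj(\f,\D)$ and $g(\f) = \frac{1}{n}\b^T\f$ to get $\norm{\fpriv - \f^*} \le \norm{\b}/(n\reg)$, Cauchy--Schwarz, and the Gamma tail bound of Lemma~\ref{lem:gammaproperty}. In fact you are slightly more careful than the paper, which simply asserts that $\norm{\b}$ is $\Gamma(d, 2/\priveps)$-distributed even though the algorithm samples with parameter $\priveps'/2$; your explicit observation that the bound is natively in terms of $\priveps'$ and that $\priveps' \ge \priveps/2$ in the regime where the lemma is invoked correctly accounts for this discrepancy.
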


\begin{proof}
By the assumption $\priveps' > 0$, the classifier $\fpriv$ minimizes the objective function $\obj(\f, \D) + \frac{1}{n}\b^T \f$, and therefore
	\begin{align} 
	\obj(\fpriv, \D) \leq \obj(\f^*, \D) + \frac{1}{n}\b^T(\f^* - \fpriv).
	\end{align} 
First, we try to bound $||\f^* - \fpriv||$. Recall that $\reg N(\cdot)$ is $\reg$-strongly convex and globally differentiable,
and $\loss$ is convex and
differentiable. We can therefore apply Lemma \ref{lem:sensitivity} with $G(\f) = \obj(\f, \D)$ and $g(\f) = \frac{1}{n} \b^T \f$ to obtain the bound
	\begin{align}
	||\f^* - \fpriv|| \leq \frac{1}{\reg} \norm{\nabla( \frac{1}{n} \b^T \f )} \le \frac{||\b||}{n \reg}.
	\end{align}
Therefore by the Cauchy-Schwartz inequality,
	\begin{align}
	\obj(\fpriv, \D) - \obj(\f^*, \D) \le \frac{||\b||^2}{n^2 \reg}.
	\end{align}
Since $||\b||$ is drawn from a $\Gamma(d, \frac{2}{\priveps})$ distribution, from Lemma \ref{lem:gammaproperty}, with probability $1 - \delta$, $||\b|| \leq \frac{2d\log(d/\delta)}{\priveps}$.  The Lemma follows by plugging this in to the previous equation.
\end{proof}

\subsection{Applications}

In this section, we examine the sample requirement of privacy-preserving
regularized logistic regression and support vector machines. Recall that in
both these cases, $N(\f) = \frac{1}{2} ||\f||^2$.

\begin{corollary}[Logistic Regression]
Let training data $\mc{D}$ be generated i.i.d. according to a distribution $P$ and let
 $\fopt$ be a classifier with expected loss $L(\fopt) = \bestloss$.  Let the loss function 
$\loss = \llr$ defined in Section \ref{sec:PrivLR}.  Then the following two statements hold:
	\begin{enumerate}
	\item There exists a $C_1$ such that if
		\begin{align}
		n > C_1 \max\left(
			\frac{||\fopt||^2 \log(\frac{1}{\delta})}{\geneps^2},
			\frac{d \log (\frac{d}{\delta})||\fopt||}{\geneps \priveps}, 
			\frac{d\log(\frac{d}{\delta}) ||\fopt||^2}{\geneps^{3/2} \priveps}
			\right),
		\end{align}
	then the output $\fpriv$ of Algorithm \ref{alg:output} satisfies
		\begin{align}
		\prob\left( L(\fpriv) \le \bestloss + \geneps \right) \ge 1 - \delta.
		\end{align}
	\item There exists a $C_2$ such that if
		\begin{align}
		n > C \max\left(
			\frac{||\fopt||^2 \log(1/\delta)}{\geneps^2},
			\frac{||\fopt||^2}{\geneps \priveps}, 
			\frac{d\log(\frac{d}{\delta}) ||\fopt||}{\geneps \priveps}
			\right),
		\end{align}
	then the output $\fpriv$ of Algorithm \ref{alg:objective} 
	with $\c=\frac{1}{4}$ satisfies
		\begin{align}
		\prob\left( L(\fpriv) \le \bestloss + \geneps \right) \ge 1 - \delta.
		\end{align}
	\end{enumerate}
\label{cor:accuracylr}
\end{corollary}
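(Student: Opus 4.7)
My plan is to derive both parts as direct instantiations of the generalization theorems already established, namely Theorem~\ref{thm:sensaccuracy} for part~1 and Theorem~\ref{thm:objaccuracy} for part~2. Since the regularizer $N(\f) = \frac{1}{2}\norm{\f}^2$ is $1$-strongly convex, doubly differentiable, and satisfies $\norm{\nabla^2 N(\f)}_2 \le 1$, it meets the regularizer hypotheses of both theorems. The only work is checking that the logistic loss $\llr(z) = \log(1 + e^{-z})$ meets the analytic hypotheses on $\loss$, and then substituting the resulting constants into the sample complexity expressions.

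The first step is to verify the required bounds on $\llr$ and its derivatives. From the expressions computed in Section~\ref{sec:PrivLR},
	\begin{align*}
	\llr'(z) = \frac{-1}{1 + e^z},
	\qquad \llr''(z) = \frac{1}{(1 + e^{-z})(1 + e^z)},
	\end{align*}
so $\llr$ is convex, continuous, and doubly differentiable. Moreover $|\llr'(z)| \le 1$ and $|\llr''(z)| \le \frac{1}{4}$ for all $z$, so we may take $\c = \frac{1}{4}$. Since $\llr''$ is bounded by $\frac{1}{4}$, the Mean Value Theorem implies that $\llr'$ is $\frac{1}{4}$-Lipschitz, which is the Lipschitz condition required by Theorem~\ref{thm:sensaccuracy}.

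For part~1, I would apply Theorem~\ref{thm:sensaccuracy} with $\c = \frac{1}{4}$. The three terms inside the $\max$ in (\ref{eq:outputGen}) become
	\begin{align*}
	\frac{\norm{\fopt}^2 \log(1/\delta)}{\geneps^2},
	\qquad \frac{d\log(d/\delta)\norm{\fopt}}{\geneps \priveps},
	\qquad \frac{d\log(d/\delta) (1/2) \norm{\fopt}^2}{\geneps^{3/2} \priveps},
	\end{align*}
and the constant $\frac{1}{2}$ in the third term can be absorbed into $C_1$, which yields precisely the stated sample complexity. For part~2, I would apply Theorem~\ref{thm:objaccuracy} with $\c = \frac{1}{4}$; the second term of (\ref{eq:objSampCmplx}) becomes $\frac{\norm{\fopt}^2}{4\geneps \priveps}$, and again the numerical constant is absorbed into $C_2$.

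There is essentially no obstacle here beyond the routine verification of the derivative bounds for $\llr$, since both theorems have already done the heavy lifting. The only subtlety worth flagging is that Theorem~\ref{thm:objaccuracy} requires the additional condition $n > \frac{\c\norm{\fopt}^2}{\geneps\priveps}$ (used in its proof to guarantee $\extra = 0$ and $\priveps' > 0$); this condition is already the second term in the $\max$ of the hypothesis of part~2, so it is automatically satisfied under the stated sample complexity.
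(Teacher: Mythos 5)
Your proposal is correct and follows essentially the same route as the paper: the paper's proof likewise verifies via the Mean Value Theorem that $|\llr''(z)| \le \frac{1}{4}$ makes $\llr'$ a $\frac{1}{4}$-Lipschitz function, and then invokes Theorems~\ref{thm:sensaccuracy} and~\ref{thm:objaccuracy} directly, with the numerical constants absorbed into $C_1$ and $C_2$. Your additional observation that the condition $n > \frac{\c\norm{\fopt}^2}{\geneps\priveps}$ needed for $\priveps' > 0$ is already the second term in the $\max$ of part~2 is a correct and worthwhile clarification, though the paper leaves it implicit.
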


\begin{proof}
Since $\llr$ is convex and doubly differentiable for any $z_1$, $z_2$,
	\begin{align} 
	\llr'(z_1) - \llr'(z_2) \leq \llr''(z^*)(z_1 - z_2) 
	\end{align}
for some $z^* \in [z_1, z_2]$.  Moreover, $|\llr''(z^*)| \leq
\c=\frac{1}{4}$, so $\loss'$ is $\frac{1}{4}$-Lipschitz.  The corollary now follows from Theorems~\ref{thm:sensaccuracy} and~\ref{thm:objaccuracy}.
\end{proof}

For SVMs we state results with $\loss=\lhuber$, but a similar bound 
can be shown for $\loss_{s}$ as well.

\begin{corollary}[Huber Support Vector Machines]
Let training data $\mc{D}$ be generated i.i.d. according to a distribution $P$ and let
 $\fopt$ be a classifier with expected loss $L(\fopt) = \bestloss$.  Let the loss function 
$\loss = \lhuber$ defined in \eqref{eq:Huber}. Then the following two statements hold:
\begin{enumerate}
	\item There exists a $C_1$ such that if
		\begin{align}
		n > C_1 \max\left(
			\frac{||\fopt||^2 \log(\frac{1}{\delta})}{\geneps^2},
			\frac{d \log (\frac{d}{\delta})||\fopt||}{\geneps \priveps}, 
			\frac{d\log(\frac{d}{\delta}) ||\fopt||^2}{h^{1/2} \geneps^{3/2} \priveps}
			\right),
		\end{align}
	then the output $\fpriv$ of Algorithm \ref{alg:output} satisfies
		\begin{align}
		\prob\left( L(\fpriv) \le \bestloss + \geneps \right) \ge 1 - \delta.
		\end{align}
	\item There exists a $C_2$ such that if
		\begin{align}
		n > C \max\left(
			\frac{||\fopt||^2 \log(1/\delta)}{\geneps^2},
			\frac{||\fopt||^2}{h \geneps \priveps}, 
			\frac{d\log(\frac{d}{\delta}) ||\fopt||}{\geneps \priveps}
			\right),
		\end{align}
	then the output $\fpriv$ of Algorithm \ref{alg:objective} 
	with $\c=\frac{1}{4}$ satisfies
		\begin{align}
		\prob\left( L(\fpriv) \le \bestloss + \geneps \right) \ge 1 - \delta.
		\end{align}
	\end{enumerate}
\label{cor:accuracysvm}
\end{corollary}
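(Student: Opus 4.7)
The plan is to obtain both parts by direct instantiation of Theorem~\ref{thm:sensaccuracy} and Theorem~\ref{thm:objaccuracy} for the Huber loss, after identifying the relevant constants. From the piecewise formula~\eqref{eq:Huber}, $\lhuber$ is convex, globally differentiable, and satisfies $|\lhuber'(z)| \le 1$ for all $z$. The second derivative is $\tfrac{1}{2h}$ on the interval $|1-z| \le h$ and $0$ outside, wherever it is defined; hence $|\lhuber''(z)| \le \tfrac{1}{2h}$ piecewise, and $\lhuber'$ is globally $\tfrac{1}{2h}$-Lipschitz (the Lipschitz constant extends across the two kink points by continuity of $\lhuber'$). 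This identifies $\c = \tfrac{1}{2h}$ as the constant to be fed into both theorems.

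For Part 1 (output perturbation), I would simply plug $\c = \tfrac{1}{2h}$ into Theorem~\ref{thm:sensaccuracy}. The first two terms of~\eqref{eq:outputGen} do not involve $\c$, while the third term becomes
\begin{align*}
\frac{d \log(d/\delta)\, \c^{1/2}\, \|\fopt\|^2}{\geneps^{3/2}\priveps}
\;=\; \frac{d \log(d/\delta)\, \|\fopt\|^2}{\sqrt{2h}\,\geneps^{3/2}\priveps},
\end{align*}
which matches the $h^{-1/2}$ factor in the claimed bound, with the constant $\sqrt{2}$ absorbed into $C_1$.

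For Part 2 (objective perturbation), the subtlety is that Theorem~\ref{thm:objaccuracy} was formally stated under the hypothesis that $\loss$ is \emph{globally} doubly differentiable, which $\lhuber$ is not. Inspecting the proof of Theorem~\ref{thm:objaccuracy}, however, shows that double differentiability is used in only two places: (i) to invoke Theorem~\ref{thm:objectivepriv} for $\priveps$-differential privacy of Algorithm~\ref{alg:objective}, through the bound $|\loss''|\le \c$ that controls the slack $\log(1+\tfrac{2\c}{n\reg}+\tfrac{\c^2}{n^2\reg^2})$ used to define $\priveps'$; and (ii) to apply Lemma~\ref{lem:trainerrorobj}, whose hypotheses in fact require only that $N$ be $1$-strongly convex and $\loss$ be convex and differentiable with $|\loss'|\le 1$. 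For (i), Corollary~\ref{cor:huber} already supplies $\priveps$-differential privacy for $\lhuber$ with the \emph{same} slack expression and the \emph{same} constant $\c=\tfrac{1}{2h}$, so the privacy step of the theorem carries over unchanged. For (ii), $\lhuber$ satisfies the weaker hypotheses of Lemma~\ref{lem:trainerrorobj} directly. Substituting $\c = \tfrac{1}{2h}$ into the second term of~\eqref{eq:objSampCmplx} yields $\|\fopt\|^2/(2h\,\geneps\priveps)$, reproducing the stated $h^{-1}$ dependence.

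The main obstacle is reconciling the lack of global double differentiability of $\lhuber$ with the literal statement of Theorem~\ref{thm:objaccuracy}; this is resolved cleanly by decoupling the privacy step (handled by Corollary~\ref{cor:huber}) from the utility step (handled by Lemma~\ref{lem:trainerrorobj}, which requires only first-order smoothness). All other adjustments are purely algebraic, with constants absorbed into $C_1$ and $C_2$.
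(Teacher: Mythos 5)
Your proposal is correct and takes essentially the same route as the paper's own proof: for Part~1 you identify $\c = \tfrac{1}{2h}$ as the Lipschitz constant of $\lhuber'$ and instantiate Theorem~\ref{thm:sensaccuracy}, and for Part~2 you handle privacy via Corollary~\ref{cor:huber} while observing that the bound $|\loss''(z)| \le \c$ in Theorem~\ref{thm:objaccuracy} is needed only to guarantee $\priveps' > 0$, exactly as the paper argues. Your additional remark that Lemma~\ref{lem:trainerrorobj} requires only first-order smoothness simply makes explicit what the paper's terser proof leaves implicit.
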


\begin{proof}
The Huber loss is convex and differentiable with continuous derivatives.
Moreover, since the derivative of the Huber loss is piecewise linear with
slope $0$ or at most $\frac{1}{2h}$, for any $z_1$, $z_2$,
	\begin{align}
	|\lhuber'(z_1) - \lhuber'(z_2)| \leq \frac{1}{2h}|z_1 - z_2|,
	\end{align}
so $\lhuber'$ is $\frac{1}{2h}$-Lipschitz.  The first part of the corollary follows from Theorem~\ref{thm:sensaccuracy}. 

For the second part of the corollary, we observe that from 
Corollary~\ref{cor:huber}, we do not need $\loss$ to be
globally double differentiable, and the bound on $|\loss''(z)|$ in
Theorem~\ref{thm:objaccuracy} is only needed to ensure that $\priveps'>0$; 
since $\lhuber$ is double differentiable except in a set of 
Lebesgue measure $0$, with $|\lhuber''(z)| \leq \frac{1}{2h}$, the corollary follows by an application of Theorem~\ref{thm:objaccuracy}.
\end{proof}

\section{Kernel methods  \label{sec:kernels}}

A powerful methodology in learning problems is the ``kernel trick,'' which allows the efficient construction of a predictor $\f$ that lies in a reproducing kernel Hilbert space (RKHS) $\mc{H}$ associated to a positive definite kernel function $k(\cdot,\cdot)$.  The representer theorem \citep{KimerldorfW:70representer} shows that the regularized empirical risk in (\ref{eq:StdRegERM}) is minimized by a function $\f(\mbf{x})$ that is given by a linear combination of kernel functions centered at the data points:
	\begin{align}
	\f^{\ast}(\mbf{x}) = \sum_{i=1}^{n} a_i k(\mbf{x}(i),\mbf{x}).
	\label{eq:optkerclass}
	\end{align}
This elegant result is important for both theoretical and computational reasons.   Computationally, one releases the values $a_i$ corresponding to the $\f$ that minimizes the empirical risk, along with the data points $\mbf{x}(i)$; the user classifies a new $\mbf{x}$ by evaluating the function in \eqref{eq:optkerclass}.

A crucial difficulty in terms of privacy is that this directly releases the private values $\mbf{x}(i)$ of some individuals in the training set. Thus, even if the classifier is  computed in a privacy-preserving way, any classifier released by this process requires revealing the data.  We provide an algorithm that avoids this problem, using an approximation method \citep{RahimiR:07features,RahimiR:08kitchen} to approximate the kernel function using random projections.

\subsection{Mathematical preliminaries}

Our approach works for kernel functions which are translation invariant, so
$k(\mbf{x},\mbf{x}') = k(\mbf{x} - \mbf{x}')$.  The key idea in the random
projection method is from Bochner's Theorem, which states that a continuous
translation invariant kernel is positive definite if and only if it is the Fourier transform of a nonnegative measure.  This means that the Fourier transform $K(\theta)$ of translation-invariant kernel function $k(\mbf{t})$ can be normalized so that $\kft(\theta) = K(\theta)/\norm{K(\theta)}_1$ is a probability measure on the transform space $\Theta$.  We will assume $\kft(\theta)$ is uniformly bounded over $\theta$.  

In this representation
	\begin{align}
	k(\mbf{x},\mbf{x}') = \int_{\Theta} \phi(\mbf{x} ; \theta) \phi(\mbf{x}'; \theta)\kft(\theta) d\theta,
	\label{eq:kernelTheta}
	\end{align}
where we will assume the feature functions $\phi(\mbf{x} ; \theta)$ are bounded:
	\begin{align}
	|\phi(\mbf{x}; \theta)| \le \zeta \qquad \forall \mbf{x} \in \mc{X},\ \forall \theta \in \Theta.
	\end{align}
A function $\f \in \mc{H}$ can be written as
	\begin{align}
	\f(\mbf{x}) = \int_{\Theta} a(\theta) \phi(\mbf{x} ; \theta) \kft(\theta) d\theta.
	\end{align}
To prove our generalization bounds we must show that bounded classifiers $\f$ induce bounded functions $a(\theta)$.  Writing the evaluation functional as an inner product with $k(\mbf{x},\mbf{x'})$ and (\ref{eq:kernelTheta}) shows
	\begin{align}  
	\f(\mbf{x}) 
	&= \int_{\Theta} \left( \int_{\mc{X}} \f(\mbf{x}') 
				\phi(\mbf{x}' ; \theta) d\mbf{x}' \right)
		\phi(\mbf{x} ; \theta) \kft(\theta) d\theta.
	\end{align}
Thus we have
	\begin{align}
	a(\theta) &= \int_{\mc{X}} \f(\mbf{x}') \phi(\mbf{x}' ; \theta) d\mbf{x}' \\ 
	|a(\theta)| &\le \Vol(\mc{X}) \cdot \zeta \cdot \norm{\f}_{\infty}.  \label{eq:ballbound}
	\end{align}
This shows that $a(\theta)$ is bounded uniformly over $\Theta$ when $\f(\mbf{x})$ is bounded uniformly over $\mc{X}$.  The volume of the unit ball is $\Vol(\mc{X}) = \frac{\pi^{d/2}}{ \Gamma(\frac{d}{2} + 1) }$ (see \cite{Ball:97convex} for more details).  For large $d$ this is $(\sqrt{\frac{2 \pi e}{d}})^d$ by Stirling's formula.  Furthermore, we have
	\begin{align}
	\norm{\f}_{\mc{H}}^2 = \int_{\Theta} a(\theta)^2 \kft(\theta) d\theta.
	\end{align}

\subsection{A reduction to the linear case}

We now describe how to apply Algorithms 1 and 2 for classification with kernels, by transforming to linear classification.
Given $\{\theta_j\}$, let $R : \mc{X} \to \bbR^D$ be the map that sends $\mbf{x}(i)$ to a vector $\mbf{v}(i) \in \bbR^D$ where $\mbf{v}_j(i) = \phi(\mbf{x}(i); \theta_j)$
for $j \in [D]$.  We then use Algorithm 1 or Algorithm 2 to compute a privacy-preserving linear classifier $\f$ in $\bbR^D$.  The algorithm releases $R$ and $\tilde{\f}$.  The overall classifier is $\fpriv(\mbf{x}) = \tilde{\f}(R(\mbf{x}))$.

\begin{algorithm}
\caption{Private ERM for nonlinear kernels}
\label{alg:kernel}
\begin{algorithmic}
\STATE \textbf{Inputs:} Data $\{(\mbf{x}_i, y_i) : i \in [n]\}$, positive definite kernel function $k(\cdot, \cdot)$, sampling function $\kft(\theta)$, parameters $\priveps$, $\reg$, $D$
\STATE \textbf{Outputs:} Predictor $\fpriv$ and pre-filter $\{ \theta_j : j \in [D] \}$.
\STATE Draw $\{\theta_j : j = 1, 2, \ldots, D\}$ iid according to $\kft(\theta)$.
\STATE Set $\mbf{v}(i) = \sqrt{2/D} [\phi(\mbf{x}(i); \theta_1) \cdots \phi(\mbf{x}(i); \theta_D) ]^T$ for each $i$.
\STATE Run Algorithm 1 or Algorithm 2 with data $\{(\mbf{v}(i),y(i))\}$ and parameters $\priveps$, $\reg$.
\end{algorithmic}
\end{algorithm}

As an example, consider the Gaussian kernel
	\begin{align}
	k(\mbf{x},\mbf{x}') = \exp\left( - \gamma \norm{ \mbf{x} - \mbf{x}' }_2^2 \right).
	\end{align}
The Fourier transform of a Gaussian is a Gaussian, so we can sample $\theta_j = (\omega,\psi)$ according to the distribution $\mathrm{Uniform}[-\pi,\pi] \times \mc{N}(0,2 \gamma I_d)$ and compute $v_j = \cos( \omega^T \mbf{x} + \psi )$.  The random phase is used to produce a real-valued mapping.  The paper of \cite{RahimiR:08allerton} has more examples of transforms for other kernel functions.

\subsection{Privacy guarantees}

Because the workhorse of Algorithm \ref{alg:kernel} is a differentially-private version of ERM for linear classifiers (either  Algorithm \ref{alg:output} or Algorithm \ref{alg:objective}),  and the points $\{ \theta_j : j \in [D] \}$ are independent of the data, the privacy guarantees for Algorithm \ref{alg:kernel} follow trivially from Theorems \ref{thm:sensitivitypriv} and \ref{thm:objectivepriv}.

\begin{theorem}
\label{thm:ker:priv}
Given data $\{(\mbf{x}(i), y(i)) : i = 1, 2, \ldots, n\}$ with $(\mbf{x}(i), y(i))$ and $\norm{\mbf{x}(i)} \le 1$, the outputs $(\fpriv,\{ \theta_j : j \in [D] \} )$ of Algorithm \ref{alg:kernel} guarantee $\priveps$-differential privacy.
\end{theorem}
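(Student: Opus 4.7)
The plan is to prove this by a direct reduction to Theorems \ref{thm:sensitivitypriv} and \ref{thm:objectivepriv}, exploiting two standard facts about differential privacy: (i) releasing randomness drawn independently of the data costs nothing, and (ii) differential privacy is preserved under post-processing. The essential observation is that the feature map $R$ constructed in Algorithm \ref{alg:kernel} is built from the frequencies $\{\theta_j\}$, which are drawn from $\kft$ \emph{before} the data is touched, and acts pointwise on each $\mbf{x}(i)$.

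First, I would condition on a fixed realization of $\{\theta_j : j \in [D]\}$. Under this conditioning, the transformation $\mbf{x}(i) \mapsto \mbf{v}(i)$ is a deterministic, coordinate-wise function. Consequently, if $\mc{D}$ and $\mc{D}'$ are neighboring input datasets differing only in the $k$th entry, then the transformed datasets $\mc{V} = \{(\mbf{v}(i), y(i))\}$ and $\mc{V}'$ also differ only in their $k$th entry. Thus $\mc{V}$ and $\mc{V}'$ are neighbors in the sense required by Theorems \ref{thm:sensitivitypriv} and \ref{thm:objectivepriv}, and one can invoke those theorems (after verifying the norm bound on $\mbf{v}(i)$ induced by the boundedness of $\phi$, possibly up to the standard rescaling implicit in the $\sqrt{2/D}$ factor) to conclude that the conditional distribution of $\fpriv$ satisfies
\begin{align}
\frac{\mu(\fpriv \in \mc{S} \mid \mc{D}, \{\theta_j\})}{\mu(\fpriv \in \mc{S} \mid \mc{D}', \{\theta_j\})} \le e^{\priveps}
\end{align}
for every measurable $\mc{S}$.

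Next I would remove the conditioning. Since the $\theta_j$ are drawn from a distribution that does not depend on $\mc{D}$ or $\mc{D}'$, the joint density of $(\fpriv, \{\theta_j\})$ factors as the product of $\kft(\theta_j)$ terms and the above conditional density. Taking the ratio of joint densities under $\mc{D}$ and $\mc{D}'$ cancels the $\kft(\theta_j)$ factors, leaving the conditional ratio, which is bounded by $e^{\priveps}$. Equivalently, integrating the conditional $\priveps$-DP bound against the common marginal of $\{\theta_j\}$ gives the unconditional $\priveps$-DP bound on the joint release $(\fpriv, \{\theta_j\})$, which is what Definition \ref{def:densitypriv} demands.

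There is really no major obstacle here: the argument is one paragraph of bookkeeping once one recognizes that the random projection step is data-independent and that Algorithms \ref{alg:output} and \ref{alg:objective} already handle the linear ERM privacy analysis. The only small care point is the norm bound $\norm{\mbf{v}(i)} \le 1$ required by the hypotheses of Theorems \ref{thm:sensitivitypriv} and \ref{thm:objectivepriv}, which follows from the uniform bound $|\phi(\mbf{x};\theta)| \le \zeta$ together with the $\sqrt{2/D}$ normalization in Algorithm \ref{alg:kernel} (possibly absorbing $\zeta$ into the loss/regularizer scaling).
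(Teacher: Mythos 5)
Your proposal is correct and is essentially the paper's own argument, which the paper compresses to one line: condition on the data-independent draws $\{\theta_j\}$, observe that neighboring datasets map to neighboring transformed datasets under the fixed feature map, invoke Theorems \ref{thm:sensitivitypriv} and \ref{thm:objectivepriv}, and integrate out the common marginal of $\{\theta_j\}$. Your attention to the norm bound on $\mbf{v}(i)$ (which is $\sqrt{2}\,\zeta$ rather than $1$ under the $\sqrt{2/D}$ normalization, so a constant rescaling is indeed needed) is a detail the paper glosses over, and you handle it appropriately.
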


The proof trivially follows from a combination of Theorems \ref{thm:sensitivitypriv}, \ref{thm:objectivepriv}, and the fact that the $\theta_j$'s are drawn independently of the input dataset.

\subsection{Generalization performance}\label{sec:kernelBounds}

We now turn to generalization bounds for Algorithm \ref{alg:kernel}.  We will prove results using objective perturbation (Algorithm \ref{alg:objective}) in Algorithm \ref{alg:kernel}, but analogous results for output perturbation (Algorithm \ref{alg:output}) are simple to prove.  Our comparisons will be against arbitrary predictors $\fopt$ whose norm is bounded in some sense.  That is, given an $\fopt$ with some properties, we will choose regularization parameter $\reg$, dimension $D$, and number of samples $n$ so that the predictor $\fpriv$ has expected loss close to that of $\fopt$.  %

In this section we will assume $N(\f) = \frac{1}{2} \norm{\f}^2$ so that $N(\cdot)$ is $1$-strongly convex, and that the loss function $\loss$ is convex, differentiable and $|\loss'(z)| \le 1$ for all $z$.

Our first generalization result is the simplest, since it assumes a strong
condition that gives easy guarantees on the projections.  We would like the
predictor produced by Algorithm~\ref{alg:kernel} to be competitive against an $\fopt$ such that
	\begin{align}
	\fopt(\mbf{x}) = \int_{\Theta} \aopt(\theta) \phi(\mbf{x};\theta) \kft(\theta) d\theta,
	\label{eq:aopt}
	\end{align}
and $|\aopt(\theta)| \le C$ (see \cite{RahimiR:08kitchen}).  Our first result provides the technical building block for our other generalization results.  The proof makes use of ideas from \cite{RahimiR:08kitchen} and techniques from \cite{SridharanSS:07nips,ShalevShwartzS:08icml}.

\begin{lemma}
\label{lem:ker:thetagen}
Let $\fopt$ be a predictor such that $|\aopt(\theta)| \le C$, for all
$\theta$, where $\aopt(\theta)$ is given by (\ref{eq:aopt}), and suppose
$\exploss(\fopt) = \bestloss$.  Moreover, suppose that $\loss'(\cdot)$ is
$\c$-Lipschitz. If the data $\D$ is drawn i.i.d. according
to $P$, then there exists a constant $C_0$ such that if
	\begin{align}
	n > C_0 \cdot  \max \left(\frac{C^2 \sqrt{\log(1/\delta)}}{
	\priveps \geneps^2 } \cdot \log \frac{C \log (1/\delta)}{\geneps
	\delta}, \frac{\c \geneps}{\priveps
\log(1/\delta)} \right) ,
		\label{eq:inftysamples}
	\end{align}
then $\reg$ and $D$ can be chosen such that the output $\fpriv$ of Algorithm \ref{alg:kernel} using Algorithm \ref{alg:objective} satisfies 
	\begin{align}
	\prob\left( \exploss(\fpriv) - \bestloss \le \geneps \right) \ge 1 - 4 \delta.
	\end{align}
\end{lemma}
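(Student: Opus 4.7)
The plan is to decompose the analysis into an approximation step (from the RKHS predictor $\fopt$ to a low-norm linear predictor in the random-feature space) and an estimation step (applying Theorem~\ref{thm:objaccuracy} to the induced $D$-dimensional linear ERM problem), and then balance $D$ against the target accuracy $\geneps$.

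Step 1, the reference linear predictor. For a draw $\theta_1,\ldots,\theta_D\sim\kft$, I would define $\tilde{\f}_0\in\bbR^D$ with coordinates $\tilde{\f}_{0,j}=\sqrt{1/(2D)}\,\aopt(\theta_j)$ so that $\tilde{\f}_0^T\mbf{v}(\mbf{x})=\tfrac{1}{D}\sum_j \aopt(\theta_j)\phi(\mbf{x};\theta_j)$ is the unbiased Monte Carlo estimator of $\fopt(\mbf{x})$. The hypothesis $|\aopt(\theta)|\le C$ and the bound $|\phi(\mbf{x};\theta)|\le\zeta$ immediately give $\norm{\tilde{\f}_0}_2\le C/\sqrt{2}$. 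Since $\loss$ is $1$-Lipschitz (because $|\loss'(z)|\le 1$) and the summands $\aopt(\theta_j)\phi(\mbf{x};\theta_j)$ are bounded by $C\zeta$, a Rahimi--Recht style concentration argument (Hoeffding on the per-point error followed by a McDiarmid argument, or passing through $\expe_{\mbf{x}}$ directly) yields, with probability at least $1-\delta$ over $\{\theta_j\}$,
\[
L(\tilde{\f}_0\circ R)\;\le\;L(\fopt)+O\!\left(\frac{C\zeta\sqrt{\log(1/\delta)}}{\sqrt{D}}\right).
\]

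Step 2, private linear learning in $\bbR^D$. Conditioned on the event above, the projected data $\{(\mbf{v}(i),y(i))\}$ are iid from the pushforward of $P$ with $\norm{\mbf{v}(i)}\le\sqrt{2}\zeta$ (a constant absorbable into $C_0$), and Algorithm~\ref{alg:kernel} runs Algorithm~\ref{alg:objective} on them with regularizer $N(\f)=\tfrac12\norm{\f}^2$ and a $\c$-Lipschitz-derivative convex loss. Applying Theorem~\ref{thm:objaccuracy} to this problem with dimension $d=D$, reference classifier $\tilde{\f}_0$ (so $\norm{\fopt}$ there is replaced by $C/\sqrt{2}$), and target excess loss $\geneps/2$ gives, with probability $1-2\delta$ over data and noise,
\[
L(\fpriv)\;\le\;L(\tilde{\f}_0\circ R)+\tfrac{\geneps}{2}
\]
provided
\[
n\;>\;C'\max\!\left(\frac{C^2\log(1/\delta)}{\geneps^2},\ \frac{\c C^2}{\geneps\priveps},\ \frac{D\log(D/\delta)\,C}{\geneps\priveps}\right).
\]

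Step 3, balancing and union bound. I would pick $D=\Theta\!\left(C^2\zeta^2\log(1/\delta)/\geneps^2\right)$, the minimal value making the Step~1 error at most $\geneps/2$. Substituting this $D$ into the third term of the sample requirement produces the dominant $\frac{C^2\sqrt{\log(1/\delta)}}{\priveps\geneps^2}\log\frac{C\log(1/\delta)}{\geneps\delta}$ term of (\ref{eq:inftysamples}) (after absorbing $\zeta$ and the precise log factors into $C_0$), while the Lipschitz-of-derivative term becomes the $\c\geneps/(\priveps\log(1/\delta))$ term. The triangle inequality $L(\fpriv)-L(\fopt)\le[L(\fpriv)-L(\tilde{\f}_0\circ R)]+[L(\tilde{\f}_0\circ R)-L(\fopt)]\le\geneps$, together with a union bound over the three failure events (random features, data sample, privacy noise), gives the claim with probability at least $1-4\delta$.

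The main obstacle is Step~1: producing a \emph{low-norm} representative $\tilde{\f}_0$ in the random-feature space whose expected loss is within $O(C/\sqrt{D})$ of $L(\fopt)$ with high probability over $\{\theta_j\}$. The assumption $|\aopt(\theta)|\le C$ is what makes this possible simultaneously on both fronts---it directly caps $\norm{\tilde{\f}_0}_2$ (which feeds into the $\norm{\fopt}^2$ and $\norm{\fopt}$ factors of Theorem~\ref{thm:objaccuracy}) and bounds the Monte Carlo summands so that concentration kicks in. Weakening this hypothesis (as done in subsequent lemmas of the section) will require more delicate control of both quantities.
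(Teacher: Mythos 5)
Your architecture is the same as the paper's: approximate $\fopt$ by a finite-dimensional predictor in the random feature space, run the private linear analysis in $\bbR^D$ with dimension $d = D$, set $\reg = \geneps/\norm{\cdot}_2^2$ for the reference predictor, and choose $D = \Theta(C^2 \log(1/\delta)/\geneps^2)$ to make the $O(C\sqrt{\log(1/\delta)}/\sqrt{D})$ approximation error at most a constant fraction of $\geneps$. Your Steps 2 and 3 are in spirit exactly the paper's proof; the paper re-derives the decomposition via Lemma \ref{lem:trainerrorobj} and \cite{SridharanSS:07nips} rather than quoting Theorem \ref{thm:objaccuracy} as a black box, but that difference is cosmetic.

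The genuine gap is quantitative and sits at the interface of Steps 1 and 3. Your representative $\tilde{\f}_0$, the plain Monte Carlo average, satisfies only $\norm{\tilde{\f}_0}_2^2 \le C^2/2$, uniformly in $D$. The paper instead invokes Theorem 1 of \cite{RahimiR:08kitchen} to obtain a representative $\f_p$ with $\norm{\f_p}_\infty \le C/D$, hence $\norm{\f_p}_2^2 \le C^2/D$ --- a squared norm that \emph{decays} with $D$ --- and sets $\reg = \geneps/\norm{\f_p}_2^2 = \geneps D / C^2$. That decay is precisely what yields (\ref{eq:inftysamples}): the privacy-noise requirement becomes $n \gtrsim C\sqrt{D}\log(D/\delta)/(\geneps\priveps)$, which with the stated $D$ gives the first term, and $n > \c/(4\priveps\reg)$ gives the second term $\c\geneps/(\priveps\log(1/\delta))$. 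With your bound $\norm{\tilde{\f}_0}_2^2 = \Theta(C^2)$, the third term in your Step 2 becomes $\Theta\bigl(C D \log(D/\delta)/(\geneps\priveps)\bigr) = \Theta\bigl(\tfrac{C^3 \log(1/\delta)}{\geneps^3 \priveps}\log\tfrac{C\log(1/\delta)}{\geneps\delta}\bigr)$, exceeding the lemma's first term by a factor of order $C\sqrt{\log(1/\delta)}/\geneps$, and your second term $\c C^2/(\geneps\priveps)$ exceeds the lemma's by a factor of order $C^2\log(1/\delta)/\geneps^2$. Neither factor is a constant --- both are polynomial in $C$, $1/\geneps$, and $\log(1/\delta)$ --- so the claim in Step 3 that the discrepancy can be ``absorbed into $C_0$'' is invalid, and as written you establish only a strictly weaker sample complexity ($C^3/\geneps^3$ rather than $C^2/\geneps^2$ in the dominant term). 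One caveat in your favor: your norm accounting is the honest one for the $\sqrt{2/D}$-scaled features of Algorithm \ref{alg:kernel} (transporting the $\norm{\f_p}_\infty \le C/D$ vector into those coordinates also yields squared norm $\Theta(C^2)$, not $C^2/D$), whereas the paper applies Lemma \ref{lem:trainerrorobj}, whose surrounding privacy and utility guarantees presuppose unit-norm data, while using the $C^2/D$ bound stated in unscaled feature coordinates. Your careful bookkeeping exposes this tension, but to prove the lemma with the bound as stated you must follow the paper and work with the $\ell_\infty$-small Rahimi--Recht representative in the normalization where $\norm{\f_p}_2^2 \le C^2/D$, rather than with the uniform Monte Carlo average.
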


\begin{proof}
Since $|\aopt(\theta)| \le C$ and the $\kft(\theta)$ is bounded, we have \cite[Theorem 1]{RahimiR:08kitchen} that with probability $1 - 2 \delta$ there exists an $\f_p \in \mathbb{R}^D$ such that
	\begin{align}
	\exploss(\f_p) \le \exploss(\fopt) 
		+ O\left( \left( \frac{1}{\sqrt{n}} + \frac{1}{\sqrt{D}} \right) 
				C \sqrt{ \log \frac{1}{\delta} } \right),
	\label{eq:RRloss}
	\end{align}
We will choose $D$  to make this loss small.  Furthermore, $\f_p$ is guaranteed to have $\norm{\f_p}_{\infty} \le C/D$, so
	\begin{align}
	\norm{\f_p}_2^2 \le \frac{C^2}{D}.
	\label{eq:f0norm}
	\end{align}
	
Now given such an $\f_p$ we must show that $\fpriv$ will have true risk close to that of $\f_p$ as long as there are enough data points.  This can be shown using the techniques in \cite{ShalevShwartzS:08icml}.  Let
	\begin{align*}
	\expobj(\f) &= \exploss(\f) + \frac{\reg}{2} \norm{\f}_2^2, 
	\end{align*}
and let 
	\begin{align*}
	\frtr &= \argmin_{\f \in \mathbb{R}^D} \expobj(\f)
	\end{align*}
minimize the regularized true risk.  Then
	\begin{align*}
	\expobj(\fpriv) = \expobj(\f_p) + ( \expobj(\fpriv) - \expobj(\frtr) )
		+ ( \expobj(\frtr) - \expobj( \f_p ) ).
	\end{align*}
Now, since $\expobj(\cdot)$ is minimized by $\frtr$, the last term is negative and we can disregard it.  Then we have
	\begin{align}
	\exploss(\fpriv) - \exploss(\f_p) \le ( \expobj(\fpriv) - \expobj(\frtr) ) 
		+ \frac{\reg}{2} \norm{\f_p}_2^2 - \frac{\reg}{2} \norm{\fpriv}_2^2.
	\label{eq:explossdiff}
	\end{align}
From Lemma \ref{lem:trainerrorobj}, with probability at least $1 - \delta$ over the noise $\b$, 
	\begin{align}
	\obj(\fpriv) - \obj\left( \argmin_{\f} \obj(\f) \right) 
	\le \frac{ 4 D^2 \log^2(D/\delta) }{ \reg n^2 \priveps^2 }.
	\label{eq:empdiff}
	\end{align}
Now using \cite[Corollary 2]{SridharanSS:07nips}, we can bound the term $(\expobj(\fpriv) - \expobj(\femp))$ by twice the gap in the regularized empirical risk difference (\ref{eq:empdiff}) plus an additional term.  That is, with probability $1 - \delta$:
	\begin{align}
	\expobj(\fpriv) - \expobj(\frtr)
	&\le
	2 (\obj(\fpriv) - \obj(\frtr)) + O\left( \frac{\log(1/\delta)}{ \reg n } \right).
	\label{eq:explossboundprev}
	\end{align}
If we set $n > \frac{\c}{4 \priveps \reg}$, then $\priveps' > 0$, and we
can plug Lemma~\ref{lem:trainerrorobj} into
\eqref{eq:explossboundprev} to obtain:
	\begin{align}
	\expobj(\fpriv) - \expobj(\frtr)
	&\le 
	\frac{ 8 D^2 \log^2(D/\delta) }{ \reg n^2 \priveps^2 }
	+ O\left( \frac{\log(1/\delta)}{ \reg n } \right).
	\label{eq:explossbound}
	\end{align}
Plugging (\ref{eq:explossbound}) into (\ref{eq:explossdiff}), discarding the negative term involving $\norm{\fpriv}_2^2$ and setting $\reg = \geneps/\norm{\f_p}^2$ gives
	\begin{align}
	\exploss(\fpriv) - \exploss(\f_p)
	&\le
	\frac{ 8 \norm{\f_p}_2^2 D^2 \log^2(D/\delta) }{ n^2 \priveps^2 \geneps }
	+ O\left( \frac{ \norm{\f_p}_2^2 \log \frac{1}{\delta}}{ n \geneps } \right)
	+ \frac{\geneps}{2}.
	\label{eq:privloss}
	\end{align}

Now we have, using (\ref{eq:RRloss}) and (\ref{eq:privloss}), that with probability $1 - 4 \delta$:
	\begin{align*}
	\exploss(\fpriv) - \exploss(\fopt)
	&\le
	(\exploss(\fpriv) - \exploss(\f_p)) + (\exploss(\f_p) - \exploss(\fopt)) \\
	&\le \frac{ 8 \norm{\f_p}_2^2 D^2 \log^2(D/\delta) }{ n^2 \priveps^2 \geneps }
	+ O\left( \frac{ \norm{\f_p}_2^2 \log(1/\delta)}{ n \geneps } \right)
	+ \frac{\geneps}{2} \nonumber\\
	& \qquad \qquad \qquad \qquad
	+ O\left( \left( \frac{1}{\sqrt{n}} + \frac{1}{\sqrt{D}} \right) C \sqrt{ \log \frac{1}{\delta} } \right),	
	\end{align*}
Substituting (\ref{eq:f0norm}), we have
	\begin{align*}
	\exploss(\fpriv) - \exploss(\fopt)
	&\le
	\frac{ 8 C^2 D \log^2(D/\delta) }{ n^2 \priveps^2 \geneps }
	+ O\left( \frac{ C^2 \log(1/\delta)}{ D n \geneps } \right)
	+ \frac{\geneps}{2} \\
	&\qquad\qquad
	+ O\left( \left( \frac{1}{\sqrt{n}} + \frac{1}{\sqrt{D}} \right) C \sqrt{ \log \frac{1}{\delta} } \right).
	\end{align*}
To set the remaining parameters, we will choose $D < n$ so that 
	\begin{align}
	\exploss(\fpriv) - \exploss(\fopt)
	&\le
	\frac{ 8 C^2 D \log^2(D/\delta) }{ n^2 \priveps^2 \geneps }
	+ O\left( \frac{ C^2 \log(1/\delta) }{ D n \geneps } \right)
	+ \frac{\geneps}{2}
	+ O\left( \frac{C \sqrt{ \log(1/\delta) }}{\sqrt{D}} \right). \nonumber
	\end{align}
We set $D = O(C^2 \log(1/\delta) / \geneps^2)$ to make the last term $\geneps/6$, and:
	\begin{align*}
	\exploss(\fpriv) - \exploss(\fopt)
	&\le
	O\left( \frac{ C^4 \log \frac{1}{\delta} \log^2 \frac{C^2 \log(1/\delta)}{\geneps^2 \delta} 
		}{ n^2 \priveps^2 \geneps^3 } \right)
	+ O\left( \frac{\geneps}{n} \right)
	+  \frac{2 \geneps}{3}.
	\end{align*}
Setting $n$ as in (\ref{eq:inftysamples}) proves the result. Moreover,
setting  $n > \frac{\c
\norm{\f_p}^2}{4\priveps \geneps} = C_0 \cdot \frac{\c \geneps}{\priveps
\log(1/\delta)}$ ensures that $n > \frac{\c}{4\reg \priveps}$.
\end{proof}

We can adapt the proof procedure to show that Algorithm \ref{alg:kernel} is competitive against any classifier $\fopt$ with a given bound on $\norm{\fopt}_{\infty}$.  It can be shown that for some constant $\zeta$ that $|\aopt(\theta)| \le \Vol(\mc{X}) \zeta\norm{\fopt}_{\infty}$.  Then we can set this as $C$ in (\ref{eq:inftysamples}) to obtain the following result.

\begin{theorem}
\label{thm:ker:inftygen}
Let $\fopt$ be a classifier with norm $\norm{\fopt}_{\infty}$, and let
$\loss'(\cdot)$ be $\c$-Lipschitz. Then for any distribution $P$, there
exists a constant  $C_0$ such that if
	\begin{align}
	n > C_0 \cdot \max \left( \frac{ \norm{\fopt}_{\infty}^2 \zeta^2
	(\Vol(\mc{X}))^2\sqrt{\log(1/\delta)}}{  \priveps \geneps^2 } \cdot \log \frac{ \norm{\fopt}_{\infty} \Vol(\mc{X}) \zeta \log (1/\delta)
			}{\geneps \delta \Gamma(\frac{d}{2} + 1) },
			\frac{\c \geneps}{\priveps \log(1/\delta)} \right),
	\label{eq:kerinftysamp}
	\end{align}
then $\reg$ and $D$ can be chosen such that the output $\fpriv$ of
Algorithm~\ref{alg:kernel} with Algorithm 2 satisfies $\prob\left( \exploss(\fpriv) - \exploss(\fopt) \le \geneps \right) \ge 1 - 4 \delta$.
\end{theorem}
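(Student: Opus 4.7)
The plan is to reduce this theorem to a direct application of Lemma~\ref{lem:ker:thetagen}. That lemma gives a sample complexity bound whenever the target classifier $\fopt$ admits a representation $\fopt(\mbf{x}) = \int_{\Theta} \aopt(\theta) \phi(\mbf{x};\theta) \kft(\theta) d\theta$ with $|\aopt(\theta)| \le C$ uniformly in $\theta$. So the only work remaining is to obtain such a uniform bound on $\aopt(\theta)$ from the hypothesis that $\norm{\fopt}_{\infty}$ is controlled, and then track the dependence of $C$ through the bound (\ref{eq:inftysamples}).

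First I would invoke the representation derived in Section~\ref{sec:kernels}: writing the evaluation functional via the reproducing kernel and substituting the integral representation (\ref{eq:kernelTheta}) for $k(\mbf{x},\mbf{x}')$ gives $\aopt(\theta) = \int_{\mc{X}} \fopt(\mbf{x}') \phi(\mbf{x}';\theta) d\mbf{x}'$. Using the uniform bound $|\phi(\mbf{x};\theta)| \le \zeta$ on the feature functions and pulling $\norm{\fopt}_{\infty}$ out of the integral yields the bound (\ref{eq:ballbound}),
\begin{align*}
|\aopt(\theta)| \le \Vol(\mc{X}) \cdot \zeta \cdot \norm{\fopt}_{\infty},
\end{align*}
uniformly in $\theta \in \Theta$. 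Hence we can apply Lemma~\ref{lem:ker:thetagen} with the choice $C = \Vol(\mc{X}) \zeta \norm{\fopt}_{\infty}$.

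Next I would substitute this value of $C$ into the sample complexity bound (\ref{eq:inftysamples}) of Lemma~\ref{lem:ker:thetagen}. The first term of the max becomes
\begin{align*}
\frac{ \norm{\fopt}_{\infty}^2 \zeta^2 (\Vol(\mc{X}))^2 \sqrt{\log(1/\delta)}}{\priveps \geneps^2}
\cdot \log\!\frac{\norm{\fopt}_{\infty} \Vol(\mc{X}) \zeta \log(1/\delta)}{\geneps \delta},
\end{align*}
while the second term, which comes from the constraint $n > \c \norm{\f_p}^2/(4 \priveps \geneps)$ needed to force $\priveps' > 0$ in Algorithm~\ref{alg:objective}, is unchanged. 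Finally, using the explicit formula $\Vol(\mc{X}) = \pi^{d/2}/\Gamma(\frac{d}{2}+1)$ for the unit ball in $\mathbb{R}^d$, the $\Gamma(\frac{d}{2}+1)$ factor appears in the denominator inside the logarithm in (\ref{eq:kerinftysamp}), yielding exactly the stated bound. The lemma then gives $\exploss(\fpriv) - \exploss(\fopt) \le \geneps$ with probability $1 - 4\delta$, along with the appropriate choices of $\reg$ and $D$ inherited from its proof.

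I do not anticipate any serious obstacle: the entire content of the theorem is encapsulated in the bound (\ref{eq:ballbound}), which is established earlier in the section, so the proof is essentially bookkeeping. The only mildly delicate point is ensuring the logarithm in the derived bound is written using $\Vol(\mc{X}) = \pi^{d/2}/\Gamma(\frac{d}{2}+1)$ consistently (keeping the $\pi^{d/2}$ absorbed into the constant $C_0$), so that the expression in the denominator of the log matches the statement of the theorem.
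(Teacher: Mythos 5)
Your proposal is correct and matches the paper's proof, which is exactly the one-line substitution $C = \Vol(\mc{X})\,\zeta\,\norm{\fopt}_{\infty}$ (justified by the bound (\ref{eq:ballbound})) into Lemma~\ref{lem:ker:thetagen}. Your additional bookkeeping about how the $\Gamma(\frac{d}{2}+1)$ factor enters the logarithm via $\Vol(\mc{X}) = \pi^{d/2}/\Gamma(\frac{d}{2}+1)$ is a faithful elaboration of the same argument, not a different route.
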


\begin{proof}
Substituting $C = \Vol(\mc{X}) \zeta\norm{\fopt}_{\infty}$ in Lemma
~\ref{lem:ker:thetagen} we get the result.
\end{proof}

We can also derive a generalization result with respect to classifiers with bounded $\norm{\fopt}_{\mc{H}}$. 

\begin{theorem}
\label{thm:ker:l2gen}
Let $\fopt$ be a classifier with norm $\norm{\fopt}_{\mc{H}}$, and let
$\loss'$ be $\c$-Lipschitz. Then for
any distribution $P$, there exists a constant $C_0$ such that if,
	\begin{align}
	n &= C_0 \cdot \max \left( \frac{ \norm{\fopt}_{\mc{H}}^4 \zeta^2
	(\Vol(\mc{X}))^2\sqrt{\log(1/\delta)}}{  \priveps \geneps^4 } \cdot \log \frac{ \norm{\fopt}_{\mc{H}} \Vol(\mc{X}) \zeta \log (1/\delta)
			}{\geneps \delta \Gamma(\frac{d}{2} + 1) }, 	\frac{\c \geneps}{\priveps \log(1/\delta)} \right),
	\end{align}
then $\reg$ and $D$ can be chosen such that the output of
Algorithm~\ref{alg:kernel} run with Algorithm 2 satisfies $\prob\left( \exploss(\fpriv) - \exploss(\fopt) \le \geneps \right) \ge 1 - 4 \delta$.
\end{theorem}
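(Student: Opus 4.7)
The plan is to reduce Theorem \ref{thm:ker:l2gen} to Lemma \ref{lem:ker:thetagen}, whose hypothesis is a pointwise bound $|\aopt(\theta)| \le C$. The difficulty is that $\norm{\fopt}_{\mc{H}}^{2} = \int \aopt(\theta)^{2} \kft(\theta)\, d\theta$ controls $\aopt$ only in the $L^{2}(\kft)$ sense and not pointwise, so a direct appeal to the lemma is impossible. I would bridge this gap by truncating the weighting function $\aopt$ and showing that the truncated classifier is close in loss to $\fopt$.

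For a threshold $C>0$ to be fixed later, set
\begin{align*}
\tilde{a}(\theta) = \aopt(\theta)\,\mathbf{1}\{|\aopt(\theta)|\le C\},
\qquad
\tilde{\f}(\mbf{x}) = \int_{\Theta} \tilde{a}(\theta)\,\phi(\mbf{x};\theta)\,\kft(\theta)\,d\theta.
\end{align*}
Using $|\phi(\mbf{x};\theta)| \le \zeta$ together with the elementary inequality $|\aopt| \le \aopt^{2}/C$ on the set $\{|\aopt|>C\}$,
\begin{align*}
\norm{\fopt-\tilde{\f}}_{\infty}
\le \zeta \int_{\{|\aopt|>C\}} |\aopt(\theta)|\,\kft(\theta)\,d\theta
\le \frac{\zeta\,\norm{\fopt}_{\mc{H}}^{2}}{C}.
\end{align*}
Because $|\loss'(z)|\le 1$ makes $\loss(\cdot,y)$ $1$-Lipschitz, this yields $\exploss(\tilde{\f}) \le \exploss(\fopt) + \zeta\,\norm{\fopt}_{\mc{H}}^{2}/C$. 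By construction $|\tilde{a}(\theta)|\le C$ for all $\theta$, so Lemma \ref{lem:ker:thetagen} applies to the reference predictor $\tilde{\f}$ with that value of $C$: choosing $\reg$ and $D$ as in its proof, Algorithm \ref{alg:kernel} produces $\fpriv$ with $\exploss(\fpriv) \le \exploss(\tilde{\f}) + \geneps/2$ whenever $n$ exceeds the lemma's sample bound.

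Choosing $C = \Theta(\zeta\,\norm{\fopt}_{\mc{H}}^{2}/\geneps)$ equates the truncation loss with the slack $\geneps/2$, and the triangle inequality then gives $\exploss(\fpriv) \le \exploss(\fopt) + \geneps$. Substituting this $C$ into the $C^{2}$-term of Lemma \ref{lem:ker:thetagen} produces the leading sample-size term of order $\zeta^{2}\,\norm{\fopt}_{\mc{H}}^{4}/(\priveps\,\geneps^{4})$ that appears in the stated bound, and the second branch of the maximum is inherited unchanged from the lemma's condition guaranteeing $\priveps'>0$ in Algorithm \ref{alg:objective}.

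The main obstacle will be matching the precise form of the stated bound, in particular the $(\Vol(\mc{X}))^{2}$ factor in the numerator and the $\Gamma(d/2+1)$ inside the logarithm. A pure truncation argument does not manifestly produce these; they enter only once one invokes (\ref{eq:ballbound}) to bound $|a(\theta)|$ in terms of an $\ell_{\infty}$-norm of the classifier. A clean route is to also bound $\norm{\tilde{\f}}_{\infty} \le \zeta\,\norm{\fopt}_{\mc{H}}$ (Cauchy-Schwarz on $\int|\aopt|\,\kft\,d\theta$) and then reuse Theorem \ref{thm:ker:inftygen} applied to $\tilde{\f}$, with the effective pointwise bound on $\tilde{a}$ being the minimum of $C$ and $\Vol(\mc{X})\,\zeta\,\norm{\tilde{\f}}_{\infty}$. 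Once the truncation step is in place the rest is routine bookkeeping of the logarithmic factors; no new ideas are needed to chase through the constants.
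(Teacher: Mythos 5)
Your proposal is correct, and it takes a genuinely different route from the paper. The paper does not truncate $\aopt$; instead it introduces the regularized population-risk minimizer $\frtr = \argmin_{\f}\, \exploss(\f) + \frac{\regrtr}{2}\Hnorm{\f}^2$ with $\regrtr = \geneps/\Hnorm{\fopt}^2$, uses the first-order optimality condition together with the representer theorem and $|\loss'|\le 1$ to show $\norm{\frtr}_{\infty} = O(1/\regrtr) = O(\Hnorm{\fopt}^2/\geneps)$, proves $\exploss(\frtr) - \exploss(\fopt) \le \geneps/2$ by a two-case comparison (according to whether $\exploss(\frtr)$ exceeds $\exploss(\fopt)$), and then invokes Theorem \ref{thm:ker:inftygen} with this sup-norm bound; the factors $(\Vol(\mc{X}))^2$ and $\Gamma(\frac{d}{2}+1)$ in the stated sample requirement enter exactly there, through the conversion $|a(\theta)| \le \Vol(\mc{X})\,\zeta\,\norm{\f}_{\infty}$ of (\ref{eq:ballbound}). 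Your truncation $\tilde{a} = \aopt \mathbf{1}\{|\aopt|\le C\}$ replaces all of this machinery: the Markov-type estimate $\int_{\{|\aopt|>C\}} |\aopt|\,\kft \le \Hnorm{\fopt}^2/C$ plus $1$-Lipschitzness of the loss yields the same effective coefficient bound $C = \Theta(\zeta\Hnorm{\fopt}^2/\geneps)$ at a cost of $\geneps/2$ in loss, after which Lemma \ref{lem:ker:thetagen} applies directly; this is more elementary (no functional derivative of the regularized risk, no case analysis) and, since it bypasses (\ref{eq:ballbound}) entirely, it produces the leading term $\zeta^2\Hnorm{\fopt}^4/(\priveps\geneps^4)$ without the $(\Vol(\mc{X}))^2$ factor --- of the same order in $\Hnorm{\fopt}$, $\geneps$, $\priveps$, and $\delta$ as the stated bound, but formally incomparable to it (stronger when $\Vol(\mc{X})\ge 1$, nominally weaker in high dimension where $\Vol(\mc{X}) \ll 1$). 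You correctly flag this bookkeeping gap yourself, and your fallback --- bounding $\norm{\tilde{\f}}_{\infty} \le \zeta\Hnorm{\fopt}$ by Cauchy--Schwarz and routing through Theorem \ref{thm:ker:inftygen} --- does recover the stated dimension-dependent factors within the paper's framework, so the gap is benign. (Incidentally, that same Cauchy--Schwarz bound applied directly to $\fopt$ in Theorem \ref{thm:ker:inftygen} would give a leading term scaling as $\Hnorm{\fopt}^2/\geneps^2$, suggesting your observations can even improve on the theorem as stated.)
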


\begin{proof}
Let $\fopt$ be a classifier with norm $\Hnorm{\fopt}^2$ and expected loss $\exploss(\fopt)$.  Now consider
	\begin{align*}
	\frtr = \argmin_{\f} \exploss(\f) + \frac{\regrtr}{2}
	\Hnorm{\f}^2,
	\end{align*}
for some $\regrtr$ to be specified later. We will first need a bound on $\norm{\frtr}_{\infty}$ in order to use our previous sample complexity results.  %
Since $\frtr$ is a minimizer, we can take the derivative of the regularized expected loss and set it to $0$ to get:
	\begin{align*}
	\frtr(\mbf{x}') 
	&= \frac{- 1}{\regrtr} \left(
		\frac{\partial}{\partial \f} \int_{\mc{X}} \ell(\f(\mbf{x}'), y) dP(\mbf{x},y)
		\right) \\
	&= \frac{- 1}{\regrtr} \left(
		\int_{\mc{X}} \left(\frac{\partial}{\partial \f(\mbf{x}')} \ell(\f(\mbf{x}), y)
			\right) \cdot
			\left( \frac{\partial}{\partial \f(\mbf{x}')} \f(\mbf{x}) \right) 
			dP(\mbf{x},y)
			\right),
	\end{align*}
where $P(\mbf{x},y)$ is a distribution on pairs $(\mbf{x},y)$.  Now, using the representer theorem, $\frac{\partial}{\partial \f(\mbf{x}')} \f(\mbf{x}) = k(\mbf{x}',\mbf{x})$. Since the kernel function is bounded and the derivative of the loss is always upper bounded by $1$, so the integrand can be upper bounded by a constant.  Since $P(\mbf{x},y)$ is a probability distribution, we have for all $\mbf{x}'$ that $|\frtr(\mbf{x}')| = O(1/\regrtr)$.  Now we set $\regrtr = \geneps / \Hnorm{\fopt}^2$ to get
	\begin{align*}
	\norm{ \frtr }_{\infty} &= O\left( \frac{\Hnorm{\fopt}^2}{\geneps} \right).
	\end{align*}

We now have two cases to consider, depending on whether $\exploss(\fopt) < \exploss(\frtr)$ or $\exploss(\fopt) > \exploss(\frtr)$.

\textbf{Case 1: }   Suppose that $\exploss(\fopt) < \exploss(\frtr)$.  Then by the definition of $\frtr$, 
	\begin{align*}
	\exploss(\frtr) + \frac{\geneps}{2} \cdot \frac{\Hnorm{\frtr}^2}{\Hnorm{\fopt}^2}
		\le \exploss(\fopt) + \frac{\geneps}{2}.
	\end{align*}
Since $\frac{\geneps}{2} \cdot \frac{\Hnorm{\frtr}^2}{\Hnorm{\fopt}^2} \ge 0$, we have $\exploss(\frtr) - \exploss(\fopt) \le \frac{\geneps}{2}$.

\textbf{Case 2: }   Suppose that $\exploss(\fopt) > \exploss(\frtr)$.  Then the regularized classifier has better generalization performance than the original, so we have trivially that $\exploss(\frtr) - \exploss(\fopt) \le \frac{\geneps}{2}$.

Therefore in both cases we have a bound on $\norm{ \frtr }_{\infty}$ and a generalization gap of $\geneps/2$.  We can now apply Theorem \ref{thm:ker:inftygen} to show that for $n$ satisfying \eqref{eq:kerinftysamp}
we have 
	\begin{align*}
	\prob\left( \exploss(\fpriv) - \exploss(\fopt) \le \geneps \right) \ge 1 - 4 \delta.
	\end{align*}
\end{proof}

\section{Parameter tuning \label{sec:tuning}}

The privacy-preserving learning algorithms presented so far in this
paper assume that the regularization constant $\reg$ is provided as an input,
and is independent of the data. In actual applications of ERM, 
$\reg$ is selected based on the data itself. In this section, 
we address this issue: how to design an ERM algorithm with end-to-end
privacy, which selects $\reg$ based on the data itself.

Our solution is to present a privacy-preserving parameter tuning technique
that is applicable in general machine learning algorithms, beyond ERM.  In practice, one typically tunes  parameters (such as the regularization parameter $\reg$) as follows: using data held out for validation, train predictors $\mbf{f}(\cdot; \reg)$ for multiple values of $\reg$, and select the one which provides the best empirical performance.  However, even though the output of an algorithm preserves $\priveps$-differential privacy for a fixed $\reg$ (as is the case with Algorithms \ref{alg:output} and \ref{alg:objective}), by choosing a $\reg$ based on empirical performance on a validation set may violate $\priveps$-differential privacy guarantees.  That is, if the procedure that picks $\reg$ is not private, then an adversary may use the released classifier to infer the value of $\reg$ and therefore something about the values in the database.

We suggest two ways of resolving this issue. First, if we have access to a smaller publicly
available data from the same distribution, then we can use this as a holdout set to tune $\reg$.  This $\reg$ can be subsequently used to train a classifier on the private data.  Since the value of $\reg$ does not depend on the values in the private data set, this procedure will still preserve the privacy of individuals in the private data.

If no such public data is available, then we need a differentially private tuning procedure. We provide such a procedure below.  The main idea is to train for different values of $\reg$ on separate subsets of the training dataset, so that the total training procedure still maintains $\priveps$-differential privacy.  We score each of these predictors on a validation set, and choose a $\reg$ (and hence $\mbf{f}(\cdot; \reg)$) using a randomized privacy-preserving comparison procedure \citep{MT07}.  The last step is needed to guarantee $\priveps$-differential privacy for individuals in the validation set.  This final algorithm provides an end-to-end guarantee of differential privacy, and renders our privacy-preserving ERM procedure complete. We observe that both these procedures can be used for tuning multiple parameters as well.

\subsection{Tuning algorithm}

\begin{algorithm}
\caption{Privacy-preserving parameter tuning}
\label{alg:pptune}
\begin{algorithmic}
\STATE \textbf{Inputs:} Database $\mc{D}$, parameters $\{\reg_1, \ldots, \reg_m\}$, $\priveps$.
\STATE \textbf{Outputs:} Parameter $\fpriv$.
\STATE Divide $\mc{D}$ into $m+1$ equal portions $\mc{D}_1, \ldots, \mc{D}_{m+1}$, each of size $\frac{\card{\mc{D}}}{m+1}$.
\STATE For each $i = 1, 2, \ldots, m$, apply a privacy-preserving learning
algorithm (e.g. Algorithms \ref{alg:output}, \ref{alg:objective}, or \ref{alg:kernel}) on $\mc{D}_i$ with parameter $\reg_i$ and $\priveps$ to get output $\mbf{f}_i$.
\STATE Evaluate $z_i$, the number of mistakes made by $\mbf{f}_i$ on
$\mc{D}_{m+1}$. Set $\fpriv = \mbf{f}_i$ with probability 
	\begin{align}
	q_i = \frac{e^{-\priveps z_i/2}}{\sum_{i=1}^{m} e^{-\priveps z_i /2}}.
	\end{align}
\end{algorithmic}
\end{algorithm}

We note that the list of potential $\reg$ values input to this procedure should not be a function of the private dataset.  It can be shown that the empirical error on $\mc{D}_{m+1}$ of the classifier output by this procedure is close to the empirical error of the best classifier in the set $\{ \mbf{f}_1, \ldots, \mbf{f}_m \}$ on $\mc{D}_{m+1}$, provided $\card{\mc{D}}$ is high enough.  %

\subsection{Privacy and utility}

\begin{theorem} \label{thm:tune:priv}
The output of the tuning procedure of Algorithm \ref{alg:pptune} is $\priveps$-differentially private.
\end{theorem}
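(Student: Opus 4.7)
The plan is to split on the location of the one differing entry: since $\D$ and $\D'$ are partitioned into the $m+1$ disjoint blocks $\D_1,\ldots,\D_{m+1}$, the single differing point lies in exactly one block $\D_j$. I would handle the case $j \in [m]$ (a training block) separately from $j = m+1$ (the validation block), and then observe that both cases give a factor of $e^{\priveps}$, so the overall algorithm is $\priveps$-differentially private.

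First I would write the density of $\fpriv$ on any measurable set $\SS$ by marginalizing over the internal classifiers:
\begin{align*}
\prob(\fpriv \in \SS \mid \D) = \sum_{i=1}^{m} \int q_i(\mbf{f}_1,\ldots,\mbf{f}_m;\D_{m+1})\, \mathbb{1}[\mbf{f}_i \in \SS] \prod_{k=1}^{m} p(\mbf{f}_k \mid \D_k)\, d\mbf{f}_1 \cdots d\mbf{f}_m,
\end{align*}
where $p(\cdot \mid \D_k)$ is the density induced by the base $\priveps$-DP learner on the $k$-th block and $q_i$ is the selection probability. For the first case ($j \in [m]$), the densities $p(\mbf{f}_k \mid \D_k) = p(\mbf{f}_k \mid \D'_k)$ agree for $k \ne j$, and by the $\priveps$-differential privacy of the base learner (Theorem~\ref{thm:sensitivitypriv}, Theorem~\ref{thm:objectivepriv}, or Theorem~\ref{thm:ker:priv}) the density of $\mbf{f}_j$ changes by at most a multiplicative $e^{\priveps}$ factor. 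The score function $q_i$ is not data-dependent in this case (it depends on $\D$ only through $\D_{m+1}$ and the $\mbf{f}_k$'s we are integrating over), so the $e^{\priveps}$ factor on $p(\mbf{f}_j \mid \D_j)$ passes through the integral to give $\prob(\fpriv \in \SS \mid \D) \le e^{\priveps} \prob(\fpriv \in \SS \mid \D')$. This is essentially a parallel-composition argument.

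For the second case ($j = m+1$), the classifier densities $p(\mbf{f}_k \mid \D_k)$ are identical under $\D$ and $\D'$, and the only thing that changes is the selection weights $q_i$, which depend on $z_i$, the empirical error of $\mbf{f}_i$ on the validation set. Here I would verify that for any fixed $\mbf{f}_i$, changing a single validation point shifts $z_i$ by at most $1$, so for every $i$,
\begin{align*}
\frac{q_i(\D)}{q_i(\D')} = \frac{e^{-\priveps z_i/2}}{e^{-\priveps z_i'/2}} \cdot \frac{\sum_k e^{-\priveps z_k'/2}}{\sum_k e^{-\priveps z_k/2}} \le e^{\priveps/2} \cdot e^{\priveps/2} = e^{\priveps}.
\end{align*}
This is just the standard exponential-mechanism calculation \citep{MT07} for a quality function of sensitivity $1$, which explains the $\priveps/2$ in the exponent of $q_i$. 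Plugging back into the integral representation again gives the $e^{\priveps}$ bound on $\prob(\fpriv \in \SS \mid \D)/\prob(\fpriv \in \SS \mid \D')$.

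The main subtlety — and the only place where one has to be careful — is the interaction in Case~A between the random classifier $\mbf{f}_j$ and the score $z_j$: a priori one might worry that $z_j$ magnifies the privacy loss because it depends on $\mbf{f}_j$. The resolution is that $z_j$ is a post-processing of $\mbf{f}_j$ and $\D_{m+1}$, so conditioning on $\mbf{f}_j$ we get no extra privacy cost, and the $e^{\priveps}$ factor on the density of $\mbf{f}_j$ is the whole story. Making this explicit via the marginalization above is what turns the informal ``parallel composition plus exponential mechanism'' picture into a rigorous argument; the remaining steps are routine.
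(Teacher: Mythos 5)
Your proposal is correct and takes essentially the same route as the paper's proof: the identical case split on which disjoint block contains the differing entry, the product factorization over independently randomized blocks, the base learner's $e^{\priveps}$ density bound when the change lies in a training block, and the exponential-mechanism bound when it lies in the validation block. The only differences are presentational: you integrate the selection event directly against the joint density of $(\mbf{f}_1,\ldots,\mbf{f}_m)$, whereas the paper first proves privacy for the augmented mechanism releasing $(\mbf{f}_1,\ldots,\mbf{f}_m,i)$ and then invokes a simulatability (post-processing) argument, and you spell out the $e^{\priveps/2}\cdot e^{\priveps/2}$ sensitivity-$1$ calculation that the paper imports from \cite{MT07} by assertion.
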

\begin{proof}To show that Algorithm 4 preserves $\priveps$-differential privacy, we
first consider an alternative procedure $\M$. Let $\M$ be the procedure
that releases the values $(\f_1, \ldots, \f_m, i)$ where, $\f_1, \ldots,
\f_m$ are the intermediate values computed in the
second step of Algorithm 4, and $i$ is the index selected by the
exponential mechanism step. We first show that $\M$ preserves
$\priveps$-differential privacy.

Let $\D$ and $\D'$ be two datasets that differ in the value of one
individual such that $\D = \bar{\D} \cup \{ (\x, y) \}$, and $\D' =
\bar{\D} \cup \{ (\x', y') \}$. 

Recall that the datasets $\D_1, \ldots, \D_{m+1}$ are disjoint; moreover,
the randomness in the privacy mechanisms are independent. Therefore, 
	\begin{align}
	\prob\left(\f_1 \in \SS_1, \ldots, \f_m \in \SS_m, i = i^{\ast} | \D\right) &  \nonumber \\
	&\hspace{-1in} 
	= \int_{\SS_1 \times \ldots \SS_m} 
		\prob\left(i = i^{\ast} | \f_1, \ldots, \f_m, \D_{m+1}\right) 
		\mu(\f_1, \ldots, \f_m | \D) 
		d \f_1 \ldots d \f_m \nonumber \\
	&\hspace{-1in} 
	= \int_{\SS_1 \times \ldots \SS_m} 
		\prob\left(i = i^{\ast} | \f_1, \ldots, \f_m, \D_{m+1}\right) 
		\prod_{j=1}^m \mu_j(\f_j | \D_j) 
		d\f_1 \ldots d\f_m, 
	\label{eqn:tuningprod}
	\end{align}
where $\mu_j(\f)$ is the density at $\f$ induced by the classifier run with
parameter $\reg_j$, and $\mu(\f_1, \ldots, \f_m)$ is the joint density at
$\f_1, \ldots, \f_m$, induced by $\M$. Now suppose that $(\x, y) \in \D_j$, 
for $j=m+1$. Then, $\D_k = \D'_k$, and $\mu_j(\f_j | \D_j) = \mu_j(\f_j | \D'_j)$,
for $k \in [m]$. Moreover, given any fixed set $\f_1, \ldots, \f_m$, 
	\begin{align}
	\prob\left(i = i^{\ast} | \D'_{m+1}, \f_1, \ldots, \f_m\right) 
	\leq 
	e^{\priveps} \prob\left(i = i^{\ast} | \D_{m+1}, \f_1, \ldots, \f_m\right).
	\label{eq:tunediff}
	\end{align}
Instead, if $(\x, y) \in \D_j$, for $j \in [m]$, then, $\D_k = \D'_k$, for
$k \in [m+1], k \neq j$. Thus, for a fixed $\f_1, \ldots, \f_m$,
	\begin{align} 
	\prob\left(i = i^* | \D'_{m+1}, \f_1, \ldots, \f_m\right)
		&= \prob\left(i = i^{\ast} | \D_{m+1}, \f_1, \ldots, \f_m\right) \\
	\mu_k(\f_k |\D_k) &\leq e^{\priveps} \mu_k(\f_k | \D'_k).
	\label{eq:tunesame}
	\end{align}
The lemma follows by combining (\ref{eqn:tuningprod})-(\ref{eq:tunesame}).

Now, an adversary who has access to the output of $\M$ can compute the
output of Algorithm 4 itself, without any further access to the dataset.
Therefore, by a simulatibility argument, as
in~\cite{DworkMNS:06sensitivity}, Algorithm 4 also preserves
$\priveps$-differential privacy.
\end{proof}

In the theorem above, we assume that the individual algorithms for
privacy-preserving classification satisfy Definition~\ref{def:densitypriv};
a similar theorem can also be shown when they satisfy a guarantee as in
Corollary~\ref{cor:huber}.

The following theorem shows that the empirical error on $\D_{K+1}$ of the classifier output by the tuning procedure is close to the empirical error of the best classifier in the set $\{ \mbf{f}_1, \ldots, \mbf{f}_K \}$.  The proof of this Theorem follows from Lemma 7 of \cite{MT07}.

\begin{theorem}    \label{thm:tune:err}
Let $z_{\min} = \min_i z_i$, and let $z$ be the number of mistakes made on
$D_{m+1}$ by the classifier output by our tuning procedure. Then, with probability $1 - \delta$,
	\begin{align}
	z \le z_{\min} + \frac{2\log(m/\delta)}{\priveps}.
	\end{align}
\end{theorem}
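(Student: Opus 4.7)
The plan is to recognize that the final selection step in Algorithm~\ref{alg:pptune} is precisely the exponential mechanism of \cite{MT07} applied to the fixed collection of classifiers $\{\mbf{f}_1,\ldots,\mbf{f}_m\}$ with quality function $q(\mbf{f}_i;\mc{D}_{m+1}) = -z_i$. Since $z_i$ counts misclassifications on $\mc{D}_{m+1}$, changing a single entry of $\mc{D}_{m+1}$ changes $z_i$ by at most $1$, so the sensitivity of $q$ is $\Delta q = 1$. Conditioning on $\mc{D}_1,\ldots,\mc{D}_m$ (which fixes the $\mbf{f}_i$'s) the selection rule $q_i \propto e^{-\priveps z_i/2}$ is the exponential mechanism with privacy parameter $\priveps$ and sensitivity $1$, so the standard utility lemma applies.

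Concretely, let $i^{\ast}$ denote the index selected by the procedure and let $z = z_{i^\ast}$. First I would lower-bound the normalizing constant by a single term: $\sum_{j=1}^{m} e^{-\priveps z_j/2} \ge e^{-\priveps z_{\min}/2}$. Next, for any $t > 0$, I would union-bound over the at most $m$ ``bad'' indices with $z_j \ge z_{\min} + t$:
\begin{align}
\prob\bigl( z \ge z_{\min} + t \bigr)
   = \sum_{j:\, z_j \ge z_{\min}+t} \frac{e^{-\priveps z_j/2}}{\sum_{k} e^{-\priveps z_k/2}}
   \le m \cdot \frac{e^{-\priveps (z_{\min}+t)/2}}{e^{-\priveps z_{\min}/2}}
   = m\, e^{-\priveps t/2}.
\end{align}
Setting the right-hand side equal to $\delta$ and solving yields $t = \tfrac{2\log(m/\delta)}{\priveps}$, which gives the claimed bound with probability at least $1-\delta$.

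There is essentially no hard step; the only subtlety to address is that the randomness is taken jointly over the training of $\mbf{f}_1,\ldots,\mbf{f}_m$ and the selection step, but since the bound above holds for every realization of $(\mbf{f}_1,\ldots,\mbf{f}_m)$ (the $z_j$'s are determined by these and by $\mc{D}_{m+1}$, and $z_{\min}$ is defined with respect to this realization), integrating out that randomness preserves the same high-probability guarantee. This matches Lemma~7 of \cite{MT07} specialized to quality function $-z_i$ with sensitivity $1$.
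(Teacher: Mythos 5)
Your proof is correct and takes essentially the same approach as the paper: the paper simply invokes Lemma~7 of \cite{MT07} (with the uniform base measure on $[m]$, so $\mu(S_t) \ge 1/m$) and solves $m e^{-\priveps t/2} = \delta$ for $t$, which is exactly the computation you carry out. The only difference is that you inline the lemma's proof — lower-bounding the normalizer by the single term $e^{-\priveps z_{\min}/2}$ and union-bounding over the at most $m$ indices with $z_j \ge z_{\min} + t$ — and you make explicit the (correct) observation that the bound holds conditionally on every realization of $(\mbf{f}_1,\ldots,\mbf{f}_m)$, a point the paper leaves implicit.
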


\begin{proof}
In the notation of \cite{MT07}, the $z_{\min} = OPT$, the base measure $\mu$ is uniform on $[m]$, and
$S_t = \{ i: z_i < z_{\min} + t \}$.  Their Lemma 7 shows that
	\begin{align}
	\prob\left( \bar{S}_{2t} \right) \le \frac{\exp(-\priveps t)}{\mu(S_t)},
	\end{align}
where $\mu$ is the uniform measure on $[m]$.  Using $\min \mu(S_t) = \frac{1}{m}$ to upper bound the right
side and setting it equal to $\delta$ we obtain
	\begin{align}
	t = \frac{1}{\priveps} \log \frac{m}{\delta}.
	\end{align}
From this we have
	\begin{align}
	\prob\left( z \ge z_{min} + \frac{2}{\priveps} \log \frac{m}{\delta} \right) \le \delta,
	\end{align}
and the result follows.
\end{proof}

\section{Experiments}

In this section we give experimental results for training linear classifiers 
with Algorithms \ref{alg:output} and \ref{alg:objective} on two real datasets.  Imposing privacy requirements necessarily degrades classifier performance.
 Our experiments show that 
provided there is sufficient data, 
objective perturbation (Algorithm \ref{alg:objective})  typically
outperforms the
sensitivity method (\ref{alg:output}) significantly, and achieves error rate close to that of the analogous non-private ERM algorithm.  We first demonstrate how the accuracy 
of the classification algorithms vary with $\priveps$, the privacy requirement. 
We then show how the performance of privacy-preserving classification 
varies with increasing training data size.

The first dataset we consider is the \adult\ dataset from the UCI Machine
Learning Repository \citep{uciadult}. This moderately-sized dataset contains demographic information about approximately $47,000$ individuals, and the
classification task is to predict whether the annual income of an
individual is below or above \$50,000, based on variables such as age, sex,
occupation, and education.  For our experiments, the average fraction of
positive labels is about $0.25$; therefore, a trivial classifier that always
predicts $-1$ will achieve this error-rate, and only error-rates 
below $0.25$ are interesting. 

The second dataset we consider is the \kddcup\ dataset \citep{kddcup99};
the task here is to predict whether a network connection is a denial-of-service
attack or not, based on several attributes. The dataset includes about
5,000,000 instances. For this data the average fraction of positive 
labels is $0.20$. 

In order to implement the convex minimization procedure, we use the convex optimization library provided by~\cite{convexopt}.  

\subsection{Preprocessing}

In order to process the \adult\ dataset into a form amenable for
classification, we removed all entries with missing values, and converted
each categorial attribute to a binary vector.  For example, an attribute such as
\texttt{(Male,Female)} was converted into 2 binary features.  Each column
was normalized to ensure that the maximum value is $1$, and then each row
is normalized to ensure that the norm of any example is at most $1$.  After
preprocessing, each example was represented by a $105$-dimensional vector, of norm at most $1$.

For the \kddcup\ dataset, the instances were preprocessed by converting
each categorial attribute to a binary vector. Each column was normalized to
ensure that the maximum value is $1$, and finally, each row was normalized,
to ensure that the norm of any example is at most $1$. After preprocessing,
each example was represented by a $119$-dimensional vector, of norm at most $1$.

\subsection{Privacy-Accuracy Tradeoff}

For our first set of experiments, we study the tradeoff between the privacy
requirement on the classifier, and its classification accuracy, when the classifier
is trained on data of a fixed size. The privacy requirement is quantified by the value of $\priveps$; increasing $\priveps$ implies a higher change in the belief of the 
adversary when one entry in $\D$ changes, and thus lower privacy. To measure
accuracy, we use classification (test) error; namely, the fraction of times the
classifier predicts a label with the wrong sign.

\begin{figure}[ht]
\centering
\subfigure[Regularized logistic regression, \adult\ ]{
	\includegraphics[scale=0.4]{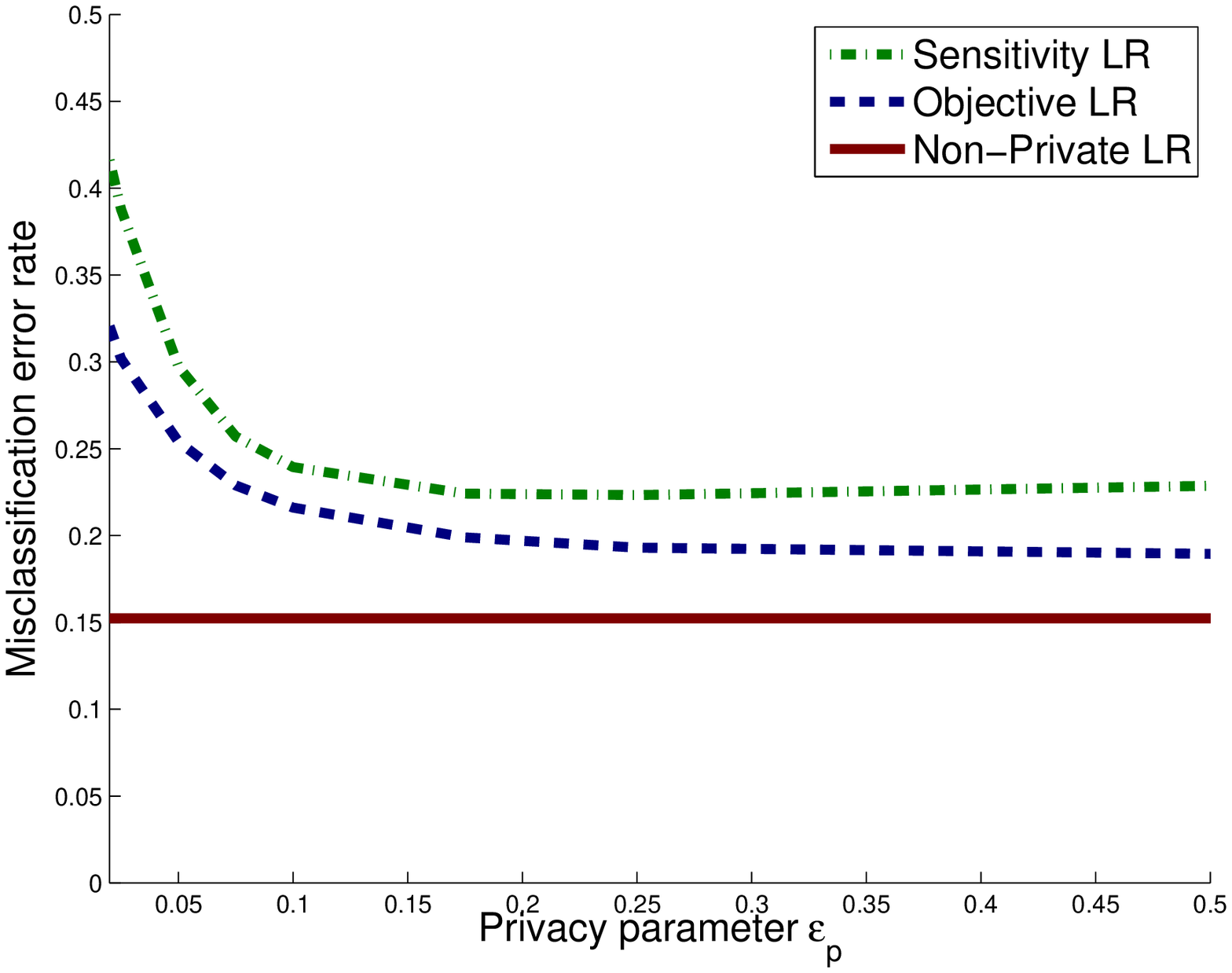}
	}
\subfigure[Regularized SVM, \adult\ ]{
	\includegraphics[scale=0.4]{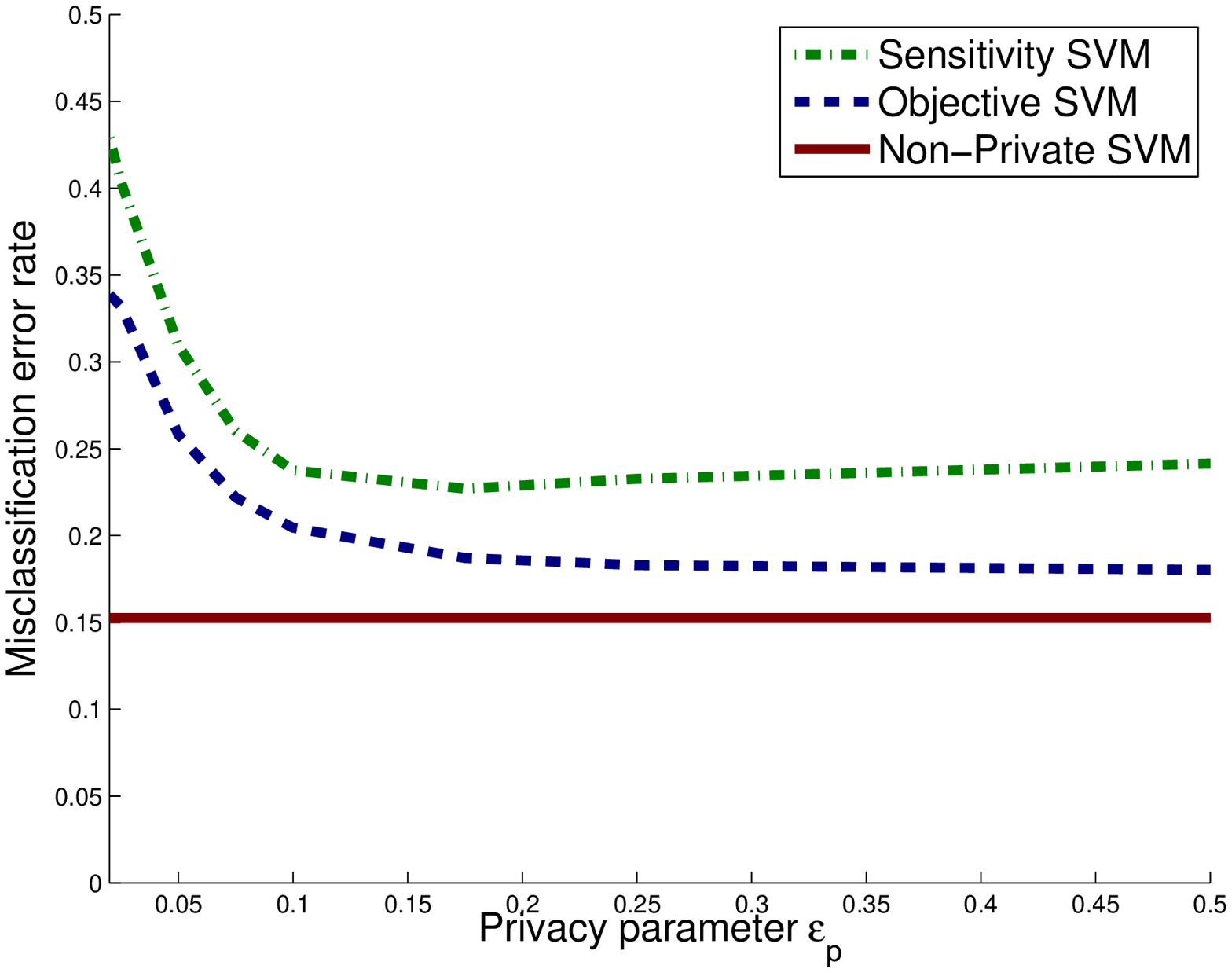}
	}
\caption{Privacy-Accuracy trade-off for the \adult\ dataset}
\label{fig:adulteps}
\end{figure}

\begin{figure}[ht]
\centering
\subfigure[Regularized logistic regression, \kddcup\ ]{
	\includegraphics[scale=0.4]{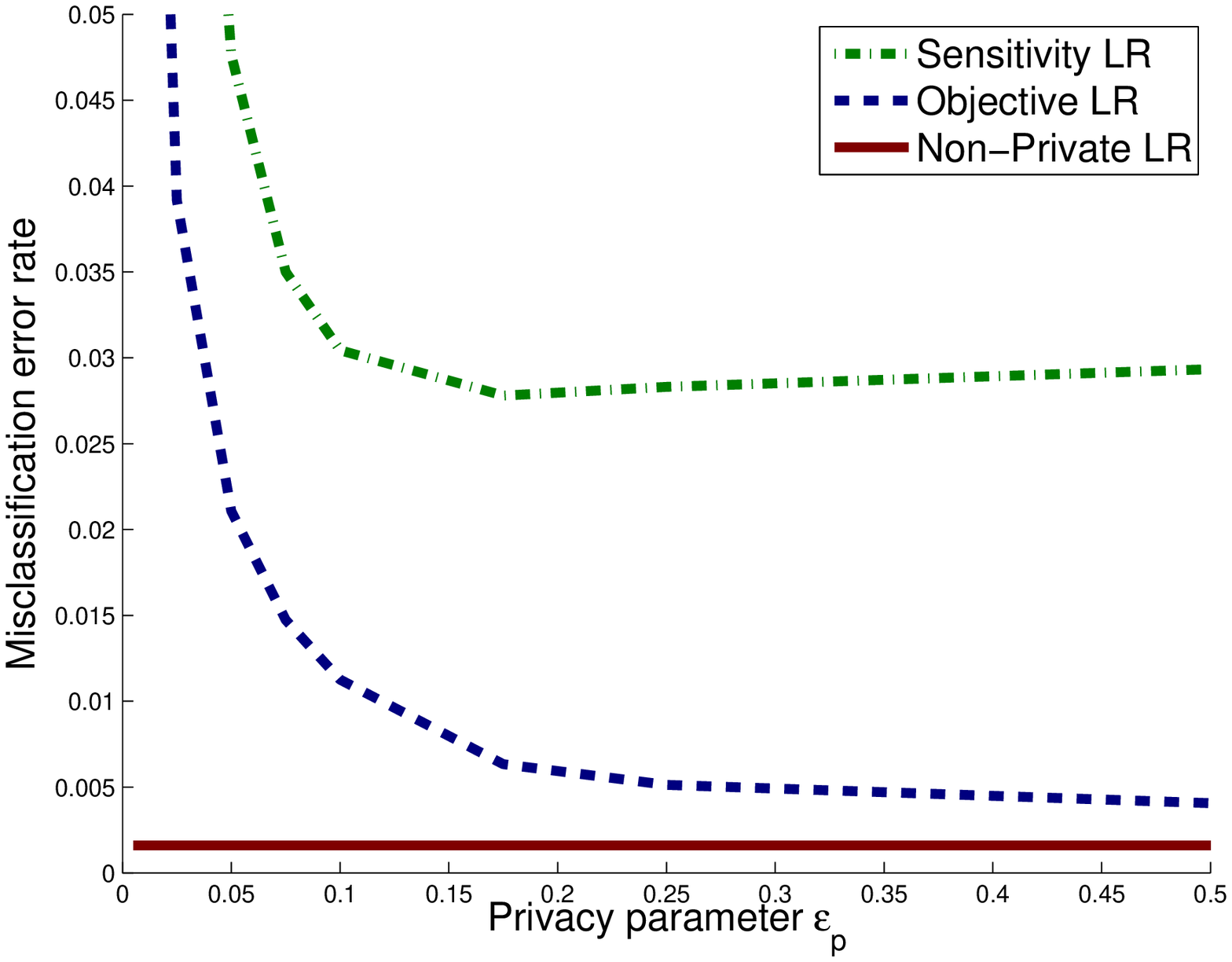}
	}
\subfigure[Regularized SVM, \kddcup\ ]{
	\includegraphics[scale=0.4]{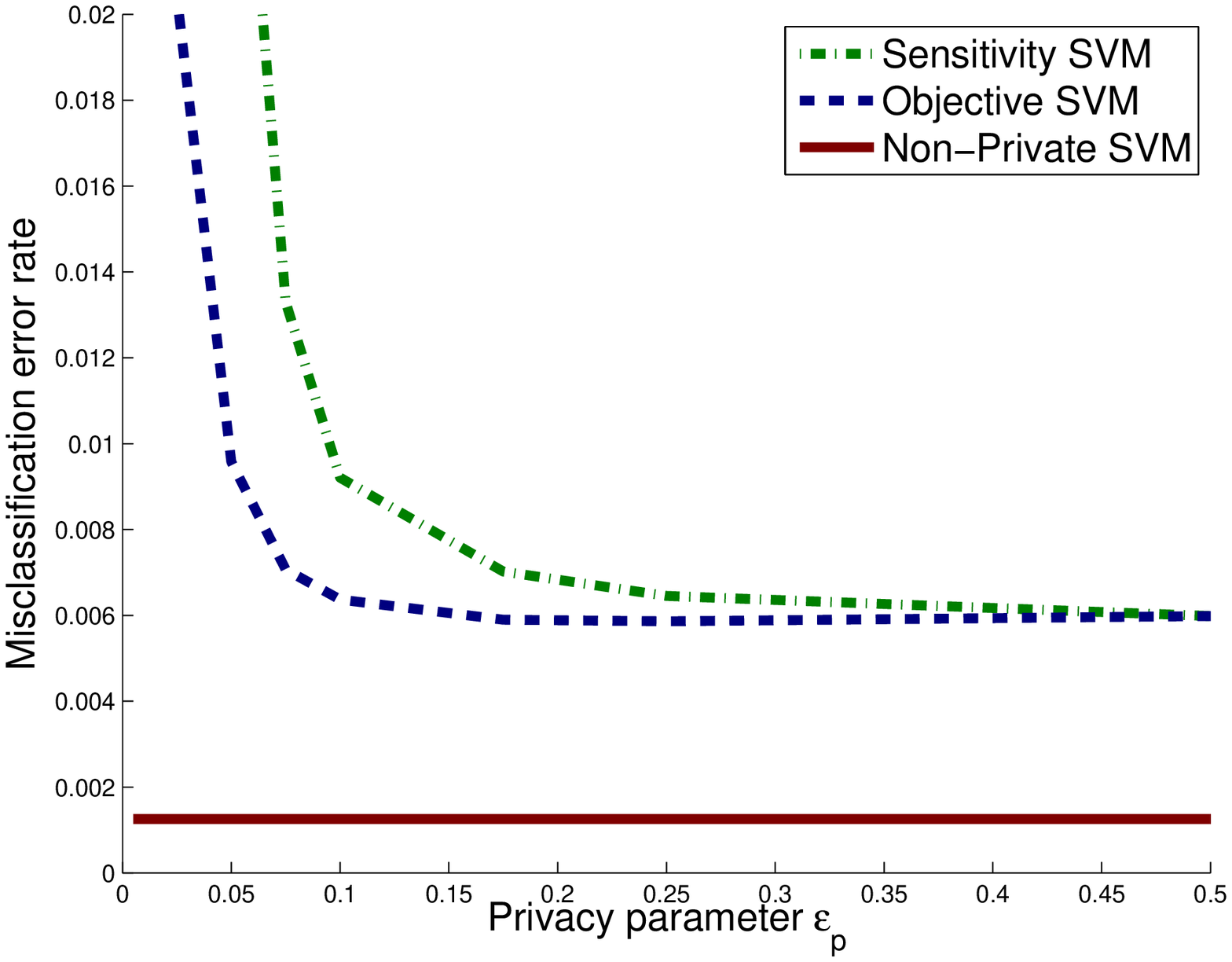}
	}
\caption{Privacy-Accuracy trade-off for the \kddcup\ dataset}
\label{fig:kddeps}
\end{figure}

To study the privacy-accuracy tradeoff, we compare objective perturbation with
the sensitivity method for logistic regression and Huber SVM. For Huber
SVM, we picked the Huber constant $h = 0.5$, a typical value
\citep{Chapelle:07primal}\footnote{\cite{Chapelle:07primal} recommends using
$h$ between $0.01$ and $0.5$; we use $h=0.5$ as we found that a higher value typically
leads to more numerical stability, as well as better performance for both
privacy-preserving methods.}.
For each data set we trained  classifiers for a few fixed values of $\reg$ and tested the error of these classifiers.  For each algorithm %
we chose the value of $\reg$
that minimizes the error-rate for $\priveps = 0.1$.\footnote{For \kddcup\ the error of the non-private algorithms did not increase with decreasing $\reg$.%
}  We then plotted the error-rate against $\priveps$ for the chosen value of $\reg$.  The results are shown in
Figures \ref{fig:adulteps} and \ref{fig:kddeps} for both logistic regression and support vector machines\footnote{The slight kink in the SVM curve on \adult\ is due to a switch to the second phase of the algorithm.}.  The optimal values of $\reg$ are shown in Tables~\ref{tab:regvaluesAdult} and ~\ref{tab:regvaluesKDD}. For non-private logistic regression and SVM, each presented error-rate is an average over $10$-fold cross-validation; for the sensitivity method
as well as objective perturbation, the presented error-rate is an average 
over $10$-fold cross-validation and $50$ runs of the randomized training
procedure.  For \adult, the privacy-accuracy tradeoff is computed over the entire dataset, which consists of $45,220$ examples; for \kddcup\ we use a randomly chosen subset of $70,000$ examples.  

\begin{table}
\centering
\begin{tabular}{c|llllllll}
$\reg$  & $10^{-10.0}$  & $10^{-7.0}$  & $10^{-4.0}$  & $10^{-3.5}$  & $10^{-3.0}$  & $10^{-2.5}$  & $10^{-2.0}$  & $10^{-1.5}$  \\ 
\hline 
\hline
Logistic  & & & & & & & &   \\ 
Non-Private & 0.1540  & \textbf{0.1533}  & 0.1654  & 0.1694  & 0.1758  & 0.1895  & 0.2322  & 0.2478  \\ 
Output & 0.5318  & 0.5318  & 0.5175  & 0.4928  & 0.4310  & 0.3163  & \textbf{0.2395}  & 0.2456  \\ 
Objective & 0.8248  & 0.8248  & 0.8248  & 0.2694  & 0.2369  & \textbf{0.2161}  & 0.2305  & 0.2475  \\ 
\hline
\hline
Huber & & & & & & & &   \\ 
Non-Private & 0.1527  & \textbf{0.1521}  & 0.1632  & 0.1669  & 0.1719  & 0.1793  & 0.2454  & 0.2478  \\ 
Output & 0.5318  & 0.5318  & 0.5211  & 0.5011  & 0.4464  & 0.3352  & \textbf{0.2376}  & 0.2476  \\ 
Objective & 0.2585  & 0.2585  & 0.2585  & 0.2582  & 0.2559  & \textbf{0.2046}  & 0.2319  & 0.2478  \\ 
\end{tabular}
\caption{Error for different regularization parameters on \adult\ for $\priveps = 0.1$.  The best error per algorithm is in bold.}
\label{tab:regvaluesAdult}
\end{table}

\begin{table}
\centering
\begin{tabular}{c|llllllll}
$\reg$  & $10^{-9.0}$  & $10^{-7.0}$  & $10^{-5.0}$  & $10^{-3.5}$  & $10^{-3.0}$  & $10^{-2.5}$  & $10^{-2.0}$  & $10^{-1.5}$ \\
\hline
\hline 
Logistic & & & & & & & &  \\ 
Non-Private & 0.0016  & \textbf{0.0016}  & 0.0021  & 0.0038  & 0.0037  & 0.0037  & 0.0325  & 0.0594  \\ 
Output & 0.5245  & 0.5245  & 0.5093  & 0.3518  & 0.1114  & 0.0359  & \textbf{0.0304}  & 0.0678  \\ 
Objective & 0.2084  & 0.2084  & 0.2084  & 0.0196  & 0.0118  & \textbf{0.0113}  & 0.0285  & 0.0591  \\ 
\hline
\hline
Huber & & & & & & & &  \\ 
Non-Private & \textbf{0.0013}  & 0.0013  & 0.0013  & 0.0029  & 0.0051  & 0.0056  & 0.0061  & 0.0163  \\ 
Output & 0.5245  & 0.5245  & 0.5229  & 0.4611  & 0.3353  & 0.0590  & \textbf{0.0092}  & 0.0179  \\ 
Objective & 0.0191  & 0.0191  & 0.0191  & 0.1827  & 0.0123  & 0.0066  & \textbf{0.0064}  & 0.0157  \\ 
\end{tabular}
\caption{Error for different regularization parameters on \kddcup\ for $\priveps = 0.1$.  The best error per algorithm is in bold. %
}
\label{tab:regvaluesKDD}
\end{table}

For the \adult\ dataset, the constant classifier that classifies all
examples to be negative acheives a classification error of about $0.25$.
The sensitivity method thus does slightly better than this constant
classifier for most values of $\priveps$ for both logistic regression and
support vector machines. Objective perturbation outperforms sensitivity,
and objective perturbation for support vector machines achieves lower
classification error than objective perturbation for logistic regression.
Non-private logistic regression and support vector machines both have
classification error about $0.15$. 

For the \kddcup\ dataset, the constant classifier that classifies all
examples as negative, has error $0.19$. Again, objective perturbation
outperforms sensitivity for both logistic regression and support vector
machines; however, for SVM and high values of $\priveps$ (low privacy), the
sensitivity method performs almost as well as objective perturbation.  In
the low privacy regime, logistic regression under objective perturbation is
better than support vector machines.  In contrast, in the high privacy
regime (low $\priveps$), support vector machines with objective perturbation
outperform logistic regression. For this dataset, non-private logistic regression and support vector machines both have a classification error of about $0.001$.

For SVMs on both \adult\ and \kddcup, for large $\priveps$ ($0.25$
onwards), the error of either of the private methods can increase slightly with increasing $\priveps$. This
seems  counterintuitive, but appears to be due the imbalance in
fraction of the two labels. As the labels are imbalanced, the optimal
classifier is trained to perform better on the negative labels than the
positives.  As $\priveps$ increases, for a fixed training data size, so
does the perturbation from the optimal classifier, induced by either of the private methods. Thus, as the
perturbation increases, the number of false positives increases, whereas
the number of false negatives decreases (as we verified by 
measuring the average false
positive and false negative rates of the private classifiers).
Therefore, the total error may increase slightly with decreasing privacy.

\subsection{Accuracy vs. Training Data Size Tradeoffs}

Next we examine how classification accuracy varies as we increase the size of the training set. We measure classification accuracy as the accuracy of the classifier produced by the tuning procedure in Section~\ref{sec:tuning}. As the \adult\ dataset is not sufficiently large to allow us to do privacy-preserving tuning, for these experiments, we
restrict our attention to the \kddcup\ dataset.

Figures \ref{fig:lclr} and \ref{fig:lcsvm} present the learning curves for objective
perturbation, non-private ERM and the sensitivity method for logistic loss and Huber loss, respectively.  Experiments are shown for $\priveps=0.01$ and $\priveps=0.05$ for both loss functions.  The training sets (for each  of 5 values of $\reg$) are chosen to be
of size $n=60,000$ to $n=120,000$, and the validation and test sets each
are of size $25,000$.   Each presented value is an average over $5$ random
permutations of the data, and $50$ runs of the randomized classification
procedure.  For objective perturbation we performed experiment in the
regime when $\priveps' > 0$, so $\Delta = 0$ in Algorithm
\ref{alg:objective}.\footnote{This was chosen for a fair comparison with non-private as well as the output perturbation method, both of which had access to only $5$ values of
$\reg$.}

For non-private ERM, we present result for training sets from $n=300,000$ to $n=600,000$.  
The non-private algorithms are tuned by comparing $5$ values of $\reg$ on the same training set, and the test set is of size $25,000$. Each reported value is an average over $5$ random permutations of the data.

We see from the figures that for non-private logistic regression and
support vector machines, the error remains constant with increasing data
size. For the private methods, the error usually decreases as the data size
increases. In all cases, objective perturbation outperforms the sensitivity
method, and support vector machines generally outperform  logistic regression.

\begin{figure}[ht]
\centering
\subfigure[$\priveps = 0.05$]{
	\includegraphics[scale=0.4]{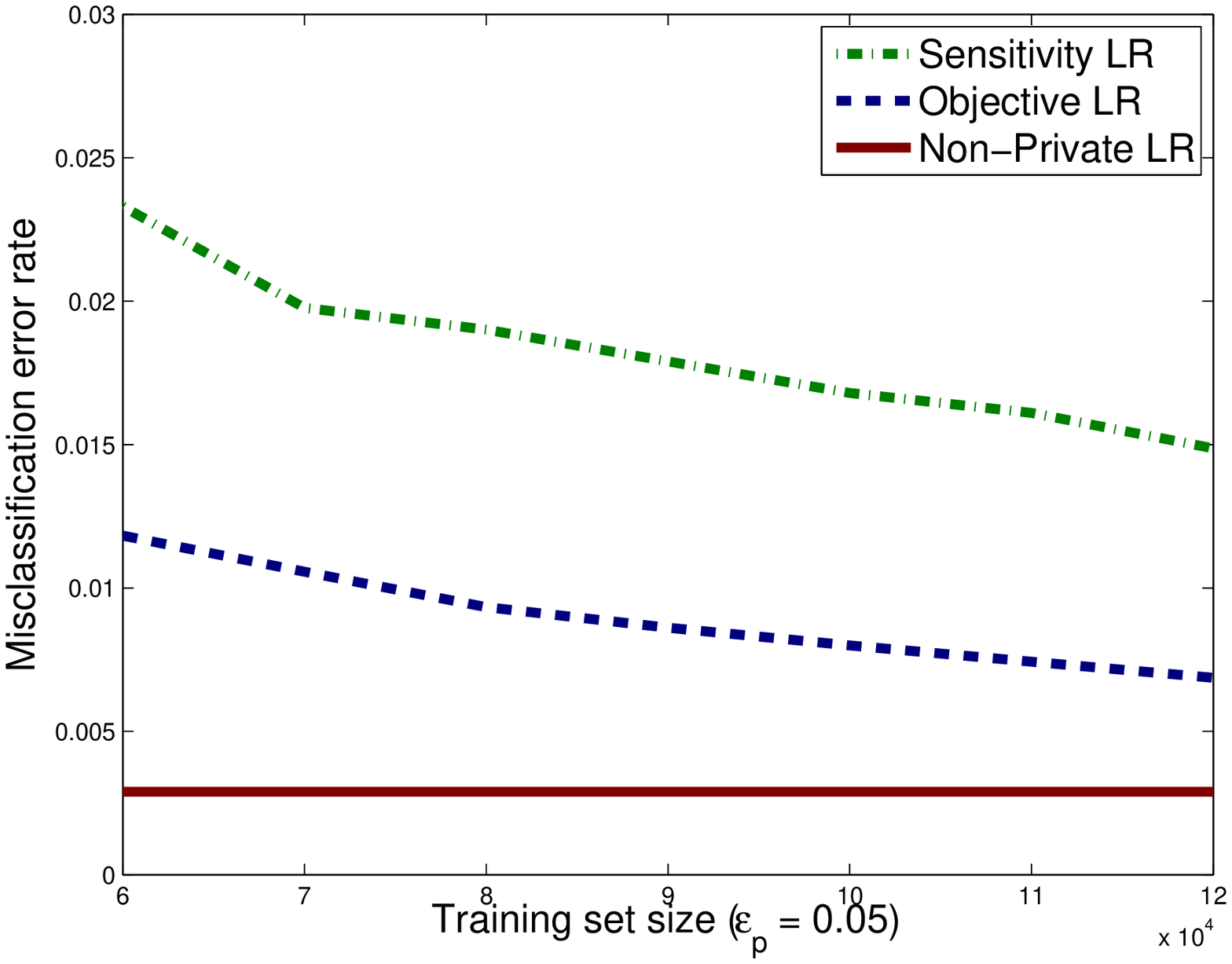}
	}
\subfigure[$\priveps = 0.01$]{
	\includegraphics[scale=0.4]{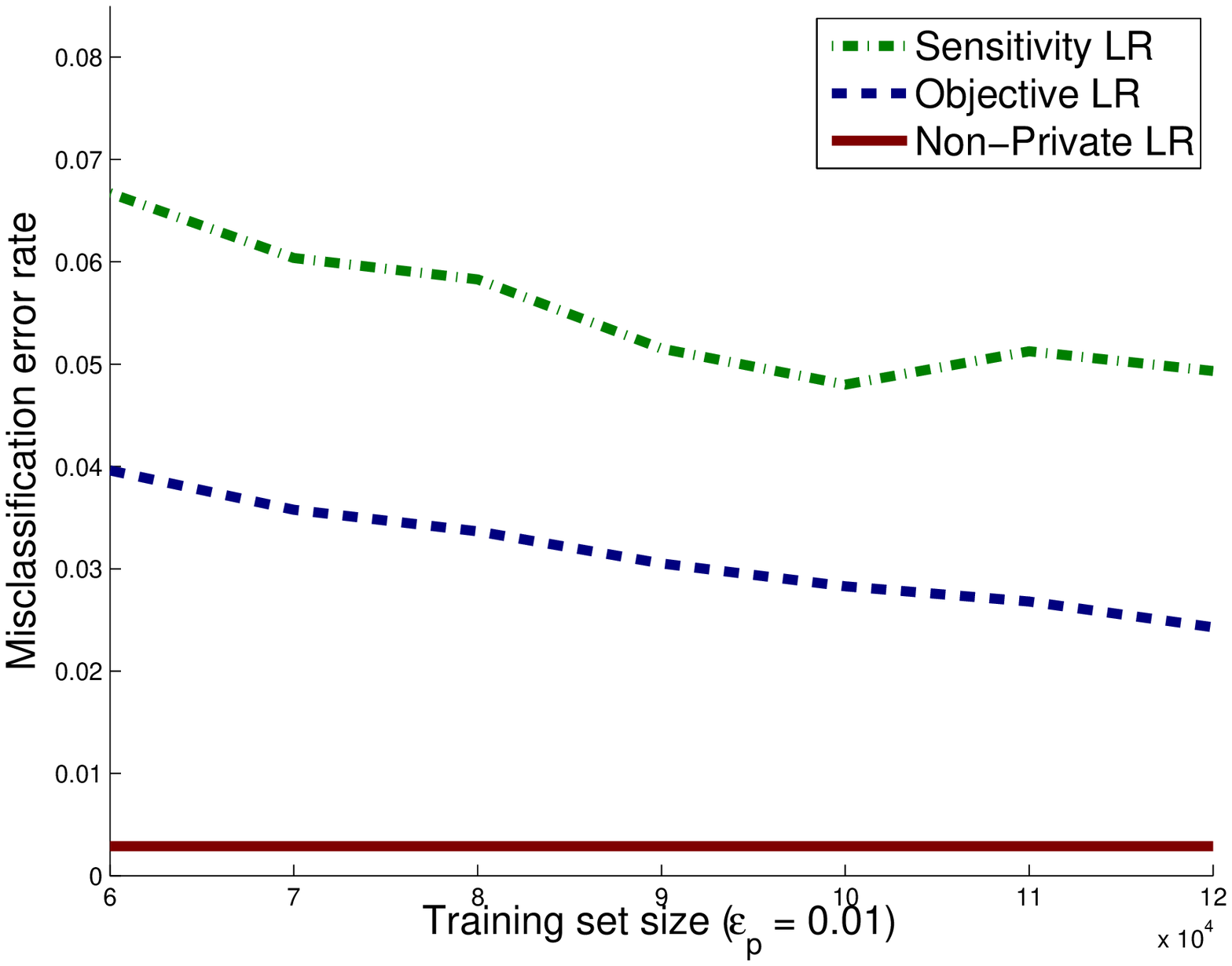}
	}
\caption{Learning curves for logistic regression on the \kddcup\ dataset}
\label{fig:lclr}
\end{figure}

\begin{figure}[ht]
\centering
\subfigure[$\priveps = 0.05$]{
	\includegraphics[scale=0.4]{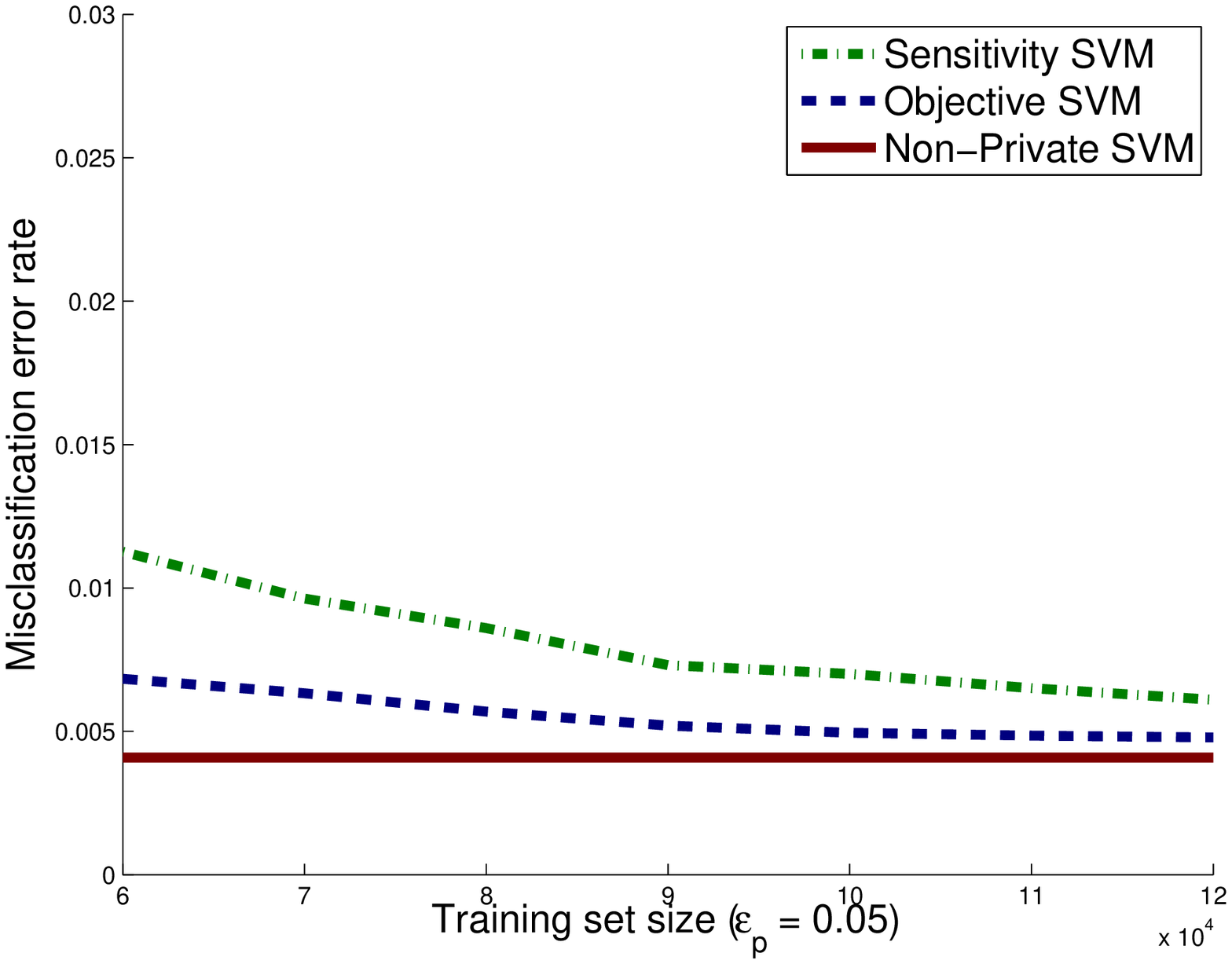}
	}
\subfigure[$\priveps = 0.01$]{
	\includegraphics[scale=0.4]{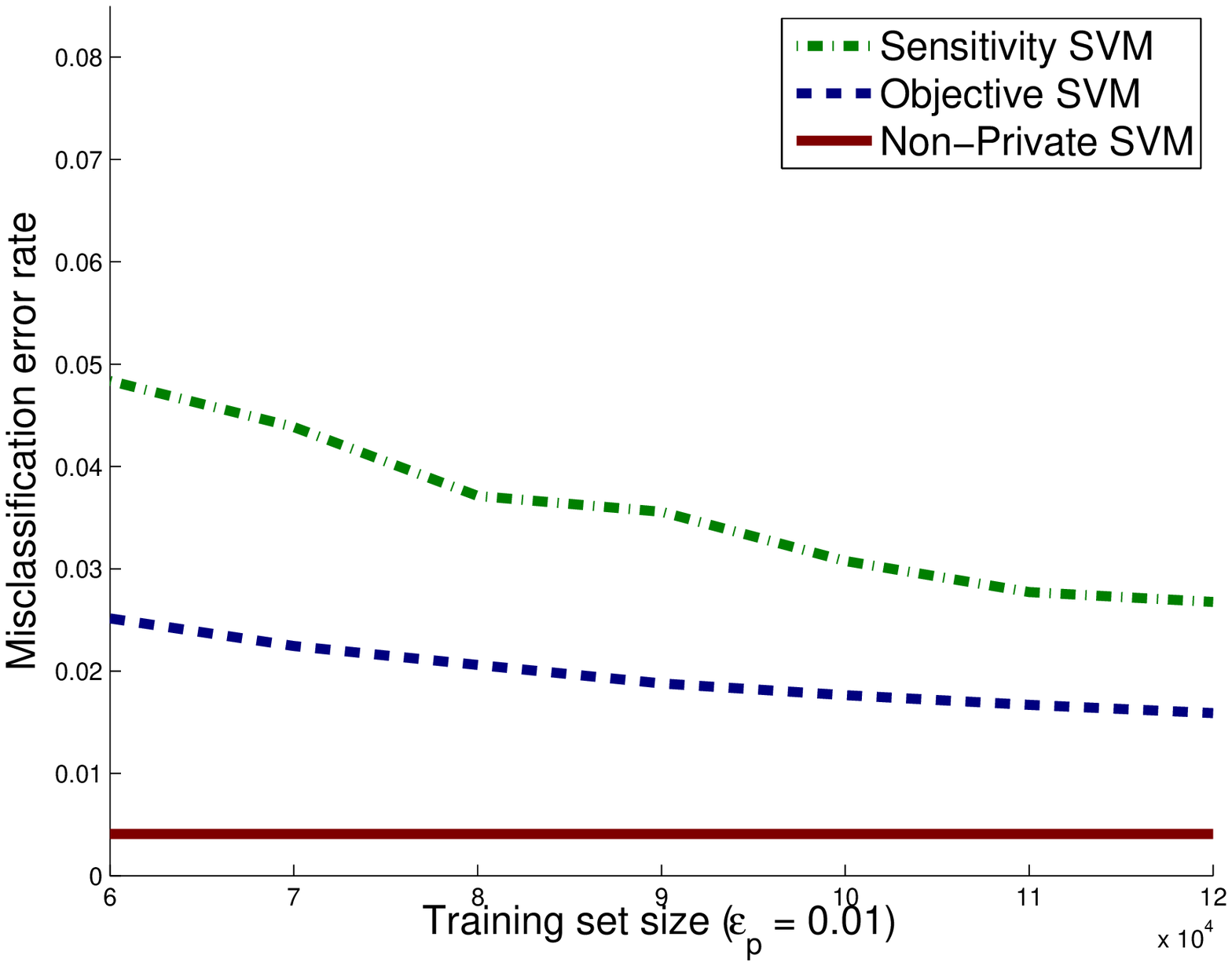}
	}
\caption{Learning curves for SVM on the \kddcup\ dataset }
\label{fig:lcsvm}
\end{figure}

\section{Discussions and Conclusions}

In this paper we study the problem of learning classifiers with
regularized empirical risk minimization in a privacy-preserving manner.
We consider privacy in the $\priveps$-differential privacy model
of~\cite{DworkMNS:06sensitivity} and provide two algorithms for
privacy-preserving ERM.   The first one is based on the
sensitivity method due to~\cite{DworkMNS:06sensitivity}, in which the
output of the non-private algorithm is perturbed by adding noise.  
We introduce a second algorithm based on the new paradigm of objective perturbation. We
provide bounds on the sample requirement of these algorithms for
achieving generalization error $\geneps$. We show how to apply these
algorithms with kernels, and finally, we provide experiments with both
algorithms on two real datasets.   Our work is, to our knowledge, the first to propose computationally efficient classification algorithms satisfying differential privacy, together with validation on standard data sets.

In general, for classification, the error rate increases as the
privacy requirements are made more stringent.  Our generalization
guarantees formalize this ``price of privacy.'' Our experiments, as
well as theoretical results, indicate that objective perturbation
usually outperforms the sensitivity methods at managing the tradeoff
between privacy and learning performance.  Both algorithms perform
better with more training data, and when abundant training data is
available, the performance of both algorithms can be close to
non-private classification. 

The conditions on the loss function and regularizer 
required by output perturbation and objective perturbation
 are somewhat different. As Theorem~\ref{thm:sensitivitypriv} shows, output 
 perturbation requires strong convexity in
 the regularizer and convexity as well as a bounded derivative condition in the loss
 function. The last condition can be replaced by a Lipschitz condition
 instead. However, the other two conditions appear to be required, unless
 we impose some further restrictions on the loss and regularizer. Objective
 perturbation on the other hand, requires strong convexity of the
 regularizer, convexity, differentiability, and bounded double derivatives in
 the loss function. Sometimes, it is possible to construct a
 differentiable approximation to the loss function, even if the loss
 function is not itself differentiable, as
 shown in Section~\ref{sec:svmprivacy}.  
 
 Our experimental as well as theoretical results indicate that in general,
 objective perturbation provides more accurate solutions than output
 perturbation. Thus, if the loss function satisfies the conditions of
 Theorem~\ref{thm:objectivepriv}, we recommend using objective
 perturbation. In some situations, such as for SVMs, it is possible that
 objective perturbation does not directly apply, but applies to an
 approximation of the target loss function. In our experiments, the
 loss of statistical efficiency due to such approximation has been small compared to the loss
 of efficiency due to privacy, and we suspect that this is the case for many
 practical situations as well. 

Finally, our work does not address the question of finding private solutions to
regularized ERM when the regularizer is not
strongly convex. For example, neither the output perturbation, nor the
objective perturbation method work for $L_1$-regularized ERM.
However, in $L_1$-regularized ERM, one can find a dataset in which
a change in one training point can significantly change the solution. 
As a result, it is possible that such problems are 
inherently difficult to solve privately. 

An open question in this work is to extend objective perturbation
methods to more general convex optimization problems. Currently, the
objective perturbation method applies to strongly convex regularization 
functions and differentiable losses.  Convex optimization problems appear
in many contexts within and without machine learning: density estimation,
resource allocation for communication systems and networking, social 
welfare optimization in economics, and elsewhere.  In some cases these algorithms
will also operate on sensitive or private data.  Extending the ideas and
analysis here to those settings would provide a rigorous foundation for
privacy analysis.

A second open question is to find a better solution for privacy-preserving
classification with kernels. Our current method is based on a reduction to
the linear case, using the algorithm of \cite{RahimiR:08kitchen}; however,
this method can be statistically inefficient, and require a lot of training data, 
particularly when coupled with our privacy mechanism.   The reason is that the algorithm of \cite{RahimiR:08kitchen} requires the dimension $D$ of the projected space
to be very high for good performance.  However, most differentially-private
algorithms perform worse as the dimensionality of the data grows. Is there a better
linearization method, which is possibly data-dependent, that will provide a more
statistically efficient solution to privacy-preserving learning with
kernels?

A final question is to provide better upper and lower bounds on
the sample requirement of privacy-preserving linear classification. 
The main open question here is to provide a computationally efficient algorithm for 
linear classification which has better statistical efficiency.

Privacy-preserving machine learning is the endeavor of designing private analogues of widely used machine learning algorithms. 
We believe the present study is a starting point for further study of the differential privacy model in this relatively new subfield of machine learning. The work of \cite{DworkMNS:06sensitivity} set up a framework for assessing the privacy risks associated with publishing the results of data analyses.  Demanding high privacy requires sacrificing utility, which in the context of classification and prediction is excess loss or regret.  In this paper we demonstrate the privacy-utility tradeoff for ERM, which is but one corner of the machine learning world.  Applying these privacy concepts %
to other machine learning problems will lead to new and interesting tradeoffs and towards a set of tools for practical privacy-preserving learning and inference.  
We hope that our work provides a benchmark of the current
price of privacy, and inspires improvements in future work.

\subsection*{Acknowledgments}

The authors would like to thank Sanjoy Dasgupta and Daniel Hsu for
several pointers, and to acknowledge Adam Smith, Dan Kifer, and Abhradeep Guha Thakurta, who helped point out an error in the previous version of the paper.   The work of K. Chaudhuri and A.D. Sarwate was supported in part by the California Institute for Telecommunications and Information Technologies (CALIT2) at UC San Diego.  K. Chaudhuri was also supported by National Science Foundation IIS-0713540.   Part of this work was done while C. Monteleoni was at UC San Diego, with support from National Science Foundation IIS-0713540.   The experimental results were made possibly by support from the UCSD FWGrid Project, NSF Research Infrastructure Grant Number EIA-0303622.

\vskip 0.2in
\bibliography{privacy}

\end{document}